
\documentclass{article}

\usepackage{microtype}
\usepackage{graphicx}
\usepackage{subcaption}
\usepackage{booktabs} 

\usepackage{hyperref}




\usepackage[accepted]{icml2026}

\usepackage{amsmath}
\usepackage{amssymb}
\usepackage{mathtools}
\usepackage{amsthm}

\usepackage[capitalize,noabbrev]{cleveref}

\theoremstyle{plain}
\newtheorem{theorem}{Theorem}[section]
\newtheorem{proposition}[theorem]{Proposition}
\newtheorem{lemma}[theorem]{Lemma}
\newtheorem{corollary}[theorem]{Corollary}
\theoremstyle{definition}

\newtheorem{assumption}[theorem]{Assumption}
\theoremstyle{remark}
\newtheorem{remark}[theorem]{Remark}

\def\rmd{\mathrm{d}}
\newcommand{\E}{\mathbb{E}}
\usepackage{multirow}
\usepackage{wrapfig}
\usepackage{placeins}


\usepackage[textsize=tiny]{todonotes}

\icmltitlerunning{Flatness-Aware Stochastic Gradient Langevin Dynamics}

\begin{document}

\twocolumn[
  \icmltitle{Flatness-Aware Stochastic Gradient Langevin Dynamics}



  \icmlsetsymbol{equal}{*}

  \begin{icmlauthorlist}
    \icmlauthor{Stefano Bruno}{equal,zzz}
    \icmlauthor{Youngsik Hwang}{equal,yyy}
    \icmlauthor{Jaehyeon An}{comp}
    \icmlauthor{Sotirios Sabanis}{sch,ttt,www}
    \icmlauthor{Dong-Young Lim{\textsuperscript{\dag}}}{yyy,comp}
  \end{icmlauthorlist}
  \icmlaffiliation{zzz}{UNIST InnoCORE AI-Space Solar Initiative, Ulsan National Institute of Science and Technology (UNIST), Ulsan, 44919, Republic of Korea}
  \icmlaffiliation{yyy}{Artificial Intelligence Graduate School, Ulsan National Institute of Science and Technology (UNIST), Ulsan, 44919, Republic of Korea}
  \icmlaffiliation{comp}{Department of Industrial Engineering, Ulsan National Institute of Science and Technology (UNIST), Ulsan, 44919, Republic of Korea}
  \icmlaffiliation{sch}{School of Mathematics, University of Edinburgh, Edinburgh, United Kingdom}
  \icmlaffiliation{ttt}{Department of Mathematics, National Technical University of Athens, Athens, Greece}
  \icmlaffiliation{www}{Archimedes, Athena Research and Innovation Centre, Marousi, Greece}

  \icmlcorrespondingauthor{Dong-Young Lim}{dlim@unist.ac.kr}

  \icmlkeywords{Machine Learning, ICML}

  \vskip 0.3in
]


\printAffiliationsAndNotice{\icmlEqualContribution}  

\begin{abstract}

Flatness of the loss landscape has been widely studied as an important perspective for understanding the behavior and generalization of deep learning algorithms. Motivated by this view, we propose Flatness-Aware Stochastic Gradient Langevin Dynamics (fSGLD), a first-order optimization method that biases learning  its dynamics toward flat basins while retaining the computational and memory efficiency of SGD and SGLD. We provide a non-asymptotic theoretical analysis showing that fSGLD targets a flatness-biased Gibbs distribution under a theoretically prescribed coupling between the noise scale $\sigma$ and the inverse temperature $\beta$, together with explicit excess risk guarantees. We empirically evaluate fSGLD across standard optimizer benchmarks, Bayesian image classification, uncertainty quantification, and out-of-distribution detection, demonstrating consistently strong performance and reliable uncertainty estimates. Additional experiments confirm the effectiveness of the theoretically prescribed $\beta$–$\sigma$ coupling compared to decoupled choices. The code for all the experiments is available at \url{https://github.com/youngsikhwang/Flatness-aware-SGLD}.

\end{abstract}
\section{Introduction}

A central challenge in deep learning is ensuring that models generalize effectively to unseen data. In high-dimensional and highly non-convex loss landscapes, stochastic optimizers often encounter numerous local minima, each exhibiting significantly different generalization capabilities~\cite{he2019asymmetric, jiang2019fantastic, damian2021label}. Among various attempts to characterize ``good'' solutions, the \textit{flat minima hypothesis}~\cite{hochreiter1997flat}, which associates low-curvature regions of the loss landscape with superior generalization, has emerged as a prominent research direction. This perspective has motivated a large body of flatness-aware optimizers, most notably Entropy-SGD~\cite{chaudhari2017entropysgd}, Sharpness-Aware Minimization (SAM)~\cite{foret2021sharpnessaware}, and their variants~\cite{xie2024improving,li2024friendly, pmlr-v235-tahmasebi24b, luo2024explicit,CHEN2024106325,KANG2025113967,WEI2025107205,Liu_2022_CVPR,NEURIPS2022_9b79416c,NEURIPS2022_948b1c9d,li2024tilted}. 

Despite their empirical success, SAM and related methods are inherently local in nature. By exploiting geometric information only within a restricted vicinity of the current iterate, they can struggle to escape sharp basins in multi-modal landscapes. Accordingly, theoretical guarantees for such methods are largely limited to local convergence properties~\cite{andriushchenko2022towards,bartlett_SAM,NEURIPS2023_5305b789,pmlr-v235-yu24q,NEURIPS2024_17b08a9d,oikonomou2025sharpnessaware, zhang2023noise, li2024revisiting} except for a notable exception~\cite{gatmiry2024simplicity}. 

In contrast, sampling-based optimizers derived from Langevin dynamics provide a theoretically grounded mechanism for global exploration of the parameter space. Formally, consider the overdamped Langevin dynamics governed by the stochastic differential equation (SDE):
\begin{equation} \label{Langevin_SDE_equation}
    \rmd Y_t = -\nabla u(Y_t) \rmd t + \sqrt{2\beta^{-1}}\rmd B_t,
\end{equation}
which admits a unique invariant (Gibbs) measure $\pi_\beta^{\text{SGLD}} (\theta) \propto \exp(-\beta u(\theta))$, where $\beta>0$ is the inverse temperature and $(B_t)_{t\geq 0}$ is a $d$-dimensional Brownian motion. As $\beta$ increases, this measure $\pi_\beta^{\text{SGLD}}(\theta)$ concentrates on the global minimizers of $u$, establishing a direct link between Langevin dynamics and global optimization. Building on this property, Stochastic Gradient Langevin Dynamics (SGLD) \cite{welling2011bayesian, pmlr-v65-raginsky17a} applies an Euler–Maruyama discretization of the Langevin SDE, replacing the exact gradient $\nabla u$ with a stochastic gradient. SGLD has attracted considerable attention as an optimization algorithm for nonconvex problems, and under mild regularity conditions a series of works has established non-asymptotic global convergence guarantees \cite{pmlr-v65-raginsky17a,xu2018global,MajkaMijatovicSzpruch2020,chau2021stochastic,zhang2023nonasymptotic}. However, the invariant measure $\pi_\beta^{\text{SGLD}}$ targeted by SGLD is purely objective-driven and is largely indifferent to landscape geometry. This creates a critical gap: the lack of a systematic framework that simultaneously offers global exploration capabilities and an inductive bias toward low-curvature regions.

To the best of our knowledge, only a few studies have attempted to bridge this gap by biasing MCMC sampling toward flat basins. For example, Entropy-MCMC~\cite{li2024entropymcmc} leverages local entropy to smooth the posterior and bias sampling toward low-curvature regions. While this represents an important step forward, Entropy-MCMC augments the sampling state space with auxiliary variables to approximate the flat posterior, introducing additional algorithmic complexity. Moreover, its theoretical analysis is primarily established under strongly convex assumptions.


In this work, we introduce Flatness-Aware Stochastic Gradient Langevin Dynamics (fSGLD), a Langevin-based algorithm that enables global exploration of the parameter space and induces an implicit bias toward flat regions of the loss landscape without additional computational or memory overhead. fSGLD is driven by the interplay between two sources of noise: Langevin noise for global exploration, controlled by the inverse temperature $\beta$; and perturbation noise for identifying flat basins via perturbed gradient evaluations, controlled by the perturbation scale $\sigma$. Our key theoretical finding is that, when the inverse temperature $\beta$ and the perturbation scale $\sigma$ are properly coupled, the invariant measure of fSGLD is close to a flatness-biased Gibbs distribution (specifically, one induced by a Hessian-trace regularized objective) that assigns higher probability mass to low-curvature regions of the loss landscape. Moreover, we establish non-asymptotic estimates in Wasserstein distance and an explicit excess-risk bound for the associated Hessian-trace regularized objective.  Importantly, this mechanism requires no explicit second-order computations, additional gradient evaluations, or auxiliary state variables. That is, fSGLD matches the computational and memory cost of SGD and SGLD. We empirically evaluate fSGLD in Bayesian inference, uncertainty quantification and out-of-distribution detection, as well as standard optimization settings. Across diverse experiments, fSGLD consistently achieves superior or comparable performance without incurring any additional computational or memory cost. Moreover, in line with our theoretical analysis, we empirically confirm the effect of the proposed $\beta$-$\sigma$ coupling, and observe that fSGLD tends to prefer flat minima.

\section{Problem Setting and FSGLD Algorithm}\label{preliminaries_section}

\textbf{Notation.}  Let $(\Omega, \mathcal{F}, \mathbb{P})$  be a fixed
	probability space.  We denote the probability law of a random variable $Z$ by $\mathcal{L}(Z)$.  Fix integers $d,m \ge 1$. Let $I_d$ be the identity matrix of dimension $d$. The Euclidean scalar product is denoted by $\langle \cdot, \cdot \rangle $, with $| \cdot |$ standing for the corresponding norm.  Let $f: \mathbb{R}^{d } \rightarrow \mathbb{R}$ be a continuously differentiable function, and we denote its gradient by $\nabla f$. For any integer $q \ge 1$, let $\mathcal{P}(\mathbb{R}^q)$ be the set of probability measures on $\mathcal{B}(\mathbb{R}^q)$. For $\mu$, $\nu \in \mathcal{P}(\mathbb{R}^d)$, let $\mathcal{A}(\mu, \nu)$ denote the set of probability measures $\varrho$ on $\mathcal{B}(\mathbb{R}^{2d })$ such that its respective marginals are $\mu$ and $\nu$. For any $\mu$ and $\nu \in \mathcal{P}(\mathbb{R}^{d })$, the Wasserstein distance of order $p \ge 1$ is defined as
	\begin{equation*} 
		W_p(\mu, \nu)= \left(  \inf_{\varrho \in \mathcal{A}(\mu, \nu)} \int_{\mathbb{R}^d}  \int_{\mathbb{R}^d}  |x - z|^p \ \text{d} \varrho(x,z) \right)^{\frac{1}{p}}.
	\end{equation*}

\subsection{Curvature-Regularized Objective for Flat Minima}

We consider the following nonconvex stochastic optimization problem:
\begin{equation} \label{original_objective_definition}
    \min_{\theta \in \mathbb{R}^d} u(\theta):= \min_{\theta \in \mathbb{R}^d} \mathbb{E} \bigl[ U(\theta, X)\bigr],
\end{equation}
where $u : \mathbb{R}^d \rightarrow \mathbb{R}$ is a four times continuously differentiable function,  $U: \mathbb{R}^d \times \mathbb{R}^m \rightarrow \mathbb{R} $  is a measurable function satisfying $\E[| U(\theta,X) |] < \infty $ for all $\theta \in \mathbb{R}^d$, and $X$ is a random variable with probability law $\mathcal{L}(X)$.  In practice, the gradient $\nabla u$ is usually unknown and one only has access to its unbiased estimate, i.e. $\nabla u(\theta)  = \E [\nabla  U(\theta, X)]$.


The flat minima hypothesis~\cite{hochreiter1997flat} suggests that generalization is closely linked to the local geometry of the loss landscape. Among various notions of  curvature, the trace of the Hessian provides a natural measure of average curvature and has been widely adopted to characterize flat minima in both empirical and theoretical studies~\cite{ahn2023escape, gatmiry2023inductive, liu2023same, wen2023sharpness, wang2021eliminating, damian2021label, zhang2023noise, li2024revisiting}. Following these works, we adopt the trace of the Hessian as a curvature-based measure of flatness and define the following \emph{Hessian-trace regularized objective}:
\begin{equation}\label{flat_loss}
   v(\theta)
   := u(\theta) + \frac{\sigma^{2}}{2}\,\mathrm{tr}\!\left(H(\theta)\right),
\end{equation}
where $\mathrm{tr}(H(\theta))$ is the trace of the Hessian of $u$ evaluated at $\theta$, and $\sigma>0$ controls the strength of the Hessian-trace regularization. The global minimizers of $v$ represent a trade-off between low loss and low curvature. We refer to these minimizers as the \emph{global flat minima},
i.e., $\arg\min_{\theta\in\mathbb{R}^d} v(\theta)$. 

However, formulations that explicitly incorporate such second-order information are often computationally expensive in high-dimensional settings. Furthermore, alternative strategies that seek flat minima without explicit Hessian computation also incur extra computational or memory overhead. For instance, SAM~\cite{foret2021sharpnessaware} relies on a min-max formulation that requires double gradient computations in each iteration. On the other hand, Entropy-SGD and Entropy-MCMC~\cite{chaudhari2017entropysgd, li2024entropymcmc} augment the parameter space with auxiliary variables, resulting in double memory costs and increased state-space complexity.


\subsection{fSGLD Algorithm}

To efficiently seek the global flat minima defined in \eqref{flat_loss} without additional gradient computation or memory costs, we propose the fSGLD algorithm. Formally, let $\theta_0$ be an $\mathbb{R}^d$-valued random variable representing the initial value, $(X_{k})_{k \in \mathbb{N}}$ be an i.i.d sequence of data, $(\epsilon_{k})_{k \in \mathbb{N}}$ be i.i.d copies of the Gaussian perturbation $\epsilon\sim \mathcal N(0,\sigma^2 I_d)$, and $(\xi_{k})_{k \in \mathbb{N}}$ be an independent sequence of standard $d$-dimensional Gaussian random variables. We assume that $\theta_0$, $(\epsilon_k)_{k\in \mathbb{N}}$, and $(\xi_k)_{k\in \mathbb{N}}$ are all mutually independent. In fSGLD,  the perturbation scale is fully determined by the inverse temperature $\beta>0$:
\begin{equation}
\sigma := \beta^{-\frac{1+\eta}{4}}, \qquad \text{for a fixed } \eta \in (0,1). \label{eq:beta_sigma_coupling}
\end{equation}
Then, the fSGLD algorithm is given by
\begin{equation} \label{eq:fSGLD}
\begin{split}
  \theta_{k+1}^{\text{fSGLD}}
  & = \theta_k^{\text{fSGLD}}
    - \lambda  \nabla_{\theta}  U(\theta^{\text{fSGLD}}_k + \epsilon_{k+1}, X_{k+1}) \\ & \qquad \qquad \quad + \sqrt{2 \lambda \beta^{-1}} \xi_{k+1},  \qquad k \in \mathbb{N},
\end{split}
\end{equation}
with $\theta_0^{\text{fSGLD}} := \theta_0$ where $\lambda>0$ is the step size. We highlight several key properties of the proposed fSGLD algorithm. \textit{First}, fSGLD employs perturbed stochastic gradients $\nabla_{\theta} U(\theta^{\text{fSGLD}}_k + \epsilon_{k+1}, X_{k+1})$ instead of unbiased stochastic gradient evaluated at the current iterate. \textit{Second}, fSGLD requires only a single gradient computation per iteration and updates only the $d$-dimensional parameter vector without introducing auxiliary variables. \textit{Third}, fSGLD features a structured interaction between two sources of randomness: the parameter perturbation noise scaled by $\sigma$ and the Langevin noise scaled by $\beta^{-1}$. The perturbation scale $\sigma$ is explicitly coupled with the inverse temperature~$\beta$
through~\eqref{eq:beta_sigma_coupling}.
As a result, fSGLD requires tuning only a single parameter~$\beta$, as in standard SGLD. The parameter $\eta \in (0,1)$ is fixed throughout, set to $\eta = 0.1$ in all experiments, and is introduced solely for technical reasons in the convergence analysis. \textit{Lastly}, fSGLD provably targets a Gibbs distribution corresponding to the Hessian-trace regularized objective~\eqref{flat_loss}, which will be rigorously established in Section~\ref{Non_Asymptotic_Wasserstein_Analysis_subsection}. This behavior is fundamentally different from that of standard SGLD, whose Gibbs measure $\pi_\beta^{\text{SGLD}}$ concentrates on the global minimizers of the original objective $u$.

\subsection{Why Does fSGLD Induce a Flatness Bias?}\label{subsec:randomized_smoothing}

This subsection provides an intuitive explanation of how the fSGLD algorithm induces a flatness bias without additional computation, by identifying the intermediate objective implicitly optimized by fSGLD. 

The key observation is that the gradient term of fSGLD in~\eqref{eq:fSGLD} corresponds to an unbiased stochastic gradient
of a randomized-smoothing surrogate objective.
Specifically, let $\epsilon \sim \mathcal N(0,\sigma^2 I_d)$ be a Gaussian perturbation
independent of the data variable~$X$, and define
the \emph{randomized-smoothing surrogate objective}
\begin{equation}\label{surrogate_objective_definition}
   g_{\epsilon}(\theta)
   := \mathbb{E}\bigl[ u(\theta+\epsilon) \bigr]
   =  \mathbb{E}\bigl[ \mathbb{E}_X \bigl[ U(\theta+\epsilon,X) \bigr] \bigr].
\end{equation}
where the outer expectation is taken with respect to the noise $\epsilon$ and $\E_X[\cdot]$ denotes the conditional expectation given $\epsilon$. Then, the gradient of $g_\epsilon$ satisfies
\begin{equation} \label{gradient_J_theta_stochastic_gradient}
    \begin{split}
      \nabla g_{\epsilon}(\theta)
       = \mathbb{E} \bigl[\mathbb{E}_X \bigl[ \nabla_{\theta} U(\theta + \epsilon, X) \bigr]\bigr],
    \end{split}
\end{equation}
which shows that the perturbed stochastic gradient
$\nabla_\theta U(\theta+\epsilon, X)$
used in fSGLD is an unbiased estimator of $\nabla g_\epsilon(\theta)$.
Consequently, fSGLD can be interpreted as performing standard SGLD
on the surrogate objective $g_\epsilon$ rather than directly on the original objective~$u$.

This surrogate serves as a tractable way to access curvature information. By Taylor's theorem, we have
\begin{equation*}
\begin{split}
    u(\theta+\epsilon)
    & = u(\theta)
      + \nabla u(\theta)^{\!\top}\epsilon
      + \frac{1}{2} \epsilon^{\!\top} H(\theta) \epsilon
       \\ & \quad  \ + \frac{1}{6} \sum_{i,j,k=1}^d   \frac{\partial^3  u}{\partial \theta_i \partial \theta_j \partial \theta_k }  (\theta) \  \epsilon_i \epsilon_j \epsilon_k
      + \mathcal{R}(\theta,\epsilon),
      \end{split}
\end{equation*}
where $\mathcal{R}(\theta,\epsilon)$ is the remainder term.
Taking the expectation over $\epsilon \sim \mathcal N(0,\sigma^2 I_d)$ yields the key connection:
\begin{align}
 g_\epsilon(\theta)  
   &= u(\theta)
      + \tfrac{\sigma^2}{2} \text{tr} \left( H(\theta)\right)
      +  \E[\mathcal R(\theta,\epsilon)]\notag \\ 
    &= v(\theta) +  \E[\mathcal R(\theta,\epsilon)]. \label{eq:remainder} 
\end{align}

This derivation reveals that fSGLD implicitly minimizes the Hessian-trace regularized objective $v(\theta)$ by optimizing the randomized-smoothing surrogate objective $g_\epsilon$, provided that 
the expected remainder term $\mathbb{E}[\mathcal{R}(\theta,\epsilon)]$ is negligible. 

However, in general high-dimensional nonconvex settings, the higher-order remainder term $\mathbb{E}[\mathcal R(\theta,\epsilon)]$ need not be negligible a priori,
and can potentially corrupt the desired flatness bias. The coupling between $\sigma$ and $\beta$ in \eqref{eq:beta_sigma_coupling} ensures that the contribution of these higher-order terms is controlled. By linking the magnitude of the perturbation to the temperature of the Langevin dynamics, the surrogate objective $g_\epsilon$ becomes an effective intermediate mechanism through which fSGLD recovers Hessian-trace regularization without explicitly computing second-order information.

Finally, fSGLD admits an alternative interpretation as standard SGLD combined with random weight perturbations (RWP). While this viewpoint is useful for intuition,
there are important distinctions. In the implementation of RWP, the perturbation scale is treated as an additional tuning parameter. In contrast, fSGLD determines the perturbation scale entirely through the inverse temperature~$\beta$, and therefore introduces no extra hyperparameter beyond those of SGLD. Empirically, perturbation-based methods have been used to improve generalization, but rigorous guarantees for RWP remain limited, except for~\cite{ahn2023escape}, which studies the algorithmic complexity of reaching flat local minima. In this sense, our results complement this line of work by providing a global, non-asymptotic perspective on how RWP shapes optimization landscapes in Langevin-based dynamics.

\section{Theoretical Results}\label{Non_Asymptotic_Wasserstein_Analysis_subsection}

In this section, we present the main theoretical results that rigorously validate the fSGLD algorithm. We begin by stating the formal assumptions for our analysis. We then prove that the invariant measure of fSGLD is close to an ideal target distribution over flat minima when $\beta$ and $\sigma$ are properly coupled. Building on this, we obtain non-asymptotic $W_1$ error bounds for the fSGLD iterates and excess risk bounds for the Hessian-trace regularized objective. 

\subsection{Assumptions}\label{subsec:assumptions}
Our assumptions impose standard conditions on: (i) regularity of the original objective $u$, unbiasedness of the stochastic gradient, and moments of the initial parameters; (ii) a Lipschitz condition on the stochastic gradient; and (iii) a dissipativity condition to ensure the stability of the Langevin dynamics.

\begin{assumption}[Regularity of $u$, moments of the initial parameter] \label{independence_assumption}
We assume that $u: \mathbb{R}^d \rightarrow \mathbb{R}$ is a four times continuously differentiable function, that the stochastic gradient $\nabla_{\theta} U(\theta, \cdot)$ is unbiased, i.e., $ \nabla u(\theta)= \E [\nabla_{\theta}  U(\theta, X)]$, and that the initial parameter $\theta_0$ has a finite fourth moment, i.e. $ \E [ |\theta_0|^4 ] < \infty$. Furthermore, there exists a compact set $\mathcal{K} \subseteq \mathbb{R}^d$ such that 
\begin{equation} \label{assumption_large_values_remainder}
    \sup_{\theta \notin \mathcal{K}} \E[\mathcal R(\theta,\epsilon)] \le  C_{\vartriangle} \sigma^4 d^2,
\end{equation}
for some $ C_{\vartriangle} < \infty$ independent of $\sigma \in (0,1)$.  
\end{assumption}




The last condition \eqref{assumption_large_values_remainder} of Assumption \ref{independence_assumption} requires that, for large values of $\theta$, the contribution of the fourth-order derivatives of $u$ in the expected remainder term is absorbed by the fourth order moment of the Gaussian perturbation $\epsilon$. This condition is \textit{weaker} than global uniform control ($\sup_{\theta \in \mathbb{R}^d} \E [\mathcal{R}(\theta,\epsilon)]\le C_{\vartriangle} \sigma^4 d^2 $) or a bound that holds outside every compact set ($\forall \ \text{compact} \ \mathcal{K}, \ \sup_{\theta \notin \mathcal{K}} \E [\mathcal{R}(\theta, \epsilon)] \le C_{\vartriangle} \sigma^4 d^2$). Because the notion of flatness already depends on second-order information of $u$ (see, e.g., \eqref{flat_loss}), it is natural for our theoretical analysis to assume additional higher-order regularity. Such higher-order assumptions are common in implicit regularization analyses \cite{arora2022understanding,wen2023how}, in complexity results for escaping from sharp minima \cite{ahn2023escape}, and in recent studies of flat-minima optimization \cite{zhang2025zerothorder}. We stress that these higher-order requirements are purely analytical tools: the fSGLD algorithm in \eqref{eq:fSGLD} itself makes use only of first-order gradient information.

\begin{assumption}[Lipschitzness] \label{Lipschitz_assumption_full_gradient}
There exists $\varphi : \mathbb{R}^{m} \rightarrow [1, \infty)$ with $\E [|(1+ | X_0|) \varphi(X_0)|^4] < \infty $, and $L_1,L_2 >0$ such that, for all $x$, $x' \in \mathbb{R}^m$ and $\theta$, $\theta' \in \mathbb{R}^d$,
    \begin{equation*}
    \begin{split}
        | \nabla_{\theta} U(\theta, x)  -  \nabla_{\theta'} U(\theta', x)| & \le L_1 \varphi (x) |\theta - \theta'|,
        \end{split}
    \end{equation*}
    and 
     \begin{equation*}
    \begin{split}
    & | \nabla_{\theta} U(\theta, x)  -  \nabla_{\theta} U(\theta, x')| \\ & \le L_2 (\varphi (x) + \varphi(x') )
   (1+ | \theta |) |x - x'|.
        \end{split}
    \end{equation*}
\end{assumption}

\begin{assumption}[Dissipativity] \label{dissipativity_assumption_full_gradient}
There exist a measurable function (symmetric matrix-valued) function $\bar{A}: \mathbb{R}^m \rightarrow \mathbb{R}^{d \times d} $ and a measurable function $\hat{b}: \mathbb{R}^m \rightarrow \mathbb{R}$ such that for any $x \in \mathbb{R}^m$, $z \in \mathbb{R}^d$, $ \langle z, \bar{A}(x)z  \rangle \ge 0$ and for all $\theta \in \mathbb{R}^d$ and $x \in \mathbb{R}^m$, 
	\begin{equation*}
	\langle   \nabla_{\theta} U(\theta, x), \theta   \rangle \ge \langle \theta , \bar{A}(x) \theta \rangle   - \hat{b}(x).
	\end{equation*}
The smallest eigenvalue of $\E [\bar{A}(X_0)]$ is a positive real number  $\bar{a}>0$ and $ \E [ \hat{b}(X_0))]= \bar{b} >0$.
\end{assumption}

Note that the dissipativity condition is a standard requirement for analysis of SGLD in the literature; e.g., see \cite{pmlr-v65-raginsky17a, xu2018global, deng2020non, deng2020contour, deng2022interacting, futami2023time}. Our version in Assumption~\ref{dissipativity_assumption_full_gradient}, together with the Lipschitz condition  in Assumption~\ref{Lipschitz_assumption_full_gradient}, follows the more general formulation of \cite{zhang2023nonasymptotic}, which allows for dependency on the data $X$.  Moreover, several direct consequences of these assumptions, which are useful for our subsequent analysis, are detailed in Appendix~\ref{subsection_appendix_proofs_remaining_results_assumptions}. Remarks \ref{linear_growth_remark} and \ref{dissipativity_fSGLD_remark} shows that introducing the Gaussian perturbation $\epsilon$ modifies the dimension dependence of the relevant constants associated with the overdamped Langevin dynamics \eqref{Langevin_SDE_equation}.
In particular, the linear-growth constant for $\nabla g_{\epsilon}$, which is $O(1)$ for $\nabla u$, becomes $O(1+\sigma \sqrt{d})$, while the corresponding offset in the dissipativity condition changes from $O(1)$ to $O(1+\sigma^2 d)$.

\subsection{Target Gibbs Measure for Global Flat Minima} \label{section_target_Gibbs_measure_global_target_minima}


At the core of our analysis lies an ideal Gibbs distribution associated with the Hessian-trace regularized objective $v$, which biases toward flat regions of the loss landscape. We denote this target distribution by $\pi^{\star}_{\beta,\sigma}$:
\begin{equation}
\pi^{\star}_{\beta,\sigma}(\rmd \theta) \propto \exp(-\beta v(\theta)) \ \rmd\theta.
\end{equation}
Our central question is whether the law of the fSGLD iterates is sufficiently close to this target Gibbs measure $\pi^{\star}_{\beta,\sigma}$, and if so, how this discrepancy can be quantified. To this end, we first analyze the invariant measure induced by the fSGLD dynamics, denoted by $\pi_\beta^{\text{fSGLD}}$, and investigate its relationship to $\pi^{\star}_{\beta,\sigma}$. For these two Gibbs measures to align, the expected remainder term $\mathbb{E}[\mathcal{R}(\theta,\epsilon)]$ in \eqref{eq:remainder} must be negligible. In high-dimensional nonconvex problems, this is a non-trivial condition, as higher-order terms can be substantial and unpredictable, potentially corrupting the intended regularization effect.

Under the coupling condition in \eqref{eq:beta_sigma_coupling}, we show that the invariant measure of fSGLD, $\pi_\beta^{\text{fSGLD}}$, is close to the target Gibbs measure $\pi_{\beta, \sigma}^{\star}$ concentrating on global flat minima.
\begin{proposition} \label{Wasserstein_KL_convergence_two_measures_theorem}
    Let Assumptions \ref{independence_assumption}, \ref{Lipschitz_assumption_full_gradient}, and \ref{dissipativity_assumption_full_gradient} hold, and let $\sigma = \beta^{-\frac{1+\eta}{4}}$  for $\eta \in (0,1)$.  Then \begin{equation*}
    \begin{split}
       W_2( \pi_{\beta}^{\text{fSGLD}}, \pi_{\beta,\sigma}^{\star})   \le \underline{D}, 
    \end{split}
\end{equation*}
where $\underline{D}=   O(  \beta^{- \frac{\eta}{4}} \sqrt{d} + \beta^{-\frac{\eta}{2}} d  + \beta^{- \frac{1+\eta}{2}} d^2)$, whose explicit expression is given in \eqref{bound_square_Wasserstein_order_two_proof_proposition}. 
\end{proposition}
The proof of Proposition \ref{Wasserstein_KL_convergence_two_measures_theorem} is postponed to Appendix \ref{proofs_main_results_appendix_proposition}. Due to Proposition \ref{Wasserstein_KL_convergence_two_measures_theorem}, controlling the discrepancy term $\underline{D}$ between the invariant measure of fSGLD, $\pi_\beta^{\text{fSGLD}}$ and the target Gibbs measure $\pi_{\beta, \sigma}^{\star}$ requires increasing $\beta$. Since $\sigma$ is coupled to $\beta$, larger values of $\beta$ simultaneously reduce the perturbation scale and hence the strength of the flatness bias. This highlights a trade-off between approximation accuracy and flatness induction. We emphasize that fSGLD is not designed as a simulated annealing scheme~\cite{pelletier1998weak} and does not seek the limit $\beta\to\infty$. Rather, it operates in a finite-$\beta$ regime, where the prescribed coupling balances these two effects and yields a flatness-biased Gibbs distribution $\pi_{\beta,\sigma}^{\star}$. We refer to Appendix~\ref{Appendix_Theoretical_construction_for_fSGLD_algorithm} for the formal relationship between two Gibbs measures $\pi^{\text{fSGLD}}_\beta$ and $\pi^{\star}_{\beta, \sigma}$ and for  the intuition behind the coupling condition \eqref{eq:beta_sigma_coupling}.

\subsection{Non-asymptotic error bounds for fSGLD} \label{section_Convergence_Guarantees_for_fSGLD}

Our first main result provides non-asymptotic error bounds on the Wasserstein-1 distance between the law of the fSGLD iterates and the target Gibbs measure showing that the overall error can be made arbitrarily small by appropriately choosing the fSGLD algorithm hyperparameters, i.e. $\lambda, k$, and $\beta  $. All proofs for the results in this section are provided in Appendix~\ref{proofs_main_results_appendix}. 
\begin{theorem} \label{main_result_Wasserstein_distance_order_one_target_measure}
  Let Assumptions \ref{independence_assumption}, \ref{Lipschitz_assumption_full_gradient}, and \ref{dissipativity_assumption_full_gradient} hold, and let $\sigma = \beta^{-\frac{1+\eta}{4}}$  for $\eta \in (0,1)$. Then, there exist constants $\dot{c}, D_1, D_2, D_3, \underline{D}>0$ such that, for every $\beta >0$,  for  $0< \lambda \le \lambda_{\text{max}}$ with $\lambda_{\text{max}}$ given in \eqref{expression_lambda_max}, and $ k \in \mathbb{N}$,
  \begin{equation} \label{equation_main_result_Wasserstein_distance_order_one_target_measure}
\begin{split}
   W_1(\mathcal{L}(\theta_k^{\text{fSGLD}}), \pi_{\beta,\sigma}^{\star}) 
   & \le D_1 e^{- \dot{c} \lambda k/2 } (1+ \E[|\theta_{0}|^4])
  \\ & \qquad + (D_2 + D_3) \sqrt{\lambda}  + \underline{D},
 \end{split}
 \end{equation}
  where $\dot{c}$ is given in Lemma \ref{contraction_constants_explicit_forms_lemma}, and 
  \begin{equation*}
      \begin{split}
         D_1 & =  O ( e^{D_{\star}(\beta +d + d \beta^{(1-\eta)/2} + d^2 \beta^{-\eta} +1 )} (1 + (1-e^{- \dot{c}})^{-1})  ), 
         \\ D_2  & = O \left( 1 + \sqrt{d/\beta^{(1+\eta)/2}} \right), 
         \\  D_3 & =  O ( e^{D_{\star}(\beta +d + d \beta^{(1-\eta)/2} + d^2 \beta^{-\eta} +1 )} (1 + (1-e^{- \dot{c}})^{-1})  ), 
      \end{split}
  \end{equation*}
  with $D_{\star}>0$ independent of $d$, $\beta$, $k$. 
  The explicit expressions of $D_1$, $D_2$, $D_3$ are given in \eqref{constants_final_upper_bound_Wasserstein_1}, and $\underline{D}$ is given in \eqref{bound_square_Wasserstein_order_two_proof_proposition}.  Furthermore, let $\beta_{\bar{\delta}} $, $\lambda_{\bar{\delta}}$, $k_{\bar{\delta}}$ be as in \eqref{definition_beta_bar_W_1}, \eqref{definition_lambda_bar_W_1}, and \eqref{definition_k_bar_W_1} respectively.  For any $\bar{\delta}>0$,  if we choose $\beta \ge \beta_{\bar{\delta}} $,  $\lambda \le \lambda_{\bar{\delta}}$, and $k \ge k_{\bar{\delta}}$, then
   \begin{equation*}
\begin{split}
   W_1(\mathcal{L}(\theta_k^{\text{fSGLD}}), \pi_{\beta,\sigma}^{\star})  \le \bar{\delta}.
 \end{split}
 \end{equation*}
\end{theorem}
The bound \eqref{equation_main_result_Wasserstein_distance_order_one_target_measure} in Theorem \ref{equation_main_result_Wasserstein_distance_order_one_target_measure} consists of three components. The term $D_1e^{-\dot{c}\lambda k/2}$ captures exponential mixing of the underlying overdamped Langevin diffusion, with rate $O(\lambda k)$  leading to a step-size exponent 1/2;  the term $(D_2+D_3)\sqrt{\lambda}$ corresponds to the Euler–Maruyama discretization error, which is of order $O(\lambda^{1/2})$, and the term $\underline{D}$,  which does not appear in standard SGLD analyses,  is the bound for the distance between the invariant measure of fSGLD and the flatness-aware target, and depends on $\beta$, $d$, and $\eta$ (Proposition \ref{Wasserstein_KL_convergence_two_measures_theorem}). The fSGLD convergence rate is governed by the discretization error $O(\lambda^{1/2})$, matching the best‑known non-asymptotic rate for the discretization error in $W_1$ under the comparable Assumptions \ref{Lipschitz_assumption_full_gradient} and \ref{dissipativity_assumption_full_gradient}, see, e.g., Theorem 2.4 in \cite{zhang2023nonasymptotic}.
A high-level overview of the proof strategy is provided at the beginning of Appendix~\ref{proofs_non-asymptotic Wasserstein_analysis}.  The corresponding Wasserstein-2 bound for Theorem \ref{main_result_Wasserstein_distance_order_one_target_measure} is given in Corollary \ref{main_result_Wasserstein_distance_order_two_target_measure}.
    \begin{remark} \label{remark_convergence_analysis_main_theorem}
Theorem \ref{main_result_Wasserstein_distance_order_one_target_measure} recovers the best known non-asymptotic rate for the discretization error in $W_1$ for SGLD under comparable assumptions, see e.g. Theorem 2.4 in \cite{zhang2023nonasymptotic}. Unfortunately, the constants $D_1$ and $D_3$ have
exponential dependence on $d$ and $\beta$ due to the coupling arguments of \cite{eberle2019quantitative}, which are standard in the SGLD literature \cite{chau2021stochastic,zhang2023nonasymptotic,deng2020non, deng2020contour,TUSLA_Lovas,lim2025langevin,neufeld2025non}, and therefore this behavior does not reflect a limitation of our approach. In this setting, any improvement in the dimension dependence would necessitate substantially strengthening the contraction-rate estimates in Theorem 2.2 of \cite{eberle2019quantitative}.
 \end{remark}

\begin{remark}
Non-asymptotic error bounds for SGLD targeting the Gibbs measure associated with the original objective $u$ are well studied in the literature \cite{pmlr-v65-raginsky17a, xu2018global,zhang2023nonasymptotic, Lim_THEOPOULA, neufeld2025non}. In contrast, our analysis addresses a fundamentally different problem: we explicitly define flat solutions through the Hessian-trace regularized objective $v$ and derive non-asymptotic bounds on the distance between the law of the fSGLD iterates and the corresponding flatness-aware Gibbs measure $\pi^\star_{\beta,\sigma}$. This requires controlling the discrepancy between the invariant measure of fSGLD and the ideal Gibbs measure $\pi^{\star}_{\beta,\sigma}$, which introduces analytical challenges that do not arise in standard SGLD analyses. 
\end{remark}

 While the previous results provides a theoretical analysis from a sampling perspective, our final result analyzes fSGLD as an optimizer. The following theorem provides a non-asymptotic bound on the expected excess risk with respect to the Hessian-trace regularized objective $v$. 
\begin{theorem} \label{excess_risk_corollary}
     Let Assumption \ref{independence_assumption}, \ref{Lipschitz_assumption_full_gradient} and \ref{dissipativity_assumption_full_gradient} hold, and let $\sigma = \beta^{-\frac{1+\eta}{4}}$  for $\eta \in (0,1)$. Then, there exist constants $\dot{c}$, $D_1^{\Diamond}$, $D_2^{\Diamond}$, $D_3^{\Diamond} >0$ such that, for every $\beta>0$, $0< \lambda \le \lambda_{\text{max}}$  with $\lambda_{\text{max}}$ given in \eqref{expression_lambda_max}, $k \in \mathbb{N}$,
     \begin{equation} \label{equation_main_result_excess_risk_corollary}
     \begin{split}
       & \E [v(\theta_k^{\text{fSGLD}})] - \inf_{\theta \in \mathbb{R}^d} v(\theta) 
        \\ & \le D_1^{\Diamond} e^{-  \dot{c} \lambda k/4} +  D_2^{\Diamond} \lambda^{1/4} +  D_3^{\Diamond}, 
         \end{split}
     \end{equation}
        where $\dot{c}$ is given in Lemma \ref{contraction_constants_explicit_forms_lemma}, and
        \begin{equation*}
            \begin{split}
            D_1^{\Diamond} & =      O ( e^{D_{\star}(\beta +d + d \beta^{(1-\eta)/2} + d^2 \beta^{-\eta} +1 )} (1 + (1-e^{- \frac{\dot{c}}{2}})^{-1}) ), 
            \\  D_2^{\Diamond} & =   O ( e^{D_{\star}(\beta +d + d \beta^{(1-\eta)/2} + d^2 \beta^{-\eta} +1 )} (1 + (1-e^{- \frac{\dot{c}}{2}})^{-1}) ),
            \\ D_3^{\Diamond} & = O \left(\frac{d }{\beta} \log\left( D_{\star}\left( \frac{\beta}{d}  +   \beta^{\frac{1-\eta}{2}} + 1 \right)\right) +   \frac{d^2}{\beta^{1+ \eta}} \right),
            \end{split}
        \end{equation*}
        with $D_{\star}>0$ independent of $d$, $\beta$, $k$.
        The explicit expressions of $D_1^{\Diamond}$ and $D_2^{\Diamond}$ are given in \eqref{expressions_constants_C_hash_1_2_first_bound_expected_excess_risk}, while $D_3^{\Diamond}$ is defined in \eqref{big_O_constants_excess_risk_proof}. Moreover, let $\beta_{\underline{\delta}} $, $\lambda_{\underline{\delta}}$, $k_{\underline{\delta}}$ be as in \eqref{definition_beta_underline_W_2}, \eqref{definition_lambda_underline_W_2}, and \eqref{definition_k_underline_W_2} respectively. For any $\underline{\delta}>0$,  if we choose $\beta \ge \beta_{\underline{\delta}} $,  $\lambda \le \lambda_{\underline{\delta}}$, and $k \ge k_{\underline{\delta}}$, then
           \begin{equation*} 
        \E [v(\theta_k^{\text{fSGLD}})] - \inf_{\theta \in \mathbb{R}^d} v(\theta)  \le  \underline{\delta}.
     \end{equation*}
\end{theorem}
The bound \eqref{equation_main_result_excess_risk_corollary} in Theorem \ref{excess_risk_corollary} admits a similar interpretation as the bound \eqref{equation_main_result_Wasserstein_distance_order_one_target_measure} in Theorem \ref{equation_main_result_Wasserstein_distance_order_one_target_measure}, with the rate  governed by a discretization term of order $O(\lambda^{1/4})$ due to use of the Wasserstein-2 bound in Corollary \ref{main_result_Wasserstein_distance_order_two_target_measure}, matching the best-known rate for the discretization error for excess risk bounds for SGLD under similar Assumptions \ref{Lipschitz_assumption_full_gradient} and \ref{dissipativity_assumption_full_gradient}, see e.g., Corollary 2.8 in \cite{zhang2023nonasymptotic}.
This result shows that the iterates $\theta_k^{\text{fSGLD}}$ increasingly favor global minimizers of the Hessian-trace regularized objective $v$. 

\begin{table*}[t]
\centering
\begin{minipage}[t]{0.5\textwidth}
\centering
\caption{Performance comparison on ResNet-18. Results are reported as mean$\pm$std over three different random seeds. The best result is \textbf{bold} and the second-best is \underline{underlined}. Results for all methods except fSGLD and ASAM are sourced from \citet{li2024entropymcmc}.}
\label{tab:emcmc}

\scriptsize
\setlength{\tabcolsep}{2pt}
\resizebox{\linewidth}{!}{%
\begin{tabular}{lcccc}
\toprule
 & \multicolumn{2}{c}{CIFAR10} & \multicolumn{2}{c}{CIFAR100} \\
\cmidrule(lr){2-3} \cmidrule(lr){4-5}
Method & ACC (\%) $\uparrow$ & NLL $\downarrow$ & ACC (\%) $\uparrow$ & NLL $\downarrow$ \\
\midrule
SGD & $94.87_{\pm 0.04}$ & $0.205_{\pm 0.015}$ & $76.49_{\pm 0.27}$ & $0.935_{\pm 0.021}$ \\
Entropy-SGD & $95.11_{\pm 0.09}$ & $0.184_{\pm 0.020}$ & $77.45_{\pm 0.03}$ & $0.895_{\pm 0.009}$ \\
SAM & $95.25_{\pm 0.12}$ & $0.166_{\pm 0.005}$ & $78.41_{\pm 0.22}$ & $0.876_{\pm 0.007}$ \\
ASAM & $95.34_{\pm 0.23}$ & $\underline{0.150}_{\pm 0.008}$ &
$78.26_{\pm 0.30}$ & $\underline{0.814}_{\pm 0.008}$ \\
\midrule
SGLD & $95.47_{\pm 0.11}$ & $0.167_{\pm 0.011}$ & $\underline{78.79}_{\pm 0.35}$ & $0.854_{\pm 0.031}$ \\
Entropy-SGLD & $94.46_{\pm 0.24}$ & $0.194_{\pm 0.020}$ & $77.98_{\pm 0.39}$ & $0.897_{\pm 0.027}$ \\
Entropy-MCMC & $\underline{95.69}_{\pm 0.06}$ & $0.162_{\pm 0.002}$ & $\textbf{79.16}_{\pm 0.07}$ & $0.840_{\pm 0.004}$ \\
\midrule
fSGLD & $\textbf{95.73}_{\pm 0.07}$ & $\textbf{0.144}_{\pm 0.001}$ & $78.53_{\pm 0.21}$ & $\textbf{0.810}_{\pm 0.011}$ \\
\bottomrule
\end{tabular}%
}
\end{minipage}\hfill
\begin{minipage}[t]{0.46\textwidth}
\vspace{1.06em}
\centering
\caption{OOD detection on CIFAR-SVHN. The best result is \textbf{bold} and the second-best is \underline{underlined}. Results for all methods except fSGLD and ASAM are sourced from \citet{li2024entropymcmc}.}
\label{tab:ood}
\scriptsize
\setlength{\tabcolsep}{2pt}
\resizebox{\linewidth}{!}{%
\begin{tabular}{lcccc}
\toprule
 & \multicolumn{2}{c}{CIFAR10-SVHN} & \multicolumn{2}{c}{CIFAR100-SVHN} \\
\cmidrule(lr){2-3} \cmidrule(lr){4-5}
Method & AUROC (\%) $\uparrow$ & AUPR (\%) $\uparrow$ & AUROC (\%) $\uparrow$ & AUPR (\%) $\uparrow$ \\
\midrule
SGD & $98.30$ & $99.24$ & $71.96$ & $84.08$ \\
Entropy-SGD & \underline{$98.71$} & \underline{$99.37$} & $79.15$ & $86.92$ \\
SAM & $94.23$ & $95.67$ & $74.56$ & $84.61$ \\
ASAM & $97.24$ & $98.26$ & $79.86$ & $\underline{87.93}$ \\
\midrule
SGLD & $97.66$ & $98.64$ & $72.51$ & $83.35$ \\
Entropy-SGLD & $90.07$ & $91.80$ & $71.83$ & $82.89$ \\
Entropy-MCMC & $98.15$ & $99.04$ & $\textbf{81.14}$ & $87.18$ \\
\midrule
fSGLD & \textbf{98.91} & \textbf{99.44} & \underline{$80.52$} & \textbf{88.01} \\
\bottomrule
\end{tabular}%
}
\end{minipage}
\end{table*}

\vspace{-3mm}
\section{Numerical Experiments}\label{sec:exp}
In this section, we evaluate fSGLD across a range of tasks to assess its behavior from both Bayesian sampling and optimization perspectives.  Section~\ref{subsec:emcmc} studies Bayesian image classification under posterior sampling, while Section~\ref{subsec:uq} examines predictive uncertainty estimation and out-of-distribution detection.  Section~\ref{subsec:main_results} evaluates generalization performance under noisy labels in standard optimization settings, including both training from scratch and fine-tuning.  Section~\ref{subsec:beta-sigma} investigates the effect of the proposed $\beta$--$\sigma$ coupling through targeted ablation studies, and Section~\ref{subsec:hessian} analyzes the curvature of the obtained solutions via the Hessian spectrum.  
Detailed experimental settings and hyperparameter configurations are provided in Appendix~\ref{app:main_section}.

\subsection{Bayesian Image Classification}\label{subsec:emcmc}
We first evaluate fSGLD in Bayesian image classification on CIFAR-10 and CIFAR-100 to assess its effectiveness as a posterior sampling method, following the evaluation protocol of Entropy-MCMC \cite{li2024entropymcmc}. 
We adopt the same network architecture and training setup as in prior work for a fair comparison. Following their Bayesian marginalization procedure, we construct the predictor by averaging network outputs over multiple parameter snapshots, enabling a unified inference scheme across both samplers and standard optimizers. As shown in Table~\ref{tab:emcmc}, fSGLD is consistently competitive in classification accuracy on both CIFAR-10 and CIFAR-100, while achieving the lowest NLL across the compared methods, indicating improved probabilistic prediction under Bayesian model averaging.



\subsection{Uncertainty Quantification and OOD Detection}\label{subsec:uq}
We evaluate predictive uncertainty estimation and out-of-distribution (OOD) detection. 
Using models trained on CIFAR-10 and CIFAR-100 in Section~\ref{subsec:emcmc}, we first quantify predictive uncertainty via the entropy of the Bayesian predictive distribution~\cite{malinin2018predictive}. This uncertainty score is used for OOD detection on SVHN, which is evaluated using AUROC and AUPR. Table~\ref{tab:ood} shows that fSGLD achieves strong OOD detection performance on both CIFAR-10--SVHN and CIFAR-100--SVHN, attaining the best AUROC and AUPR on CIFAR-10--SVHN and the best AUPR with the second-best AUROC on CIFAR-100--SVHN.

\subsection{Generalization on Noisy Label Datasets}\label{subsec:main_results}

\begin{table*}[h]
\centering
\caption{Performance comparison on ResNet-34 and ResNet-50. Results are reported as mean$\pm$std over five different random seeds. Within each model block, the best result is \textbf{bold} and the second-best is \underline{underlined}. WV-1/WV-5 denote Top-1/Top-5 accuracy on WebVision.
The wall-clock time per iteration (s/epoch) measured on CIFAR-10N for each model architecture.}
\label{tab:fromscratch}
\scriptsize
\begin{tabular}{llccccc}
\toprule
\textbf{Model} & \textbf{Optimizer} & CIFAR-10N & CIFAR-100N & WV-1 & WV-5 & (s/epoch) \\
\midrule
\multirow{5}{*}{ResNet-34} 
 & SGD     & \(89.31_{\pm 0.84}\) & \(58.47_{\pm 0.20}\) & \(71.87_{\pm 0.44}\) & \(89.33_{\pm 0.30}\) & 22.0 \\
 & AdamW   & \(89.25_{\pm 0.66}\) & \(56.77_{\pm 0.47}\) & \(68.69_{\pm 0.32}\) & \(87.01_{\pm 0.24}\) & 22.5 \\
 & SAM     & \(\underline{91.53}_{\pm 0.22}\) & \(59.18_{\pm 0.33}\) & \(\underline{73.49}_{\pm 0.36}\) & \(\textbf{90.32}_{\pm 0.31}\) & 41.3 \\
 & ASAM     & \(\textbf{91.73}_{\pm 0.36}\) & \(\underline{60.79}_{\pm 0.72}\) &  73.46\(_{\pm 0.24}\) & \(\underline{90.14}_{\pm 0.42}\) & 41.4 \\ 
\cmidrule(lr){2-7}
 & fSGLD   & \(91.37_{\pm 0.43}\) & \(\textbf{61.51}_{\pm 0.65}\) & \(\textbf{73.95}_{\pm 0.52}\) & \(90.03_{\pm 0.36}\) & 23.7 \\ 
\midrule
\multirow{5}{*}{ResNet-50} 
 & SGD     & \(89.41_{\pm 0.26}\) & \(57.52_{\pm 0.17}\) & \(71.11_{\pm 0.59}\) & \(88.31_{\pm 0.40}\) & 31.9 \\
 & AdamW   & \(89.26_{\pm 0.31}\) & \(57.28_{\pm 0.90}\) & \(69.92_{\pm 0.67}\) & \(87.97_{\pm 0.34}\) & 32.3 \\
 & SAM     & \(\underline{90.88}_{\pm 0.49}\) & \(59.01_{\pm 0.60}\) & \(\underline{72.52}_{\pm 0.46}\) & \(\underline{89.53}_{\pm 0.44}\) & 60.7 \\
 & ASAM     & \(\textbf{91.25}_{\pm 0.67}\) & \(\underline{60.47}_{\pm 0.90}\) & \(71.92_{\pm 0.67}\) & \(88.48_{\pm 0.53}\) & 60.9 \\ 
\cmidrule(lr){2-7}
 & fSGLD   & \(90.86_{\pm 0.34}\) & \(\textbf{61.26}_{\pm 1.08}\) & \(\textbf{73.54}_{\pm 0.51}\) & \(\textbf{90.34}_{\pm 0.21}\) & 34.1 \\ 
\bottomrule
\end{tabular}
\end{table*}

This section examines the generalization performance of fSGLD in a standard optimization setting under noisy labels, beyond the Bayesian sampling perspective.

\begin{table}[htb]
\centering
\scriptsize
\caption{Fine-tuning performance comparison on ViT-B/16.}
\label{tab:finetuning}
\begin{tabular}{lccc}
\toprule
\textbf{Model} & \multicolumn{3}{c}{ViT-B/16}  \\
\midrule
\textbf{Dataset} & CIFAR-10N & CIFAR-100N & (s/epoch) \\
\midrule
SGD    & 94.64 & 71.80 & 343.2 \\
AdamW  & 95.57 & 72.30 & 344.5 \\
SAM    & \textbf{96.75} & 74.66 & 656.7 \\
ASAM    & 96.25 & \underline{74.86} & 662.5 \\
\midrule
fSGLD  & \underline{96.45} & \textbf{75.67} & 345.8 \\
\bottomrule
\end{tabular}
\vspace{-5pt}
\end{table}

\textbf{Training from scratch.} We compare the generalization performance of different optimizers when training ResNet models from scratch. Table \ref{tab:fromscratch} presents the results across all dataset-architecture combinations. 
Overall, fSGLD shows highly competitive performance across all benchmarks. In particular, fSGLD achieves clear performance gains over standard optimizers on CIFAR-100N and WebVision under the Top-1 metric (WV-1), which represent more challenging settings due to higher label noise and a large number of classes.

In terms of computational cost, the wall-clock time per iteration (s/iter) shows that fSGLD has a training speed comparable to standard optimizers like SGD and AdamW. In contrast, SAM and ASAM incur nearly double the computational overhead due to their min-max formulations requiring two gradient evaluations per step. This highlights a key advantage of our method: fSGLD matches or surpasses SAM and ASAM strong performance with a computational budget similar to standard SGD.

\textbf{Fine-tuning.} We further evaluate fSGLD in a fine-tuning setting, using a pre-trained ViT-B/16 model on CIFAR-10N and CIFAR-100N. The results are summarized in Table \ref{tab:finetuning}. fSGLD outperforms standard optimizers such as SGD and AdamW, and achieves performance competitive with or superior to SAM and ASAM while requiring roughly half the computational overhead.

\subsection{Effect of the $\beta$-$\sigma$ Coupling} \label{subsec:beta-sigma}

To assess the effect of the proposed $\beta$-$\sigma$ coupling, we conduct controlled experiments in which the coupling is deliberately violated. Specifically, we consider (i) fixing $\beta$ and varying $\sigma$ over a wide range, and (ii) fixing $\sigma$ and varying $\beta$. For each configuration, we compute the implied exponent $\eta$ via the coupling formula $\sigma=\beta^{-(1+\eta)/4}$. 

In Figure~\ref{fig:eta_coupling}, the primary (bottom) x-axis corresponds to the varied parameter ($\sigma$ or $\beta$), while the secondary (top) x-axis shows the implied exponent $\eta$ computed from the coupling formula. Across both settings, performance is maximized when the implied $\eta$ lies within the theoretical range $(0,1)$, and degrades outside this regime, supporting the proposed $\beta$-$\sigma$ coupling. Moreover, performance remains stable across a broad range of $\eta\in(0,1)$, indicating that fSGLD is insensitive to the precise choice of $\eta$. This supports the use of a fixed value $\eta =0.1$ in our experiments. 

\begin{figure}[htb!]
    \centering
    \begin{subfigure}{0.8\linewidth}
        \centering
        \includegraphics[width=\linewidth]{./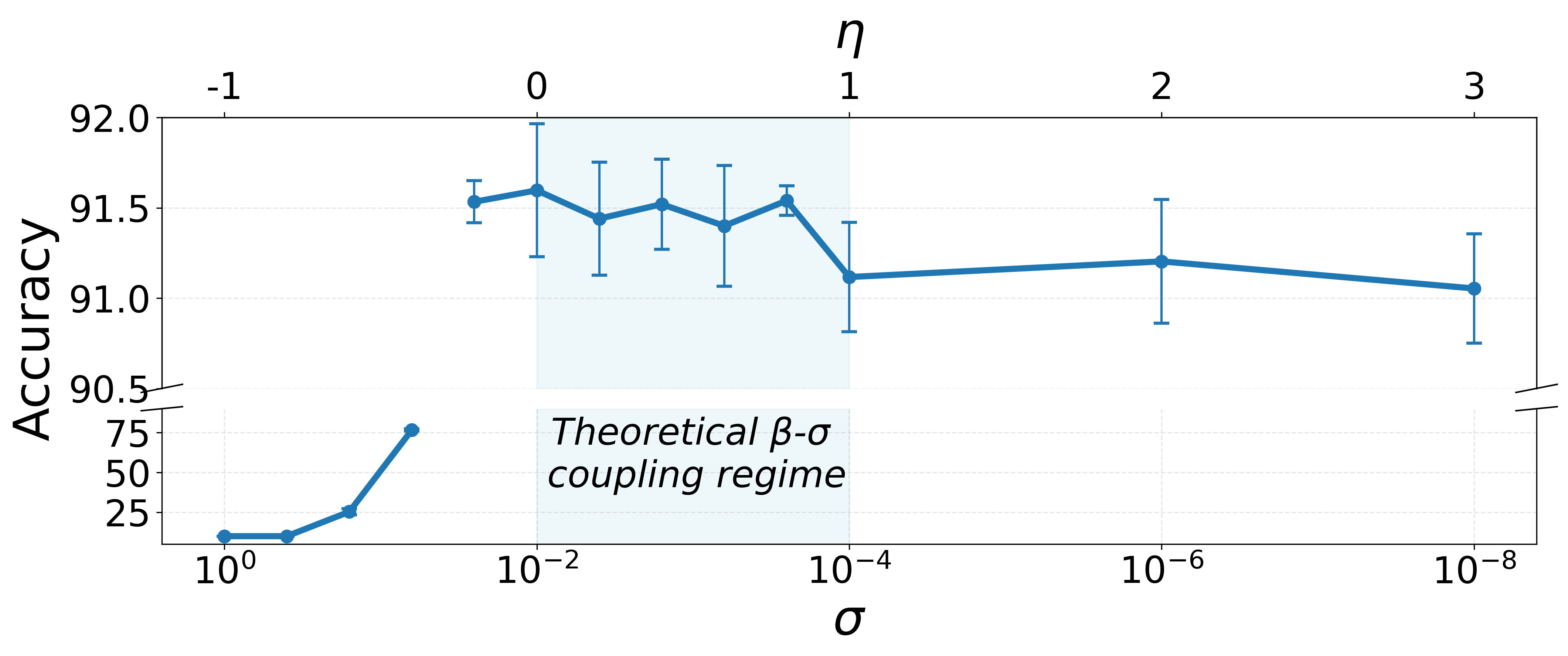}
        \caption{Fixed $\beta = 10^8$, varying $\sigma$.}
        \label{fig:eta_beta_fixed}
    \end{subfigure}
    \begin{subfigure}{0.8\linewidth}
        \centering
        \includegraphics[width=\linewidth]{./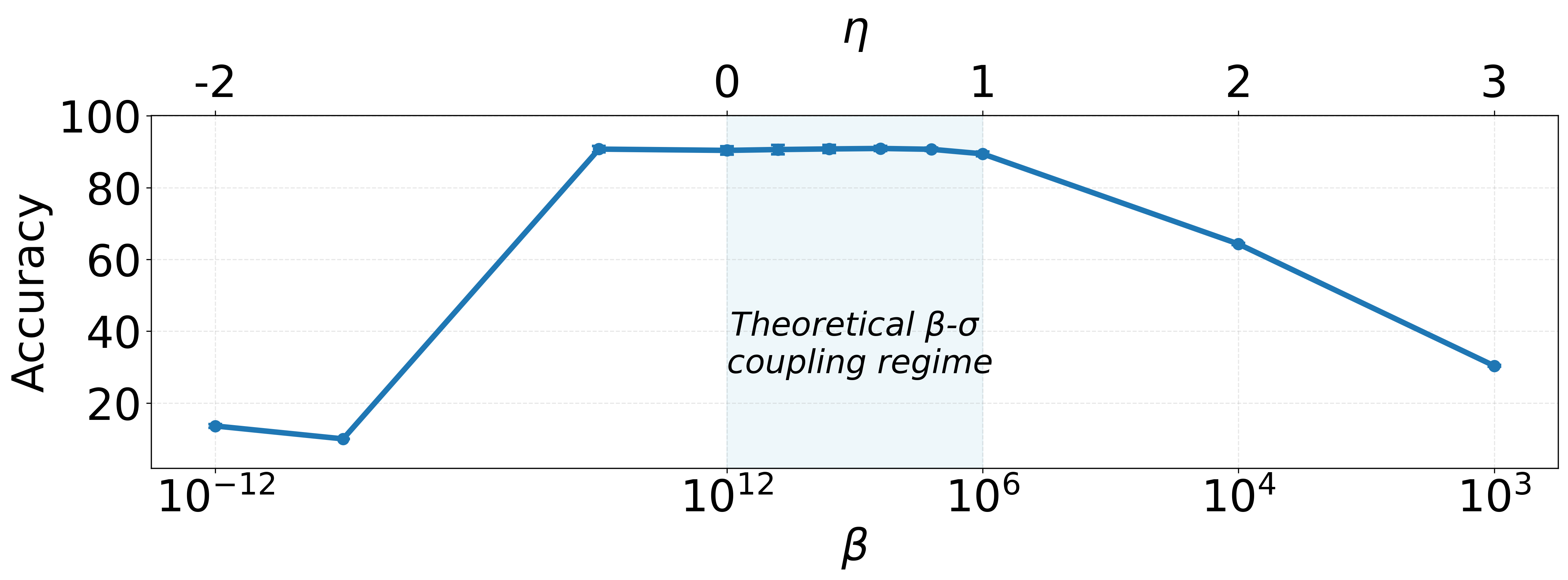}
        \caption{Fixed $\sigma=10^{-3}$, varying $\beta$.}
        \label{fig:eta_sigma_fixed}
    \end{subfigure}

    \caption{Performance of fSGLD as a function of the implied exponent $\eta$, where $\eta$ is computed from the coupling $\sigma = \beta^{-(1+\eta)/4}$. Top: fixing $\beta$ and varying $\sigma$. 
    Bottom: fixing $\sigma$ and varying $\beta$. The bottom x-axis corresponds to the varied parameter ($\sigma$ or $\beta$), while the top x-axis shows the implied exponent $\eta$.}
    \label{fig:eta_coupling}
\vspace{-5pt}    
\end{figure}

\vspace{-3mm}
\subsection{Hessian Spectrum}\label{subsec:hessian}

\begin{figure*}[htb!]
    \centering 
    \begin{subfigure}[b]{0.3\textwidth}
        \includegraphics[width=\textwidth]{./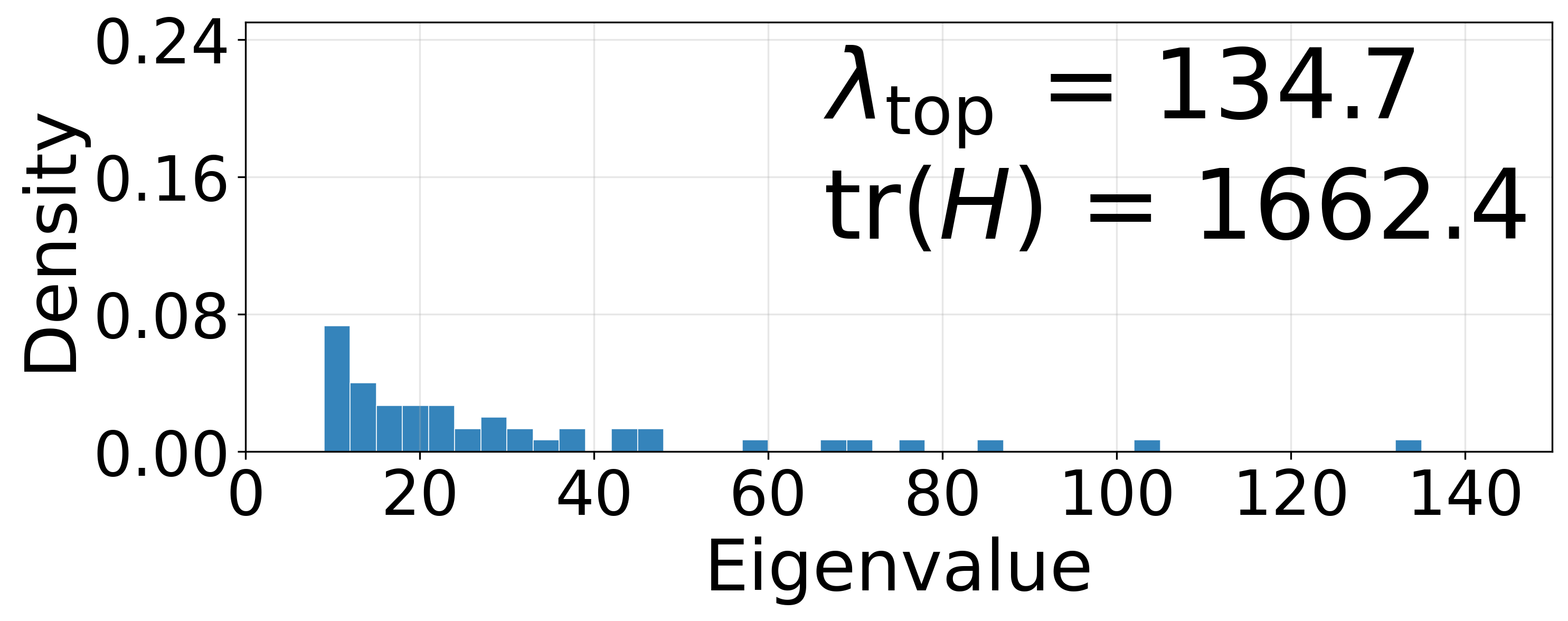}
        \caption{SGD}
        \label{fig:sgd_dist}
    \end{subfigure}
    \hfill
    \begin{subfigure}[b]{0.3\textwidth}
        \includegraphics[width=\textwidth]{./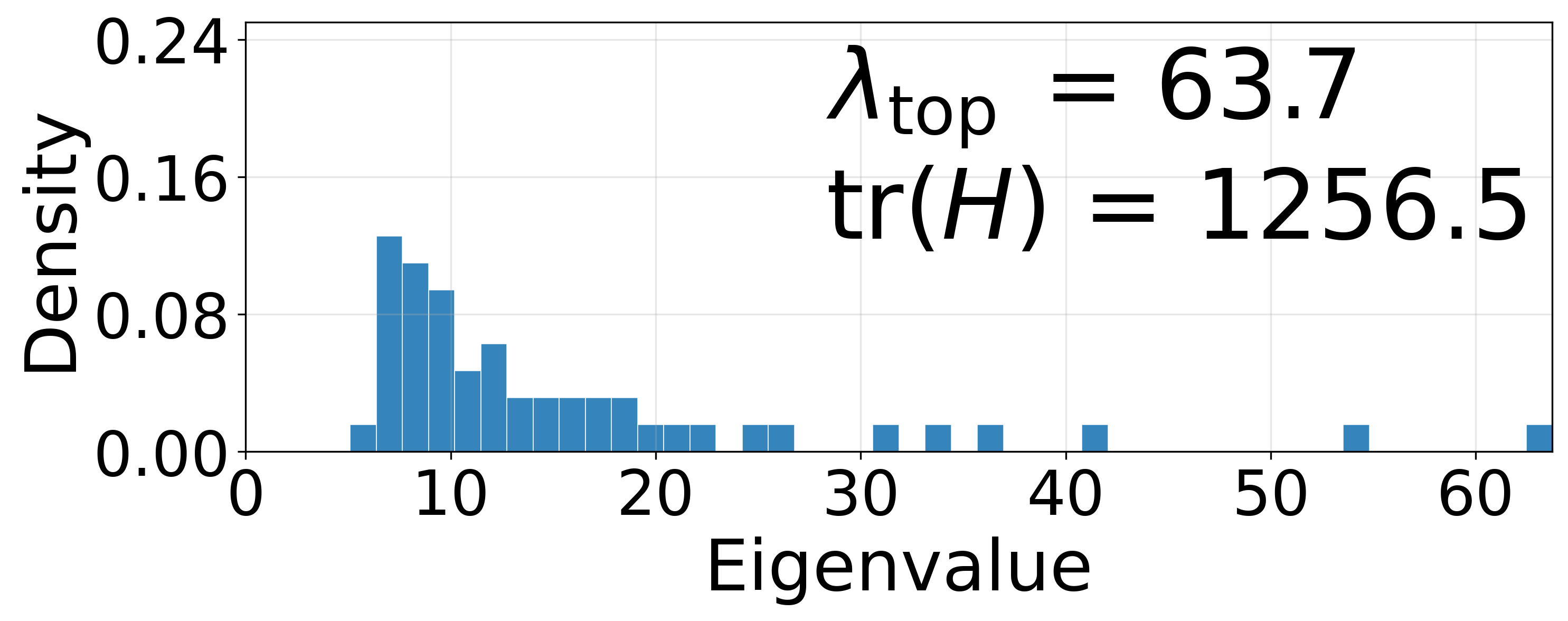}
        \caption{SGLD}
        \label{fig:sam_dist}
    \end{subfigure}
    \hfill
    \begin{subfigure}[b]{0.3\textwidth}
        \includegraphics[width=\textwidth]{./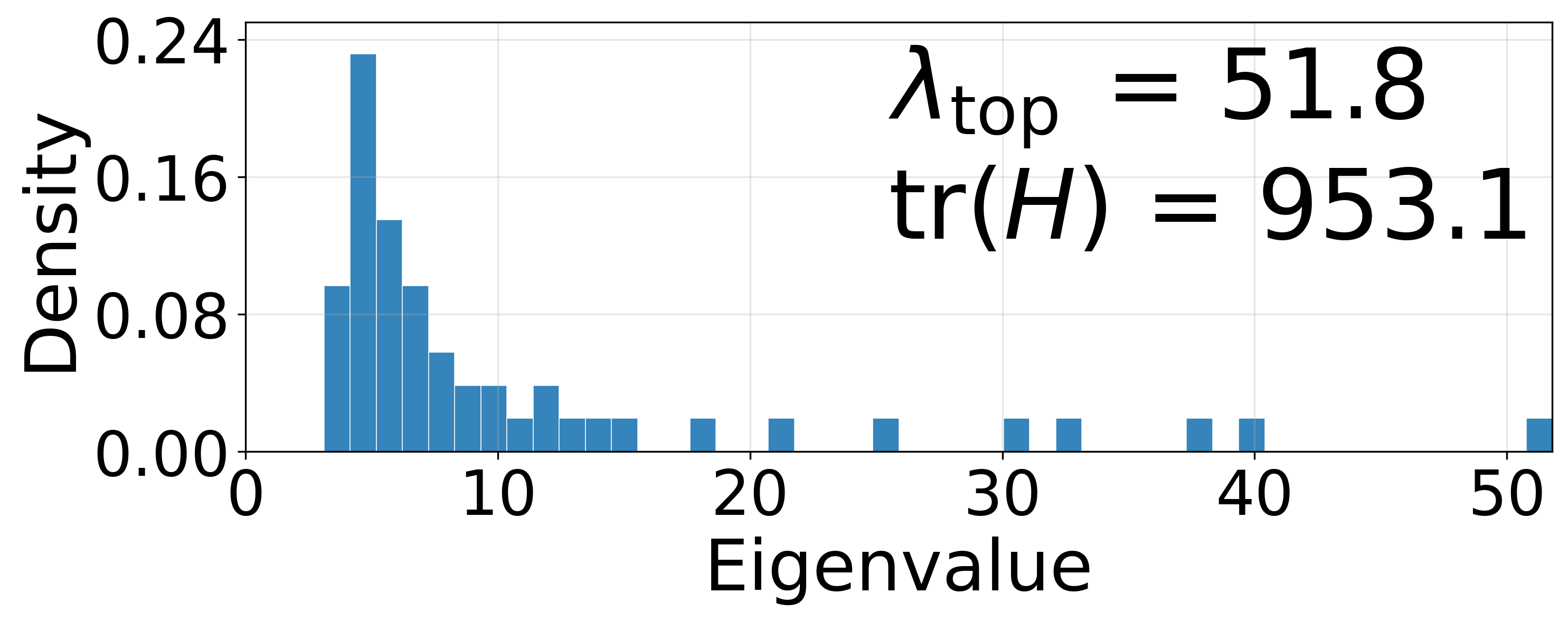}
        \caption{fSGLD}
        \label{fig:fsgld_dist}
    \end{subfigure}    
    \caption{The distribution of the leading eigenvalues and Hessian trace of ResNet-34 trained on CIFAR-10N with SGD, SGLD, and fSGLD.}\label{fig:hessian}
\end{figure*}

To empirically verify our theoretical insight that fSGLD finds flat minima by implicitly regularizing the Hessian trace, we analyze the curvature of the solutions obtained by SGD, SGLD, and fSGLD. We measure the maximum Hessian eigenvalue ($\lambda_{\text{top}}$) and the Hessian trace for a ResNet-34 trained on CIFAR-10N, using standard stochastic approximations (Appendix~\ref{app:hessian}). The results, presented in Figure \ref{fig:hessian}, confirm our hypothesis. fSGLD converges to solutions with a significantly smaller maximum eigenvalue and Hessian trace compared to SGD and SGLD. 



\vspace{-3mm}
\section{Related Work and Discussions}

We review the most relevant literature on SAM, RWP, Hessian-based optimization, and SGLD.

\textbf{Flat Minima and Generalization.}  Empirical studies \cite{keskar2017on,jastrzkebski2017three,jiang2019fantastic} and theoretical analyses \cite{dziugaite2017computing,neyshabur2017exploring} consistently show that flatter minima are strongly correlated with better generalization in deep neural networks. However, elucidating precise notions of sharpness and their relationship to generalization remains an open and active area of research \cite{andriushchenko2022towards,andriushchenko23a,ding2024flat,wen2023sharpness,pmlr-v235-tahmasebi24b}.
\vspace{-0.5mm}

\textbf{SAM, RWP, and Hessian-regularized Optimizers.} The success of SAM \cite{foret2021sharpnessaware} has produced a wide range of follow-up work to improve its efficiency, effectiveness, and applicability.
Extensions include algorithmic improvements to approximate the inner maximization more efficiently \cite{Liu_2022_CVPR,du2022efficient, kwon2021asam,xie2024improving,li2024friendly, CHEN2024106325,KANG2025113967}. Beyond these, several Hessian-based regularization approaches have explored flatness from a different angle. For example, \cite{sankar2021deeper} studies explicit Hessian-penalty methods that rely on costly approximations. Moreover, \cite{zhang2023noise} proposes Noise-Stability Optimization, and \cite{li2024revisiting} studies random weight perturbation with explicit Hessian penalties. Both works focus on PAC-Bayes generalization bounds and local convergence to stationary points,
providing algorithm-agnostic guarantees about the perturbed loss rather than the training dynamics of a specific optimizer.
\vspace{-0.5mm}

\textbf{Flat Posterior Sampling.} Recent Bayesian methods explicitly bias posterior inference toward flat regions of the loss landscape. Flat-seeking Bayesian neural networks \cite{nguyen2023flat} and Flat Posterior-aware Bayesian Model Averaging \cite{lim2025flat} modify the posterior or inference objective to favor wide, low-curvature minima and improve generalization. Entropy-MCMC \cite{li2024entropymcmc} instead uses entropy-regularized MCMC with auxiliary variables to bias sampling toward flat posterior basins. 

\vspace{-0.5mm}
\textbf{SGLD and its Non-asymptotic Analyses.} Following the seminal works of \cite{welling2011bayesian,pmlr-v65-raginsky17a}, numerous variants of SGLD have been developed to improve its practical performance, such as variance reduction techniques \cite{kinoshita2022improved, NIPS2016_9b698eb3,huang2021stochastic}, preconditioned SGLD~\cite{li2016preconditioned}, replica exchange SGLD~\cite{dong2021replica, deng2020non}.  A parallel line of research has focused on its theoretical properties, particularly its non-asymptotic guarantees.  Early results~\cite{pmlr-v65-raginsky17a, xu2018global} showed non-asymptotic error bounds in the Wasserstein-2 distance at a rate dependent on the number of iterations. More recently, the state-of-the-art analyses have established Wasserstein-1 discretization error bounds of order $ O(\lambda^{1/2})$ ~\cite{zhang2023nonasymptotic}. Our discretization error bound matches this state-of-the-art rate. However, a crucial distinction is that prior work shows non-asymptotic error bounds between SGLD iterates and the minimizers of the original objective $u$, whereas our error bounds are between the algorithm iterates and global flat minima. Related to this geometric perspective, \cite{jules2023charting} uses Langevin dynamics with thermal-like noise to analyze the geometry of neural network loss landscapes and the structure of flat regions. While their focus is on landscape analysis rather than optimization, our work complements this perspective by developing an optimization algorithm with explicit guarantees for targeting global flat minima.
\vspace{-2mm}
\section{Conclusion and Limitations}
\vspace{-0.5mm}
In this work, we proposed Flatness-Aware Stochastic Gradient Langevin Dynamics (fSGLD), a first-order optimization algorithm designed to bias the optimization trajectory toward flat minima without introducing additional memory or extra gradient evaluations. Our main theoretical contribution is a rigorous non-asymptotic analysis of this process. We establish non-asymptotic error bounds in Wasserstein distance and provide the explicit excess risk bound for this class of flatness-aware optimizers. Crucially, our theory shows that the desired regularization effect emerges from a precise coupling of the noise scale $\sigma$ and the inverse temperature $\beta$. Empirically, we conducted an extensive evaluation of fSGLD across a broad range of evaluation protocols and learning settings. We assessed its performance as a standard optimizer in conventional optimization benchmarks, as well as in Bayesian image classification, uncertainty quantification, and out-of-distribution detection. Across these settings, fSGLD demonstrates superior or competitive performance against strong baselines. These gains are achieved at roughly the same gradient evaluation and memory cost as standard SGD and SGLD. We also systematically investigated the effect of the $\beta$-$\sigma$ coupling by varying $\sigma$ and $\beta$ in controlled experiments, clearly demonstrating the effectiveness of the theoretically prescribed coupling compared to decoupled choices of $\beta$ and $\sigma$. Lastly, Hessian spectrum analysis further confirms that fSGLD converges to significantly flatter minima, providing a direct validation of its mechanism. 

\vspace{-1mm}
\textbf{Limitations and Future Directions.} Applying fSGLD to diffusion-based generative models \cite{bruno2025on,bruno2025wasserstein} is a particularly promising direction; investigating whether its bias towards flatter regions of the loss landscape can lead to more diverse or higher-quality samples is a compelling open question.
On the theoretical side, we leave for future work the extension of our analysis to the case where $u$ is semiconvex (i.e., its gradient is one-sided Lipschitz), rather than satisfying Assumption~\ref{Lipschitz_assumption_full_gradient}.

\section*{Acknowledgements}

Stefano Bruno was supported by the InnoCORE program of the Ministry of Science and ICT (Grant No.~1.260007.01) and the 2026 Research Fund of UNIST (Grant No.~1.260001.01). Sotirios Sabanis was supported by Innovate UK [grant number 10081810] and was partially supported by project MIS 5154714 of the National Recovery and Resilience Plan Greece 2.0 funded by the European Union through the NextGenerationEU Program. Youngsik Hwang, Jaehyeon An, and Dong-Young Lim  were supported by the Ministry of Trade, Industry and Energy (MOTIE) and Korea Institute for Advancement of Technology (KIAT) through the International Cooperative R\&D Program (No.~P0025828), by the Institute of Information \& Communications Technology Planning \& Evaluation(IITP) grant by the Korea government(MSIT) (RS-2025-25442824, AI Star Fellowship Program(Ulsan National Institute of Science and Technology)), and by NRF grants funded by MSIT (No. RS-2025-02216640 and NO. RS-2026-25493701). Dong-Young Lim was partially supported by a grant from the Simons Foundation. Part of this work was carried out during his visit to the Isaac Newton Institute.


\section*{Impact Statement}

This paper presents work whose goal is to advance the field of Machine
Learning. There are many potential societal consequences of our work, none
which we feel must be specifically highlighted here.





\bibliography{example_paper}
\bibliographystyle{icml2026}

\newpage
\appendix
\onecolumn

\section{Appendix Roadmap}

This appendix is organized as follows:
\begin{itemize}
    \item Section~\ref{Appendix_Theoretical_construction_for_fSGLD_algorithm} establishes the relationship between $\pi_{\beta,\sigma}^{\star}$ and $\pi_{\beta}^{\text{fSGLD}}$.
    \item Section~\ref{subsection_appendix_proofs_remaining_results_assumptions} discusses the direct consequences of our assumptions on the fSGLD dynamics.
    \item Section~\ref{proofs_non-asymptotic Wasserstein_analysis} presents an overview of the non-asymptotic error bounds in Wasserstein distance and the expected excess risk bounds.
    \item Section~\ref{app:main_section} reports the implementation details and complements the main empirical results with additional analyses.
\end{itemize}

\section{Relationship between $\pi_{\beta,\sigma}^{\star}$ and $\pi_{\beta}^{\text{fSGLD}}$} \label{Appendix_Theoretical_construction_for_fSGLD_algorithm}

We derive the relationship between the target measure $\pi_{\beta,\sigma}^{\star}$ and the invariant measure $\pi_{\beta}^{\text{fSGLD}}$ of the fSGLD algorithm, which will be used to prove Proposition \ref{Wasserstein_KL_convergence_two_measures_theorem}, Theorem \ref{main_result_Wasserstein_distance_order_one_target_measure}, and Theorem \ref{excess_risk_corollary}. By Taylor's theorem, we obtain
\begin{equation} \label{second_order_approximation_around_theta}
    \begin{split}
         u(\theta+\epsilon) =   u(\theta) + \nabla    u(\theta)^T \epsilon + \frac{1}{2} \epsilon^T H(\theta) \epsilon   + \frac{1}{6} \sum_{i,j,k=1}^d   \frac{\partial^3  u}{\partial \theta_i \partial \theta_j \partial \theta_k }  (\theta) \  \epsilon_i \epsilon_j \epsilon_k + \mathcal{R}(\theta, \epsilon),
    \end{split}
\end{equation}
where $\mathcal{R}(\theta, \epsilon)$ denotes the remainder term.  Taking the expectation over $\epsilon \sim \mathcal{N}(0, \sigma^2 I_d)$ in \eqref{second_order_approximation_around_theta}, we have
\begin{equation} \label{approximation_surrogate_objective}
\begin{split}
     g_{\epsilon}(\theta)
       & =   u(\theta) +  \frac{1}{2} \mathbb{E} [\epsilon^T H(\theta) \epsilon ] + \E [\mathcal{R}(\theta,\epsilon)]   \\ & =   u(\theta) +  \frac{1}{2}   \text{tr} \left( H(\theta) \cdot \mathbb{E}[\epsilon^T \epsilon ] \right) + \mathbb{E} [\mathcal{R}(\theta,\epsilon)]
          \\ & =   v(\theta)  + \mathbb{E} [\mathcal{R}(\theta, \epsilon)],
    \end{split}
\end{equation}
where 
\begin{equation*}
    v(\theta) = u(\theta) +   \frac{\sigma^2}{2}   \text{tr} \left( H(\theta) \right).
\end{equation*}
Using Assumption \ref{independence_assumption}, in particular $\sigma \in (0,1)$, and the extreme value theorem on the compact set $\mathcal{K}$, there exists $ C_{\mathcal{K}} < \infty$ such that $  \E [\mathcal{R}(\theta,\epsilon)] \le \sigma^4 d^2 C_{\mathcal{K}}$ for all $\theta \in \mathcal{K}$. Hence, combining this bound with \eqref{assumption_large_values_remainder}, we obtain
\begin{equation} \label{remainder_highest_order_term}
\begin{split}
  \mathbb{E} [\mathcal{R}(\theta,\epsilon)] & \le    \sigma^4 d^2(  C_{\mathcal{K}} + C_{\vartriangle}). 
   \end{split}
\end{equation}
Let the normalization constants of $\pi_{\beta}^{\text{fSGLD}}$ and $\pi_{\beta,\sigma}^{\star}$ be defined as
\begin{equation} \label{normalization_constant_invariant_measure}
 Z_{\beta} :=   \int_{\mathbb{R}^d} e^{- \beta g_{\epsilon}(\theta)} \ \rmd \theta, 
\end{equation}
and 
\begin{equation} \label{normalization_constant_target_measure}
 Z_{\beta, \sigma} :=   \int_{\mathbb{R}^d} e^{- \beta v(\theta)} \ \rmd \theta .
\end{equation}
Using \eqref{approximation_surrogate_objective}, \eqref{normalization_constant_invariant_measure}, and \eqref{normalization_constant_target_measure}, we obtain the following relationship
\begin{equation} \label{relationship_between_different_invariant_measures}
    \begin{split}
	\pi_{\beta}^{\text{fSGLD}}(\rmd \theta) & = Z^{-1}_{\beta} \exp (-\beta g_{\epsilon}(\theta)  ) \ \rmd \theta 
     \\ & =   Z^{-1}_{\beta} Z_{\beta,  \sigma}  \exp(- \beta \ \mathbb{E} [\mathcal{R}(\theta, \epsilon)] ) \ \pi_{\beta,\sigma}^{\star} (\rmd \theta).
    \end{split}
\end{equation}
The particular choice $\sigma = \beta^{-\frac{1+\eta}{4}}$ arises naturally from the relation 
\begin{equation*} \frac{\pi_{\beta,\sigma}^{\star}(\rmd \theta)}{ \pi_{\beta}^{\text{fSGLD}}(\rmd \theta)}  \propto \exp{\left(\beta \E[\mathcal R(\theta,\epsilon)]\right)},
\end{equation*}
and from the contribution of the fourth-order moments of $\epsilon\sim \mathcal N(0,\sigma^2 I_d)$ to the expected remainder term,  which leads, roughly speaking, to a term of order $ \exp{ \left(O(\beta^{- \eta} d^2)\right)}$. As demonstrated in our experiments, this coupling yields meaningful improvements in generalization.
\section{Additional results for Section \ref{subsec:assumptions}} \label{subsection_appendix_proofs_remaining_results_assumptions}

This section collects two technical remarks which are direct consequences of the assumptions presented in Section~\ref{subsec:assumptions}.

\begin{remark} \label{linear_growth_remark}
By Assumption \ref{independence_assumption} and \ref{Lipschitz_assumption_full_gradient}, the gradient $\nabla u(\theta) = \E [\nabla  U(\theta, X)]$ for all $\theta \in \mathbb{R}^d$, is well-defined. In addition, one obtains for all $\theta$, $\theta' \in \mathbb{R}^d$, 
\begin{equation*}
    |  \nabla u(\theta)  -  \nabla u(\theta')|  \le L_1 \E [\varphi (X_0)] |\theta - \theta'|.
\end{equation*}
The bound of the linear growth of $\nabla u$ is $O(1)$, i.e.
\begin{equation*}
    |  \nabla u(\theta) |   \le L_1 \E [\varphi (X_0)] |\theta | + |  \nabla u(0) |.
\end{equation*}
Also, Assumption \ref{Lipschitz_assumption_full_gradient} implies,  for fixed  $\widetilde{\epsilon} \in \mathbb{R}^d$,
\begin{equation*}
    | \nabla_{\theta} U(\theta + \widetilde{\epsilon}, x) | \le L_1 \varphi(x) ( | \theta | + | \widetilde{\epsilon} | ) + L_2 \bar{\varphi}(x) + | \nabla U(0, 0) |,
\end{equation*}
where $\bar{\varphi}(x) := (\varphi (x) + \varphi (0))  | x|$. Using \eqref{gradient_J_theta_stochastic_gradient} and Lemma 1 in \cite{nesterov2017random}, one obtains
\begin{equation*}
| \nabla g_{\epsilon}(\theta) | \le      L_1 \E [ \varphi(X_0) ] \left(  | \theta | + \sigma \sqrt{d}  \right) + L_2   \E [ \bar{\varphi}(X_0)]+ | \nabla U(0, 0) |.
\end{equation*}
Therefore, the introduction of the random weight perturbation $\epsilon$ into the overdamped Langevin dynamics \eqref{Langevin_SDE_equation} yields the bound of the linear growth of $ \nabla g_{\epsilon}$ to be $O(1+\sigma \sqrt{d})$. 
\end{remark}

\begin{remark} \label{dissipativity_fSGLD_remark}
By Assumption \ref{independence_assumption} and \ref{dissipativity_assumption_full_gradient}, one obtains a dissipativity condition of $\nabla u$, i.e., for any $\theta \in \mathbb{R}^d$, 
\begin{equation} \label{dissipativity_gradient_u_remark}
    \langle \nabla u(\theta), \theta \rangle \ge \bar{a} | \theta |^2 - \bar{b}, \qquad \text{where $\bar{a}=O(1)$ and $\bar{b}=O(1)$.}
\end{equation}
Let $\zeta \in (0,\bar{a}L_1^{-2} (\E[\varphi^2(X_0)])^{-1})$. The inclusion of the random weight perturbation $\epsilon$ into \eqref{Langevin_SDE_equation} and the use of Assumptions \ref{independence_assumption}, \ref{Lipschitz_assumption_full_gradient}, and \ref{dissipativity_assumption_full_gradient}, lead to the following dissipative condition of $\nabla g_{\epsilon}$, i.e. for any $\theta \in \mathbb{R}^d$,
 \begin{equation} \label{dissipativity_assumption_full_gradient_H_h}
        \begin{split}
            \langle   \nabla g_{\epsilon}(\theta)  , \theta \rangle
                 \ge  a |\theta|^2 - b,
        \end{split}
    \end{equation}
    where 
    \begin{equation} \label{definition_dissipativity_constants}
    \begin{split}
        a & := \bar{a}-  \zeta L^2_1 \E[\varphi^2(X_0)] >0, 
        \\  b & :=  ((2 \zeta)^{-1} +  2  \zeta  L^2_1 \E [\varphi^2(X_0)]) \sigma^2 d + 4  \zeta  L^2_2 \E [\bar{\varphi}^2(X_0)]   + 4 \zeta |\nabla U(0,0)|^2  + \bar{b}  >0,
        \end{split}
    \end{equation}
with $\bar{\varphi}$ given in Remark \ref{linear_growth_remark}. In particular, the dissipative constants of $\nabla g_{\epsilon}$ are $a=O(1)$ and $b=O(1+\sigma^2 d)$.
\end{remark}

\begin{proof}[Proof of Remark \ref{dissipativity_fSGLD_remark}]
Using Assumption \ref{dissipativity_assumption_full_gradient} and Remark \ref{linear_growth_remark},  and  Young's inequality, one obtains, for fixed $\widetilde{\epsilon} \in \mathbb{R}^d$
\begin{equation} \label{dissipativativity_fixed_epsilon_proof}
    \begin{split}
        \langle \nabla_{\theta} U(\theta+ \widetilde{\epsilon},x), \theta \rangle & =  \langle \nabla_{\theta} U(\theta+ \widetilde{\epsilon},x), \theta +  \widetilde{\epsilon} \rangle - \langle \nabla_{\theta} U(\theta+ \widetilde{\epsilon},x),   \widetilde{\epsilon} \rangle
        \\ & \ge \langle \theta + \widetilde{\epsilon}, A(x) \theta + \widetilde{\epsilon} \rangle   - \hat{b}(x) - \zeta 2^{-1} | \nabla_{\theta} U(\theta+ \widetilde{\epsilon},x) |^2 - (2 \zeta)^{-1} | \widetilde{\epsilon}|^2
        \\ & \ge \langle \theta , (A(x)-  \zeta L^2_1 \varphi^2(x)) \theta \rangle + \langle \theta, A(x) \widetilde{\epsilon} \rangle +  \langle \widetilde{\epsilon} , A(x) \theta  \rangle + \langle \widetilde{\epsilon} , A(x) \widetilde{\epsilon} \rangle  \\ & \qquad - 2  \zeta  L^2_1 \varphi^2(x) |\widetilde{\epsilon}|^2 -  4 \zeta L_2^2 \bar{\varphi}^2(x)  - 4 \zeta  |\nabla U(0,0)|^2 - \hat{b}(x) - (2 \zeta)^{-1} | \widetilde{\epsilon}|^2.
    \end{split}
\end{equation}
Therefore, 
\begin{equation*}
    \begin{split}
         \nabla g_{\epsilon}(\theta) & = \mathbb{E} [\mathbb{E}_X [ \nabla_{\theta}  U(\theta + \epsilon, X) ]] 
         \\ & \ge (\bar{a}-  \zeta L^2_1 \E[\varphi^2(X_0)]) | \theta |^2 + (\bar{a}- (2 \zeta)^{-1} -  2  \zeta  L^2_1 \E [\varphi^2(X_0)]) \sigma^2 d - 4  \zeta  L^2_2 \E [\bar{\varphi}^2(X_0)]  \\ & \qquad  - 4 \zeta |\nabla U(0,0)|^2  - \bar{b} 
         \\ & \ge a | \theta |^2  - b,
    \end{split}
\end{equation*}
where $a$ and $b$ are defined in \eqref{definition_dissipativity_constants}.
\end{proof}

\section{Overview of the non-asymptotic Wasserstein analysis and  error bound for the expected
excess risk} \label{proofs_non-asymptotic Wasserstein_analysis}

In this section, we derive the results introduced in Sections \ref{section_target_Gibbs_measure_global_target_minima} and \ref{section_Convergence_Guarantees_for_fSGLD}. To provide a high-level picture of the framework, Figure~\ref{fig:placeholder} summarizes the logical flow of our main results.

\begin{figure*}
    \centering
    \includegraphics[width=0.9\linewidth]{./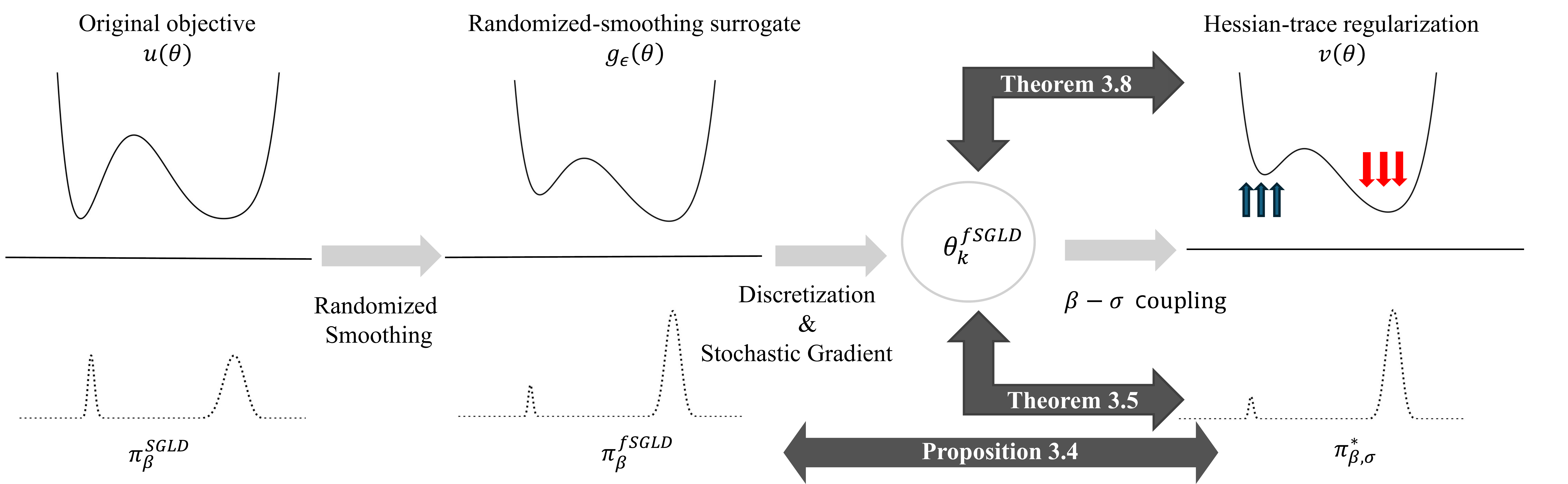}
   \caption{A schematic overview of the theoretical framework of fSGLD. The process begins with the \textbf{original objective} $u(\theta)$ and its associated Gibbs measure $\pi_\beta^{\text{SGLD}}$ (left). \textbf{Randomized smoothing} transforms this into a tractable \textbf{surrogate objective}, $g_\epsilon(\theta)$, which is the basis for the fSGLD algorithm and its invariant measure, $\pi^{\text{fSGLD}}_\beta$ (center). This highlights a key distinction: while the Gibbs measure of standard SGLD, $\pi_\beta^{\text{SGLD}}$, is indifferent to the flatness of the minima, the fSGLD framework is designed such that its invariant measure, $\pi^{\text{fSGLD}}_\beta$, targets the distribution over the flattest minima. Our ultimate goal is to target the \textbf{Hessian-trace regularized objective} $v(\theta)$ and its corresponding measure $\pi^{\star}_{\beta, \sigma}$, which concentrates on the desired global flat minima (right).}
    \label{fig:placeholder}
\end{figure*}

The proofs of Theorem \ref{main_result_Wasserstein_distance_order_one_target_measure} and the resulting $W_2$ bound (Corollary \ref{main_result_Wasserstein_distance_order_two_target_measure}) rely on the following decomposition:
  \begin{equation} \label{error_decomposition_main_theorem}
     \begin{split}
               W_p(\mathcal{L}(\theta_k^{\text{fSGLD}}), \pi_{\beta,\sigma}^{\star})   & \le 
    W_p (\mathcal{L}(\theta_k^{\text{fSGLD}}),  \mathcal{L}(Y_t^{\lambda, \text{fSGLD}  }))   + W_p( \mathcal{L}(Y_t^{\lambda , \text{fSGLD}}), \pi_{\beta}^{\text{fSGLD}})  + W_p( \pi_{\beta}^{\text{fSGLD}}, \pi_{\beta,\sigma}^{\star}),
      \end{split}
  \end{equation}
  for $p=\{1,2\}$, and $t  \in (kT, (k+1)T]$.
 The first term on the right-hand side of \eqref{error_decomposition_main_theorem} corresponds to the discretization error between the fSGLD recursion \eqref{eq:fSGLD} and the time-rescaled version of flatness Langevin SDE \eqref{flatness_Langevin_SDE_rescaled} associated with the randomized-smoothing surrogate objective $g_{\epsilon}$ defined in  \eqref{surrogate_objective_definition}. The second term captures the convergence error between this SDE and its invariant measure $\pi_{\beta}^{\text{fSGLD}}$. The third term is the distance between the two measures provided in Proposition \ref{Wasserstein_KL_convergence_two_measures_theorem}. The proofs of the first two error terms follow the general structure of \cite{chau2021stochastic,zhang2023nonasymptotic}, but require substantial adaptation to handle the fSGLD update (instead of SGLD) as well as the surrogate objective function $g_{\epsilon}$ (instead of the original objective $u$).  The proof of the third error term is new and is provided in Appendix \ref{proofs_main_results_appendix_proposition}.

 The proof of Theorem \ref{excess_risk_corollary} relies on the $W_2$ bounds of the first and second error terms on the right-hand side of \eqref{error_decomposition_main_theorem}, and is given in Appendix \ref{Appendix_proof_nonasymptotic_error_bound_expected_excess_risk}.

\subsection{Proof of Proposition \ref{Wasserstein_KL_convergence_two_measures_theorem}} \label{proofs_main_results_appendix_proposition}

To prove Proposition \ref{Wasserstein_KL_convergence_two_measures_theorem}, we will use the following result stated in Lemma \ref{set_A_remark_dissipativity}.


\begin{lemma} \label{set_A_remark_dissipativity}
Let Assumption \ref{dissipativity_assumption_full_gradient} hold. Then, the following set 
\begin{equation} 
\label{definition_set_A_minimizers}
\mathfrak{B} := \left\{ \theta \in \mathbb{R}^d : |\theta | \le \sqrt{\frac{b}{a}} \right\},
\end{equation}
contains all the minimizers of  $u(\theta)$ and $g_{\epsilon}(\theta)$, where $a$ and $b$ are given in \eqref{definition_dissipativity_constants}.
\end{lemma}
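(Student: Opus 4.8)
The plan is to combine the first-order optimality condition with the dissipativity bounds already established. Let $f$ be any one of $u$, $v$, $g_{\epsilon}$ and let $\theta^{\star}$ be a global minimizer of $f$; since each of these functions is continuously differentiable, $\nabla f(\theta^{\star}) = 0$, hence $\langle \nabla f(\theta^{\star}), \theta^{\star} \rangle = 0$. It therefore suffices to verify, for each $f$, a dissipativity inequality of the form $\langle \nabla f(\theta), \theta \rangle \ge a|\theta|^2 - b$ for all $\theta \in \mathbb{R}^d$, with $a,b$ exactly the constants in \ref{definition_dissipativity_constants}: plugging in $\theta = \theta^{\star}$ then gives $0 \ge a|\theta^{\star}|^2 - b$, i.e. $|\theta^{\star}| \le \sqrt{b/a}$, so $\theta^{\star} \in A$.

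For $g_{\epsilon}$ the required inequality is precisely \ref{dissipativity_assumption_full_gradient_H_h} from Remark \ref{dissipativity_fSGLD_remark}. For $u$, Remark \ref{dissipativity_fSGLD_remark} gives $\langle h(\theta), \theta \rangle \ge \bar{a}|\theta|^2 - \bar{b}$; since $\zeta>0$ and $\sigma, d > 0$, inspection of \ref{definition_dissipativity_constants} shows $a = \bar{a} - \zeta L_1^2 \E[\varphi^2(X_0)] < \bar{a}$ and $b > \bar{b}$, so $\bar{a}|\theta|^2 - \bar{b} \ge a|\theta|^2 - b$ and the bound for $u$ follows. For $v$, I would start from $\nabla v(\theta) = h(\theta) + \tfrac{\sigma^2}{2}\nabla\bigl(\text{tr}(H(\theta))\bigr)$ and reuse the computation in the proof of Lemma \ref{finite_second_moment_informative_measure_remark}: a Young's-inequality split of the term $\tfrac{\sigma^2}{2}\langle \nabla(\text{tr}(H(\theta))), \theta\rangle$ absorbs a small multiple of $|\theta|^2$ into the $\bar{a}|\theta|^2$ contribution and the remainder into an additive constant, yielding $\langle \nabla v(\theta), \theta\rangle \ge a_v|\theta|^2 - b_v$ for suitable $a_v, b_v > 0$; the last step is to check $b_v/a_v \le b/a$ by tracking the dependence on $\zeta, \sigma, d$ so that $\theta^{\star}_v \in A$ as well.

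The main obstacle is the $v$ case. For $u$ and $g_{\epsilon}$ the needed dissipativity is essentially handed to us by Assumption \ref{dissipativity_assumption_full_gradient} and Remark \ref{dissipativity_fSGLD_remark}; for $v$ it must be produced from the gradient of the Hessian trace, and the subtle point is not merely that $v$ is dissipative (that much is recorded in Lemma \ref{finite_second_moment_informative_measure_remark}) but that its dissipativity radius is governed by the \emph{same} pair $(a,b)$ appearing in the definition of $A$. This forces the Young's-inequality parameter in the $v$-bound to be chosen compatibly with the parameter $\zeta$ used in \ref{definition_dissipativity_constants}, and then the comparison $b_v/a_v \le b/a$ has to be confirmed term by term, using only the local Lipschitz regularity of Assumption \ref{Lipschitz_assumption_full_gradient} (so as not to smuggle in any global $C^4$ bound).
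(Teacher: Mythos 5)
Your treatment of $u$ and $g_{\epsilon}$ is exactly the paper's argument: first-order optimality gives $\langle \nabla f(\theta^{\star}),\theta^{\star}\rangle=0$, and the dissipativity inequalities from Assumption \ref{dissipativity_assumption_full_gradient} (in expectation, $\langle h(\theta),\theta\rangle\ge \bar a|\theta|^2-\bar b$) and Remark \ref{dissipativity_fSGLD_remark} (inequality \ref{dissipativity_assumption_full_gradient_H_h}) then yield $|\theta^{\star}_{u}|\le\sqrt{\bar b/\bar a}\le\sqrt{b/a}$ and $|\theta^{\star}_{g_{\epsilon}}|\le\sqrt{b/a}$. Your observation that $a<\bar a$ and $b>\bar b$ in \ref{definition_dissipativity_constants} is the same comparison the paper makes, just phrased as a weakening of the inequality rather than of the radius.

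The gap is the $v$ case, which you correctly single out as the obstacle but do not close. Your plan is to produce $\langle\nabla v(\theta),\theta\rangle\ge a_v|\theta|^2-b_v$ by a Young's-inequality split and then verify $b_v/a_v\le b/a$. Two problems. First, the split (as in the proof of Lemma \ref{finite_second_moment_informative_measure_remark}) leaves the term $\tfrac{\sigma^2}{4\bar\zeta}\,|\nabla(\text{tr}(H(\theta)))|^2$, which is a function of $\theta$ built from third derivatives of $u$; Assumptions \ref{Lipschitz_assumption_full_gradient}--\ref{dissipativity_assumption_full_gradient} give no global control of it, so $b_v$ is not a finite constant without additional hypotheses (the paper's Lemma \ref{finite_second_moment_informative_measure_remark} glosses over the same point, but it only needs \emph{some} dissipativity, not a specific radius). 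Second, even granting such a bound, the constants $a,b$ in \ref{definition_dissipativity_constants} contain no term that dominates $\sup_{\theta}|\nabla(\text{tr}(H(\theta)))|^2$, so the comparison $b_v/a_v\le b/a$ is a genuinely new inequality with no reason to hold; it cannot be ``confirmed term by term.'' For reference, the paper's own proof handles $v$ by writing $\langle\nabla v(\theta^{\star}_{v}),\theta^{\star}_{v}\rangle=\langle\nabla u(\theta^{\star}_{u}),\theta^{\star}_{u}\rangle$ and $|\theta^{\star}_{v}|=|\theta^{\star}_{u}|$, i.e., it effectively identifies the minimizer of $v$ with that of $u$ without justification — so you have located a real weak spot in the lemma, but your proposal does not repair it, and as written the $v$ case remains unproved.
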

\begin{proof} [Proof of Lemma \ref{set_A_remark_dissipativity}]
 Let $\theta^{\star}_{g_{\epsilon}}$ and $\theta^{\star}_{ u}$ be a minimizer of $g_{\epsilon}(\theta)$ and $u(\theta)$, respectively. By Remark \ref{dissipativity_fSGLD_remark}, we have  
 \begin{equation}
        \begin{split}
  0=  
  \langle \nabla   u(\theta^{\star}_{ u}), \theta^{\star}_{ u} \rangle         \ge  \overline a |\theta^{\star}_{ u}|^2 - \overline b,
        \end{split}
    \end{equation}
   which implies
 \begin{equation*}
    |\theta^{\star}_{u} |  \le \sqrt{\frac{\overline b}{\overline a}} \le \sqrt{\frac{ b}{ a}}.
 \end{equation*} 
 Again, by Remark \ref{dissipativity_fSGLD_remark}, we have
    \begin{equation}
        \begin{split}
  0 = \langle  \nabla g_{\epsilon}(\theta^{\star}_{g_{\epsilon}}), \theta^{\star}_{g_{\epsilon}} \rangle  \ge  a |\theta^{\star}_{g_{\epsilon}}|^2 -b,
        \end{split}
    \end{equation}
 which implies
 \begin{equation*}
     |\theta^{\star}_{g_{\epsilon}} | \le \sqrt{\frac{b}{a}}.
 \end{equation*} 
\end{proof}

\begin{proof}[Proof of Proposition \ref{Wasserstein_KL_convergence_two_measures_theorem}]
 Recall that for any $\mu$ and $\nu \in \mathcal{P}(\mathbb{R}^{d })$, the Kullbak-Leibler divergence (or relative entropy) between $\mu$ and $\nu$ is defined as
\begin{equation} \label{KL_divergence_notation}
    \begin{split}
        \text{KL}(\mu || \nu) & = \begin{cases}
\int_{\mathbb{R}^d} \log \left( \frac{\rmd \mu}{\rmd \nu } \right)  \rmd \mu,  & \text{if} \ \mu  \ll \nu, 
\\ \infty, &  \text{otherwise}.
        \end{cases}
    \end{split}
\end{equation}
We use the following result from Corollary 2.3 of \cite{bolley2005weighted}: For any two Borel probability measures $\mu$ and $\nu$ with finite second moments, one obtains
    \begin{equation} \label{Wassserstein_L_divergence_corollary} 
        W_2(\mu , \nu) \le C_{\nu} \left[ \sqrt{\text{KL}(\mu || \nu)} + \left(  \frac{\text{KL}(\mu || \nu)}{2} \right)^{1/4}  \right],
    \end{equation}
    where 
    \begin{equation} \label{constant_corollary_Wasserstein_KL_divergence}
      C_{\nu}:= 2 \inf_{\widetilde{\kappa} >0} \left(  \frac{1}{\widetilde{\kappa}} \left( \frac{3}{2} + \log \int_{\mathbb{R}^d}  e^{\widetilde{\kappa} | \theta |^2} \ \nu(\rmd \theta) \right) \right)^{1/2}.
    \end{equation}
   Let $\mu=\pi_{\beta}^{\text{fSGLD}}$ and $\nu= \pi_{\beta,\sigma}^{\star}$.  
    Using \eqref{relationship_between_different_invariant_measures}, we have
\begin{equation} \label{KL_divergence_pi_beta}
    \begin{split}
        \text{KL}(\pi_{\beta}^{\text{fSGLD}} | | \pi_{\beta,\sigma}^{\star}) 
      & =    \int_{\mathbb{R}^d}   \log \left( \frac{\pi_{\beta}^{\text{fSGLD}}( \rmd \theta)}{ \pi_{\beta,\sigma}^{\star}( \rmd \theta )}  \right) \ \pi_{\beta}^{\text{fSGLD}} ( \rmd \theta)
        \\ & = \int_{\mathbb{R}^d}    \log \left( Z^{-1}_{\beta} Z_{\beta, \sigma}  \exp(- \beta \   \mathbb{E} [\mathcal{R}(\theta, \epsilon)]  )  \right) \pi_{\beta}^{\text{fSGLD}} ( \rmd  \theta)
          \\ & = \log \left( \frac{Z_{\beta,  \sigma}}{  Z_{\beta}}  \right) - \beta   \int_{\mathbb{R}^d} \mathbb{E} [\mathcal{R}(\theta, \epsilon)] \  \pi_{\beta}^{\text{fSGLD}} ( \rmd \theta)  .
    \end{split}
\end{equation}
   We focus on the first term on the right-hand side of \eqref{KL_divergence_pi_beta}. We denote the complementary set of $\mathcal{K}$ in Assumption \ref{independence_assumption} by $\mathcal{K}^c$. Using \eqref{normalization_constant_invariant_measure} and \eqref{normalization_constant_target_measure}, one obtains
\begin{equation} \label{ratio_normalizing_constant_split_A_complentary_A}
    \begin{split}
    \log \left( \frac{Z_{\beta,  \sigma}}{  Z_{\beta}}  \right)  
      &  = \log \left( \frac{\int_{\mathcal{K}} e^{-\beta v(\theta)} \rmd \theta }{\int_{\mathcal{K}} e^{-\beta g_{\epsilon}(\theta)} \rmd \theta + \int_{\mathcal{K}^c} e^{-\beta g_{\epsilon}(\theta)} \rmd \theta} + \frac{ \int_{\mathcal{K}^c} e^{-\beta v(\theta)} \rmd \theta}{\int_{\mathcal{K}} e^{-\beta g_{\epsilon}(\theta)} \rmd \theta + \int_{\mathcal{K}^c} e^{-\beta g_{\epsilon}(\theta)} \rmd\theta} \right)
      \\ &  \le \log \left( \frac{\int_{\mathcal{K}} e^{-\beta g_{\epsilon}(\theta) + \beta \E [\mathcal{R}(\theta,\epsilon)]} \rmd \theta }{\int_{\mathcal{K}} e^{-\beta g_{\epsilon}(\theta)} \rmd\theta } + \frac{\int_{\mathcal{K}^c} e^{-\beta g_{\epsilon}(\theta) + \beta  \E [\mathcal{R}(\theta,\epsilon)]}\rmd \theta}{ \int_{\mathcal{K}^c} e^{-\beta g_{\epsilon}(\theta)}\rmd \theta}  \right)
         \\ & \le \log \left(  e^{\beta^{- \eta}  d^2 C_{\mathcal{K}}} + e^{\beta^{- \eta} d^2 C_{\vartriangle}}  \right),
    \end{split}
\end{equation}
where the second inequality holds by Assumption \ref{independence_assumption}, by \eqref{remainder_highest_order_term}, and by  $\sigma = \beta^{-\frac{1+\eta}{4}}$. 
Using \eqref{ratio_normalizing_constant_split_A_complentary_A} and \eqref{remainder_highest_order_term} in \eqref{KL_divergence_pi_beta} yields
\begin{equation} \label{KL_divergence_pi_beta_continuation}
    \begin{split}
        \text{KL}(\pi_{\beta}^{\text{fSGLD}} | | \pi_{\beta,\sigma}^{\star}) 
         & \le  \log \left(  e^{\beta^{- \eta}  d^2 C_{\mathcal{K}}} + e^{\beta^{- \eta} d^2 C_{\vartriangle}}  \right) + \beta \frac{ \int_{\mathbb{R}^d}   [ | \mathbb{E} [\mathcal{R}(\theta, \epsilon)] | 1_{\mathcal{K}} + | \mathbb{E} [\mathcal{R}(\theta, \epsilon)] | 1_{\mathcal{K}^c} ]    \  e^{- \beta g_{\epsilon}(\theta)}  \ \rmd \theta}{\int_{\mathbb{R}^d} e^{- \beta g_{\epsilon}(\theta)} \ \rmd \theta }
     \\  & \le \log \left(  e^{\beta^{- \eta}  d^2 C_{\mathcal{K}}} + e^{\beta^{- \eta} d^2 C_{\vartriangle}}  \right) +  \beta^{-\eta}   d^2 \left(  C_{\mathcal{K}} + C_{\vartriangle} \right)
     \\ & =:C_1.
    \end{split}
\end{equation}
Thus, $C_1= O( \beta^{-\eta} d^2)$.
We provide a bound on the constant $C_{\pi_{\beta,\sigma}^{\star}}$ in \eqref{Wassserstein_L_divergence_corollary} using the following inequality.
We denote the complementary set of $\mathfrak{B}$ in Lemma \ref{set_A_remark_dissipativity} by $\mathfrak{B}^c$. For $\theta \in \mathfrak{B}^c$ and $\widetilde{c} \in (0,1)$, we have, by Remark \ref{dissipativity_fSGLD_remark}, 
\begin{equation} \label{quadratic_lower_bound_dissipative_g_epsilon}
    \begin{split}
        g_{\epsilon}(\theta) & =  g_{\epsilon}(\widetilde{c} \theta) + \int_{\widetilde{c}}^1 \langle \theta, \nabla g_{\epsilon} (t \theta)\rangle \ \rmd t
        \\ &  \ge g_{\epsilon}( \theta_{g_{\epsilon}}^{\star}) + \int_{\widetilde{c}}^1 t^{-1} \langle t \theta, \nabla g_{\epsilon} (t \theta)\rangle \ \rmd t
        \\ &  \ge g_{\epsilon}( \theta_{g_{\epsilon}}^{\star})  + \int_{\widetilde{c}}^1 t^{-1} (a |t \theta|^2 - b)  \ \rmd t
         \\ &  \ge  \frac{a (1-\widetilde{c}^2)}{2} | \theta|^2 + b \log \widetilde{c} + g_{\epsilon}( \theta_{g_{\epsilon}}^{\star}) 
           \\ &  =  \bar{c} | \theta|^2 +  b \log \widetilde{c} + g_{\epsilon}( \theta_{g_{\epsilon}}^{\star}) ,
    \end{split}
\end{equation}
 where $\bar{c}:= \frac{a (1-\widetilde{c}^2)}{2}>0$. We use \eqref{Wassserstein_L_divergence_corollary} with $\widetilde{\kappa}=\beta \bar{c} - \gamma >0$, for $\gamma \in (0,1 \wedge \beta \bar{c})$. Let $c_{v}:= \min\limits_{\theta \in \mathfrak{B}} e^{-\beta v(\theta)}$. Using \eqref{quadratic_lower_bound_dissipative_g_epsilon}, the inequality $\log(x+y) \le \log(2)+ \max(\log(x),\log(y))$ for $x,y>0$, the bound \eqref{remainder_highest_order_term}, and $\sigma = \beta^{-\frac{1+\eta}{4}}$, we obtain 
\begin{equation} \label{constant_KL_divergence_bound}
    \begin{split}
  C^2_{\pi_{\beta,\sigma}^{\star}}
 &  \le  \frac{6}{\beta \bar{c} - \gamma }  + \frac{4}{\beta \bar{c} - \gamma}    \log \left(  \frac{\int_{\mathbb{R}^d}   e^{ (\beta \bar{c} - \gamma)| \theta |^2 - \beta   v(\theta)  }   \ \rmd \theta }{ \int_{\mathbb{R}^d}    e^{  - \beta   v(\theta)  }   \ \rmd \theta} \right)
 \\ & \le  \frac{6}{\beta \bar{c} - \gamma}  + \frac{4}{\beta \bar{c} - \gamma}   \log \left(  \frac{\int_{\mathfrak{B}}   e^{ (\beta \bar{c} - \gamma)| \theta |^2 - \beta   v(\theta)  }   \ \rmd \theta }{ \int_{\mathfrak{B}}    e^{  - \beta   v(\theta)  }   \ \rmd \theta}  + \frac{\int_{\mathfrak{B}^c}   e^{ (\beta \bar{c} - \gamma)| \theta |^2 - \beta   g_{\epsilon}(\theta) + \beta \E[\mathcal{R}(\theta,\epsilon)]  }   \ \rmd \theta}{ \int_{\mathfrak{B}}    e^{  - \beta   v(\theta)  }   \ \rmd \theta}\right)
 \\ &  \le  \frac{6}{\beta \bar{c} - \gamma}  + \frac{4}{\beta \bar{c} - \gamma}   \log \left(   e^{ \frac{(\beta \bar{c} - \gamma)b}{a}  }    + \frac{ e^{- \beta   b \log \widetilde{c} - \beta g_{\epsilon}( \theta_{g_{\epsilon}}^{\star})}  \int_{\mathfrak{B}^c}   e^{ - | \theta |^2  \gamma  + \beta \E[\mathcal{R}(\theta, \epsilon)] (1_{\mathcal{K}^c} +1_{\mathfrak{B}^c \setminus \mathcal{K}^c}  )}   \ \rmd \theta}{ c_{v} (\frac{b}{a})^{\frac{d}{2}} \frac{\pi^{d/2}}{\Gamma(d/2+1)} }\right)
  \\ &  \le  \frac{6}{\beta \bar{c} - \gamma}  + \frac{4}{\beta \bar{c} - \gamma}   \log \left(   e^{ \frac{(\beta \bar{c} - \gamma)b}{a}  }    + \frac{ e^{- \beta   b \log \widetilde{c} - \beta g_{\epsilon}( \theta_{g_{\epsilon}}^{\star}) + \beta \sigma^4 d^2 (C_{\vartriangle} + C_{\mathcal{K}})} a^{\frac{d}{2}}   \Gamma(\frac{d}{2}+1)  }{ c_{v}  \gamma^{\frac{d}{2}} b^{\frac{d}{2}}  }\right)
   \\ &  \le  \frac{6}{\beta \bar{c} - \gamma}   +  \frac{4 \log(2)}{\beta \bar{c} - \gamma}
    + 4 \max\Bigg(      \frac{b}{a}, \frac{\beta   b \log \frac{1}{\widetilde{c}}}{\beta \bar{c} - \gamma} - \frac{\beta g_{\epsilon}( \theta_{g_{\epsilon}}^{\star})}{\beta \bar{c} - \gamma} +  \frac{\beta^{-\eta} d^2 (C_{\vartriangle} + C_{\mathcal{K}})}{\beta \bar{c} - \gamma} +\frac{d \log(a)}{2 (\beta \bar{c} - \gamma)} 
   \\ & \qquad \qquad \qquad \qquad \qquad  \qquad \qquad  \quad  + \frac{\log( \Gamma(d/2+1))}{\beta \bar{c} - \gamma}-  \frac{\log (c_{v})}{\beta \bar{c} - \gamma} - \frac{d \log( \gamma b ) }{2(\beta \bar{c} - \gamma)}  \Bigg)
   \\ & := C_2.
    \end{split}
\end{equation}
By Stirling's formula, we have $\log (\Gamma(d/2+1)) =O(d \log d)$ 
and by \eqref{definition_dissipativity_constants}, we have $b$ is $O(1+ d\beta^{-\frac{1+\eta}{2}})$.
Therefore, $C_2 = O( 1+ \beta^{-\frac{1+\eta}{2}} d + d^2 \beta^{-(1+\eta)} +  \beta^{-1} d \log(d))$.
Applying \eqref{Wassserstein_L_divergence_corollary} with \eqref{KL_divergence_pi_beta_continuation} and  \eqref{constant_KL_divergence_bound}, we obtain
\begin{equation} \label{bound_square_Wasserstein_order_two_proof_proposition}
    \begin{split}
                 &  W_2( \pi_{\beta}^{\text{fSGLD}}, \pi_{\beta,\sigma}^{\star})  
                 \\ & \le C_2^{\frac{1}{2}}
 \left[ \left( \sqrt{ C_1} + 2^{-\frac{1}{4}} \left(  C_1 \right)^{1/4}  \right)
     \right]
     \\ &  = \Bigg[ \frac{6}{\beta \bar{c} - \gamma}   +  \frac{4 \log(2)}{\beta \bar{c} - \gamma}  + 4 \max\Bigg(      \frac{b}{a}, \frac{\beta   b \log \frac{1}{\widetilde{c}}}{\beta \bar{c} - \gamma} - \frac{\beta g_{\epsilon}( \theta_{g_{\epsilon}}^{\star})}{\beta \bar{c} - \gamma} +  \frac{\beta^{-\eta} d^2 (C_{\vartriangle} + C_{\mathcal{K}})}{\beta \bar{c} - \gamma} +\frac{d \log(a)}{2 (\beta \bar{c} - \gamma)} 
     \\ & \qquad \qquad \qquad \qquad \qquad \qquad \qquad \qquad  + \frac{\log( \Gamma(d/2+1))}{\beta \bar{c} - \gamma}-  \frac{\log (c_{v})}{\beta \bar{c} - \gamma} - \frac{d \log( \gamma b )}{2(\beta \bar{c} - \gamma)}   \Bigg) \Bigg]^{\frac{1}{2}}
     \\ & \quad \times \Bigg[ \sqrt{ \log \left(  e^{\beta^{- \eta}  d^2 C_{\mathcal{K}}} + e^{\beta^{- \eta} d^2 C_{\vartriangle}}  \right) +  \beta^{-\eta}   d^2 \left(  C_{\mathcal{K}} + C_{\vartriangle} \right) } 
     \\ & \qquad \quad   + 2^{- \frac{1}{4}} \left( \log \left(  e^{\beta^{- \eta}  d^2 C_{\mathcal{K}}} + e^{\beta^{- \eta} d^2 C_{\vartriangle}}  \right) +  \beta^{-\eta}   d^2 \left(  C_{\mathcal{K}} + C_{\vartriangle} \right) \right)^{\frac{1}{4}} \Bigg]
     \\ &   =: \underline{D},
    \end{split}
\end{equation}
where $\underline{D} = O( \beta^{- \frac{\eta}{4}} \sqrt{d} + \beta^{-\frac{\eta}{2}} d  + \beta^{- \frac{(1+\eta)}{2}} d^2)$. 
\end{proof}


\subsection{Proof of the results in Section \ref{section_Convergence_Guarantees_for_fSGLD}} \label{proofs_main_results_appendix}

We begin by presenting the framework behind this section. The `data' process $ (X_{k})_{k \in \mathbb{N}}$ in \eqref{eq:fSGLD} is adapted  to a given filtration  $(\mathcal{X}_k)_{k \in \mathbb{N}}$ representing the flow of past information, and we denote the sigma-algebra of  $ \cup_{k \in \mathbb{N}} \mathcal{X}_{k } $ by $\mathcal{X}_{\infty}$. In addition, we assume that $\theta_0$ , $\mathcal{X}_{\infty}$, $(\epsilon_{k})_{k \in \mathbb{N}}$,  and $(\xi_{k})_{k \in \mathbb{N}}$  are all independent among themselves.

We define 
\begin{equation} \label{expression_lambda_max}
    \lambda_{\text{max}} = \min \left \{ \frac{\min \{ a, a^{\frac{1}{3}}\}}{16 (1+L_1)^2(\E [(1+ \varphi(X_0))^4])^{1/2}}, \frac{1}{a} \right \},
\end{equation}
where $L_1$, $\varphi$ and $a$ are defined in Assumptions \ref{Lipschitz_assumption_full_gradient} and Remark \ref{dissipativity_fSGLD_remark}, respectively. 

We define the process $(Y_{t}^{\text{fSGLD}})_{t \ge 0}$ as the solution of the \textit{flatness} Langevin SDE 
\begin{equation} \label{flatness_Langevin_SDE}
\begin{split}
Y_0^{\text{fSGLD}} & = \theta_0 \in \mathbb{R}^d,
\\  \rmd Y_t^{\text{fSGLD}} & = -    \nabla g_{\epsilon} (Y_t^{\text{fSGLD}} ) \rmd t +  \sqrt{2 \beta^{-1}} \ \rmd B_t,
  \end{split}
\end{equation}
where $B_t$ is a standard $d$-dimensional Brownian motion. Denote by $(\mathcal{F}_t)_{t \ge 0}$ the natural filtration of $(B_t)_{t \ge 0}$ and by $\Sigma_{\theta_0}$ the sigma-algebra generated by $\theta_0$, and we assume that $(\mathcal{F}_t)_{t \ge 0}$ is independent of $ \mathcal{X}_{\infty}  \vee   \Sigma_{\theta_0}$. Furthermore, denote by $\mathcal{F}_{\infty} $  the sigma-algebra of $  \bigcup_{t \ge 0}  \mathcal{F}_t $. Since $\nabla g_{\epsilon}$ is Lipschitz-continuous as a consequence of Assumptions \ref{independence_assumption} and \ref{Lipschitz_assumption_full_gradient}, the flatness Langevin SDE \eqref{flatness_Langevin_SDE} has a unique solution adapted to $(\mathcal{F}_t)_{t \ge 0}$.

To facilitate the convergence analysis, we consider the time-rescaled version of the process \eqref{flatness_Langevin_SDE}.  For each $\lambda>0$, let $Y_{t}^{\lambda, \text{fSGLD}}:= Y^{\text{fSGLD}}_{\lambda t}$, $t \ge 0$ with 
\begin{equation} \label{flatness_Langevin_SDE_rescaled}
\begin{split}
Y_{0}^{\lambda, \text{fSGLD}} & = \theta_0
  \\  \rmd Y_{t}^{\lambda, \text{fSGLD}} & = - \lambda    \nabla g_{\epsilon}(Y_{t}^{\lambda, \text{fSGLD}}) \ \rmd t + \sqrt{2 \lambda \beta^{-1}} \ \rmd \widehat{B}^{\lambda}_{t},
    \end{split}
\end{equation}
where $\widehat{B}^{\lambda}_{t} := B_{\lambda t} / \sqrt{\lambda}$, $t \ge 0$ is a Brownian motion.
The natural filtration of $(\widehat{B}^{\lambda}_{t})_{t \ge 0}$ is denoted by $(\mathcal{F}^{\lambda}_t)_{t \ge 0}$ with $\mathcal{F}^{\lambda}_t := \mathcal{F}_{\lambda t}$, $t \in \mathbb{R}_{+}$ and is independent of $ \mathcal{X}_{\infty}  \vee   \Sigma_{\theta_0}$. For a positive real number $a$, we denote its integer part by $\lfloor a \rfloor$ and its ceiling by $\lceil a \rceil$. Then, we define the continuous-time interpolation of fSGLD algorithm \eqref{eq:fSGLD} as the process $(\bar{\theta}_t^{\text{fSGLD}})_{t \ge 0}$ solving
\begin{equation} \label{eq:fSGLD_cts_interpolation}
\begin{split}
	\bar{\theta}_0^{\mathrm{fSGLD}} & = \theta_0,
    \\ 
  \rmd \bar{\theta}_{t}^{\mathrm{fSGLD}}
  & =
    - \lambda  \nabla_{\theta}  U(\bar{\theta}_{\lfloor t \rfloor }^{\mathrm{fSGLD}} + \epsilon_{\lceil t \rceil }, X_{\lceil t \rceil })  \ \rmd t
    + \sqrt{2 \lambda \beta^{-1}} \rmd \widehat{B}^{\lambda}_{t}.
\end{split}
\end{equation}
For the convergence analysis, we work with the continuous-time interpolation \eqref{eq:fSGLD_cts_interpolation} instead of the discrete update \eqref{eq:fSGLD}, since both share the same law at the grid points, i.e.,
$\mathcal{L}(\bar{\theta}^{\text{fSGLD}}_k) = \mathcal{L}(\theta_k^{\text{fSGLD}})$ for all $k \in \mathbb{N}$. Moreover, we consider the following continuous-time process $(\Phi_t^{s,u, \lambda, \text{fSGLD}})_{t \ge s}$ defined as the solution of
\begin{equation} \label{continuous_time_process_started_interpolation_definition_before}
\begin{split}
\Phi_s^{s,u, \lambda , \text{fSGLD}} & = u \in \mathbb{R}^d
 \\   \rmd \Phi_t^{s,u, \lambda , \text{fSGLD}}  & = - \lambda   \nabla g_{\epsilon}(\Phi_t^{s,u, \lambda , \text{fSGLD}}) \ \rmd t + \sqrt{2 \lambda \beta^{-1}} \  \rmd \widehat{B}^{\lambda}_t.
    \end{split}
\end{equation}
     Fix $k \in \mathbb{N}$. For any $t \ge kT $, define the auxiliary process
    \begin{equation} \label{continuous_time_process_started_interpolation_definition}
       \bar{\Phi}_t^{\lambda,k , \text{fSGLD}} := \Phi_t^{kT, \bar{\theta}^{\text{fSGLD}}_{kT},\lambda , \text{fSGLD}}, \qquad \text{where} \quad   T:= \lfloor 1/\lambda \rfloor.
    \end{equation}
In other words, $\bar{\Phi}_t^{\lambda,k,\text{fSGLD} }$ in \eqref{continuous_time_process_started_interpolation_definition}  is a process started from the value of the continuous-time interpolation fSGLD process \eqref{eq:fSGLD_cts_interpolation} at
time $kT$ and run until time $t \ge kT$ with the continuous-time flatness Langevin dynamics.

 We use the following triangle inequality to establish the non-asymptotic bounds in Theorem \ref{main_result_Wasserstein_distance_order_one_target_measure} and its $W_2$ consequence (Corollary \ref{main_result_Wasserstein_distance_order_two_target_measure}):
\begin{equation} \label{final_upper_bound_w_1}
\begin{split}
   W_p(\mathcal{L}(\theta_k^{\text{fSGLD}}), \pi_{\beta,\sigma}^{\star})  & \le    W_p (\mathcal{L}(\bar{\theta}_t^{\text{fSGLD}}), \mathcal{L}(\bar{\Phi}_t^{\lambda,k , \text{fSGLD}} )) + W_p (\mathcal{L}(\bar{\Phi}_t^{\lambda,k , \text{fSGLD}} ), \mathcal{L}(Y_t^{\lambda, \text{fSGLD}  })) \\ & \quad + W_p( \mathcal{L}(Y_t^{\lambda , \text{fSGLD}}), \pi_{\beta}^{\text{fSGLD}}) + W_p( \pi_{\beta}^{\text{fSGLD}}, \pi_{\beta,\sigma}^{\star}), \qquad \text{with} \quad  p \in \{1,2 \}.
   \end{split}
\end{equation}
We control the four terms on the right-hand side of \eqref{final_upper_bound_w_1} separately. The bounds for the first three terms follow the general methodology of \cite{chau2021stochastic,zhang2023nonasymptotic}, with Assumption 1 in \cite{zhang2023nonasymptotic} replaced by Assumption \ref{independence_assumption}, and using the bounds for the linear growth and the dissipativity of $\nabla g_{\epsilon}$ established in Remarks \ref{linear_growth_remark} and \ref{dissipativity_fSGLD_remark}, respectively. For completeness, we reproduce these proofs here to make the theoretical analysis of fSGLD self-contained. The bound for the fourth term on the right-hand side of \eqref{final_upper_bound_w_1} is provided in Proposition \ref{Wasserstein_KL_convergence_two_measures_theorem}.

\subsubsection{Bound for $W_2 (\mathcal{L}(\bar{\theta}_t^{\text{fSGLD}}), \mathcal{L}(\bar{\Phi}_t^{\lambda,k , \text{fSGLD}} ))$}

We will use the results stated in Lemma \ref{moment_bound_cts_time_interpolation_fSGLD} and Lemma \ref{one_step_error_lemma} to derive the bound for the first term on the right-hand side of \eqref{final_upper_bound_w_1} in Lemma \ref{first_upper_bound_lemma}.

\begin{lemma}[Moment bounds of \eqref{eq:fSGLD_cts_interpolation}]\label{moment_bound_cts_time_interpolation_fSGLD}
Let Assumptions \ref{independence_assumption}, \ref{Lipschitz_assumption_full_gradient} and \ref{dissipativity_assumption_full_gradient} hold. For any $0 < \lambda \le \lambda_{\text{max}} $ given in \eqref{expression_lambda_max}, one obtains
\begin{equation*}
\sup_{t>0} \E [ | \bar{\theta}^{fSGLD}_t |^2] \le  \E [|\theta_0|^2] + \alpha_1 (\lambda_{\text{max}} + a^{-1}) < \infty,
\end{equation*}
where 
\begin{equation} \label{first_second_constants_fourth_moment_iterate_lemma}
\begin{split}
    \alpha_1 & :=    4 \lambda_{\max} L_1^2 \mathbb{E}\left[\varphi^2(X_0)\right] \sigma^2 d + 4 \lambda_{\max}L_2^2\mathbb{E}\left[\bar{\varphi}^2(X_0)\right] +4\lambda_{\max} |\nabla U(0,0)|^2 +  2 b + 2 d \beta^{-1} 
    \\ & = O\left( d(\sigma^2 + \beta^{-1}) +1 \right),
    \end{split}
\end{equation}
with  $a$ and $b$ given in Remark \ref{dissipativity_fSGLD_remark}.
  In addition, one obtains for $ t \in (k, k+1]$ with  $k \in \mathbb{N}$,
\begin{equation*}
     \E \left[| \bar{\theta}^{fSGLD}_t |^4  \right]   \le     (1 -  a \lambda (t-k))  (1 -  a \lambda)^k  \ \E [| \bar{\theta}^{fSGLD}_{0} |^4 + \alpha_3 (\lambda_{\text{max}}+ a^{-1}),
\end{equation*}
where
\begin{equation} \label{constants_fourth_moment_iterate_lemma}
    \begin{split}
         	M &:= \max\{(8ba^{-1}+96a^{-1}\lambda_{\max}( \sigma^2 d L_1^2  \mathbb{E}\left[\varphi^2(X_0)\right] +   L_2^2\mathbb{E}\left[\bar{\varphi}^2(X_0)\right]  + |\nabla_{\theta} U(0,0)|^2))^{1/2},\\
		 &\qquad \qquad  (256a^{-1}\lambda_{\max}^2(L_2^3\mathbb{E}\left[\bar{\varphi}^3(X_0)\right] +  |\nabla_{\theta} U(0,0)|^3 ))^{1/3}\}
         \\ & = O(\sigma \sqrt{d}(\sigma \sqrt{d} \vee 1) +1),
     \\  \alpha_2 	&:= 4bM^2 + 304(1+\lambda_{\max})^3 \\
&\quad \times\left( 8(1+L_1)^4  (1+ d(d+2)\sigma^4)+(1+L_2)^4\mathbb{E}\left[(1+\bar{\varphi}(X_0))^4\right] + [(1+ |\nabla_{\theta} U(0,0)|)^4] \right)(1+M)^2
\\ & = O(\sigma^4 d(\sigma^2 d \vee 1) +1),
 \\   \alpha_3 & := (1+ a  \lambda_{\text{max}})\alpha_2 + 12 d^2 \beta^{-2}(\lambda_{\text{max}} + 9 a^{-1})
 \\ & =O(\sigma^4 d(\sigma^2 d \vee 1) + d^2 \beta^{-2} +1).
    \end{split}
\end{equation}
In particular, this implies $ \sup\limits_{t>0} \E [|\bar{\theta}^{fSGLD}_t |^4] < \infty$.
\end{lemma}
\begin{proof}
   This follows along the same lines as Lemma 4.2 of \cite{zhang2023nonasymptotic} under our own Assumptions \ref{independence_assumption}, \ref{Lipschitz_assumption_full_gradient}, and \ref{dissipativity_assumption_full_gradient}, and using the estimates in Remark \ref{linear_growth_remark} and \ref{dissipativity_fSGLD_remark}.     
\end{proof}


We define, for each $p \ge 1$, the Lyapunov function $\widetilde{V}_p$ by $\widetilde{V}_p(\theta):= (1+ |\theta|^2)^{p/2}$, $\theta \in \mathbb{R}^d$, and similarly $\widetilde{v}_p(\omega) := (1+ \omega^2)^{p/2}$, for any real $\omega \ge 0$. We establish a drift condition for the flatness Langevin SDE \eqref{flatness_Langevin_SDE}, which will be instrumental in deriving moment bounds for the continuous-time process $\bar{\Phi}^{\lambda,k , \  \text{fSGLD}}_t$ in Lemma \ref{lemma_second_fourth_moment_Lyapunov_zeta_function}.
\begin{lemma} \label{drift_condition_flatness_GLD_lemma}
    Let Assumptions \ref{independence_assumption} and \ref{dissipativity_assumption_full_gradient} hold. Then, for each $p\ge 2$, $\theta \in \mathbb{R}^d$, 
    \begin{equation*}
        \Delta \widetilde{V}_p(\theta) \beta^{-1} - \langle  \nabla g_{\epsilon}(\theta),  \nabla \widetilde{V}_p(\theta) \rangle  \le - \bar{\alpha}(p) \widetilde{V}_p(\theta) + \tilde{\alpha}(p),
    \end{equation*}
    where $\bar{\alpha}(p):= ap/4$ and $\tilde{\alpha}(p):= (3/4) ap \ \widetilde{v}_p(\bar{M}_p)$ with $\bar{M}_p:= (1/3+4b/(3a)+4d/(3 a \beta) + 4(p-2)/(3a\beta))^{1/2}$.
\end{lemma}
\begin{proof}
This follows by replacing the Langevin SDE \eqref{Langevin_SDE_equation} with the \emph{flatness} Langevin SDE \eqref{flatness_Langevin_SDE} and using Remark~\ref{dissipativity_fSGLD_remark} in the arguments of the proof of Lemma 3.5 of \cite{chau2021stochastic}.
\end{proof}


\begin{lemma}\label{lem:useful-conditional-expectation}
Let $\mathcal{F},  \mathcal{X},\mathcal{H}\subset\mathcal{M}$ be sigma-algebras. Let $X,Y$ be $\mathbb{R}^d$-valued random vectors in $L^2(\Omega)$ such that $Y$ is measurable with respect to $\mathcal{F} \vee \mathcal{X} \vee \mathcal{H}$. Then,
\begin{equation*}
    \E^{1/2}\left[\left.\left|X-\E[X|\mathcal{\mathcal{F} \vee \mathcal{X} \vee \mathcal{H}}]\right|^2\right|\mathcal{X} \vee \mathcal{H} \right]\leq 2\E^{1/2}\left[\left.\left|X-Y\right|^2\right|\mathcal{X} \vee \mathcal{H} \right].
\end{equation*}
\end{lemma}
\begin{proof} This follows by applying Lemma~{6.1} of \cite{chau2019fixed} to $\mathcal{F} \vee \mathcal{N}$, where the sigma-algebra $\mathcal{N}: = \mathcal{X} \vee \mathcal{H}$.
\end{proof}

\begin{lemma} \label{additional_lemma_stochastic_full_gradient}
Let Assumptions \ref{independence_assumption}, \ref{Lipschitz_assumption_full_gradient} and \ref{dissipativity_assumption_full_gradient} hold. For any $t \in (kT, (k+1)T]$, with $k, N \in \mathbb{N}$ and $n=1, \dots , N+1 $, where $N+1 \le T$, one obtains
\begin{equation*}
\begin{split}
   \E [ | \nabla g_{\epsilon}(\bar{\Phi}_t^{\lambda,k , \text{fSGLD}}) - \nabla_{\theta} U(\bar{\Phi}_t^{\lambda,k , \text{fSGLD}} + \epsilon_{kT+n}, X_{kT+n}) |^2] \le e^{- a \lambda t /2 } \bar{\psi}_Z \E [\widetilde{V}_2(\theta_0)] + \widetilde{\psi}_Z,
  \end{split}
\end{equation*}
where
\begin{equation} \label{phi_bar_sigma_tilde_constants_full_stochastic_gradient_difference}
    \begin{split}
     \bar{\psi}_Z  & = 16 L_2^2 \E [(\varphi(X_0)+\varphi(\E [X_0]))^2|X_0 - \E [X_0]|^2]
     \\ & =O(1),
\\    \widetilde{\psi}_Z  & = 16 L_2^2  \E [(\varphi(X_0)+\varphi(\E [X_0]))^2|X_0 - \E [X_0]|^2] (3 \widetilde{v}_2(\bar{M}_2) + \alpha_1 (\lambda_{\text{max}}+a^{-1})+1 ) + 8 L_1^2 \E [\varphi^2 (X_0)] \sigma^2 d
 \\ & = O\left(d(\sigma^2 + \beta^{-1}) +1 \right),
    \end{split}
\end{equation}
with $\bar{M}_2$ and $\alpha_1$  given in Lemma \ref{drift_condition_flatness_GLD_lemma} and Lemma \ref{moment_bound_cts_time_interpolation_fSGLD}, respectively.
\end{lemma}
\begin{proof}
 We adapt the proof of  Lemma A.1 of \cite{zhang2023nonasymptotic} to the process \eqref{continuous_time_process_started_interpolation_definition} and our framework. The perturbation $(\epsilon_k)_{k \in \mathbb{N}}$ in \eqref{eq:fSGLD} is adapted to a given filtration $(\mathcal{H}_{k})_{k \in \mathbb{N}}$, and we denote the sigma-algebra of  $ \cup_{k \in \mathbb{N}} \mathcal{H}_{k } $ by $\mathcal{H}_{\infty}$. Then, we define the filtration  $\mathcal{J}_t = \mathcal{F}^{\lambda}_{\infty} \vee \mathcal{X}_{\lfloor t \rfloor} \vee \mathcal{H}_{\lfloor t \rfloor}$. Then, the result follows by an application of Lemma \ref{lem:useful-conditional-expectation}, Assumption \ref{Lipschitz_assumption_full_gradient}, and  Lemma \ref{lemma_second_fourth_moment_Lyapunov_zeta_function}
\begin{equation*}
\begin{split}
&\E\left[\left| \nabla g_{\epsilon}(\bar{\Phi}_t^{\lambda,k , \text{fSGLD}}) - \nabla_{\theta} U(\bar{\Phi}_t^{\lambda,k , \text{fSGLD}} + \epsilon_{kT+n}, X_{kT+n})  \right|^2\right] \\
& =\E\left[\E\left[\left.\left|  \nabla g_{\epsilon}(\bar{\Phi}_t^{\lambda,k , \text{fSGLD}}) - \nabla_{\theta} U(\bar{\Phi}_t^{\lambda,k , \text{fSGLD}} + \epsilon_{kT+n}, X_{kT+n}) \right|^2\right|\mathcal{J}_{kT}\right]\right] \\
&=\E\Bigg[\E\Bigg[\Bigg.\Bigg|\E\Bigg[\Bigg.   \nabla_{\theta} U(\bar{\Phi}_t^{\lambda,k , \text{fSGLD}} + \epsilon_{kT+n}, X_{kT+n}) \Bigg|\mathcal{J}_{kT}\Bigg]  \\ & \qquad \qquad \qquad  - \nabla_{\theta} U(\bar{\Phi}_t^{\lambda,k , \text{fSGLD}} + \epsilon_{kT+n}, X_{kT+n}) \Bigg|^2\Bigg|\mathcal{J}_{kT}\Bigg]\Bigg] \\
&\leq 4\E\Bigg[\E\Bigg[\Bigg.\Bigg| 
\nabla_{\theta} U(\bar{\Phi}_t^{\lambda,k , \text{fSGLD}} + \epsilon_{kT+n}, X_{kT+n}) 
\\ & \qquad \qquad \qquad  - \nabla_{\theta} U(\bar{\Phi}_t^{\lambda,k , \text{fSGLD}} + \E\left[\left.\epsilon_{kT+n} \right|\mathcal{J}_{kT}\right], \E\left[\left. X_{kT+n}\right|\mathcal{J}_{kT}\right])\Bigg|^2\Bigg|\mathcal{J}_{kT} \Bigg]\Bigg] \\
& \leq 8 L_1^2 \E [\varphi^2 (X_0)] \sigma^2 d
 + 8 L_2^2 \E \left[(\varphi(X_0)+\varphi(\E[X_0]))^2|X_0 - \E[X_0]|^2 \right]   \E\left[\left(1+ \left|\bar{\Phi}_t^{\lambda,k , \text{fSGLD}}  \right|^2 \right)\right]
\\ &  \leq   8 L_1^2 \E [\varphi^2 (X_0)] \sigma^2 d
 +  16 L_2^2  \E \left[(\varphi(X_0)+\varphi(\E[X_0]))^2|X_0 - \E[X_0]|^2 \right] 
\\ & \qquad \times \left(e^{   - \lambda t a/2 }   \E [\widetilde{V}_2(\theta_0)] + \alpha_1 (\lambda_{\text{max}} + a^{-1})   + 3 \widetilde{v}_2(\bar{M}_2)  +1 \right).
\end{split}
\end{equation*}
\end{proof}

\begin{lemma} \label{one_step_error_lemma}
Let Assumptions \ref{independence_assumption}, \ref{Lipschitz_assumption_full_gradient} and \ref{dissipativity_assumption_full_gradient} hold, and let $\lambda_{\text{max}}$ be given in \eqref{expression_lambda_max}. Then, for any $t>0$, 
\begin{equation*}
    \E \left[| \bar{\theta}_{\lfloor t \rfloor }^{\text{fSGLD}} - \bar{\theta}_{t}^{\text{fSGLD}}|^2  \right] \le  \lambda  \left[ e^{ -\lambda a \lfloor t \rfloor } \bar{\psi}_Y \E [\widetilde{V}_2(\theta_0)] + \widetilde{\psi}_Y  \right],
\end{equation*}
where
\begin{equation} \label{sigma_bar_sigma_tilde_constants_one_step_error}
    \begin{split}
        \bar{\psi}_Y & := 4 \lambda_{\text{max}} L_1^2 \E[\varphi^2(X_0)]
        \\ & = O(1),
        \\ \widetilde{\psi}_Y & :=  4  \alpha_1 L_1^2 \lambda_{\text{max}} \E[\varphi^2(X_0)](\lambda_{\text{max}} + a^{-1})  + 4 \lambda_{\text{max}}( L_1^2 \sigma^2 d \E[\varphi^2(X_0)] + L_2^2 \E[\bar{\varphi}^2(X_0)] + |\nabla_{\theta}U(0,0)|^2 )   + 2 d \beta^{-1}
         \\ & = O\left(1+d(\sigma^2 + \beta^{-1})\right),
    \end{split}
\end{equation}
with $\alpha_1$ given in Lemma \ref{moment_bound_cts_time_interpolation_fSGLD}.
\end{lemma}
\begin{proof}
This follows by applying Remark \ref{linear_growth_remark} and Lemma \ref{moment_bound_cts_time_interpolation_fSGLD} in the proof of Lemma A.2 of \cite{zhang2023nonasymptotic}. 
\end{proof}

We now proceed to bound the first term in \eqref{final_upper_bound_w_1}.
\begin{lemma} \label{first_upper_bound_lemma}
    Let Assumptions \ref{independence_assumption}, \ref{Lipschitz_assumption_full_gradient}, and \ref{dissipativity_assumption_full_gradient} hold. For any $0<\lambda < \lambda_{\text{max}}$ given in \eqref{expression_lambda_max}, $t \in (kT, (k+1)T]$,
    \begin{equation*}
        W_2 (\mathcal{L}(\bar{\theta}_t^{\text{fSGLD}}), \mathcal{L}(\bar{\Phi}_t^{\lambda,k , \text{fSGLD}} )) \le  \sqrt{\lambda} \left(  e^{-  a   k /4  } \bar{D}_{2,1}  \E [\widetilde{V}_2(\theta_0)] + \bar{D}_{2,2}   \right)^{1/2},
    \end{equation*}
    where  \begin{equation} \label{constant_first_inequality_split}
    \begin{split}
        \bar{D}_{2,1}  & := 4 e^{4 L_1^2 \E [\varphi^2(X_0)]} (L_1^2 \E [\varphi^2(X_0)]  \bar{\psi}_Y + \bar{\psi}_Z )
        \\ & = O(1),
        \\  \bar{D}_{2,2} & := 4 e^{4 L_1^2 \E [\varphi^2(X_0)]} (L_1^2 \E [\varphi^2(X_0)]\widetilde{\psi}_Y + \widetilde{\psi}_Z)
        \\ & =  O\left(d(\sigma^2 + \beta^{-1}) +1 \right),
        \end{split} 
    \end{equation}
    with $\bar{\psi}_Y$, $\widetilde{\psi}_Y $ given  in \eqref{sigma_bar_sigma_tilde_constants_one_step_error}, and $\bar{\psi}_Z$, $\widetilde{\psi}_Z $ given  in \eqref{phi_bar_sigma_tilde_constants_full_stochastic_gradient_difference}.
\end{lemma}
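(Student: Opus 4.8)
The plan is to carry out a standard one-block discretization-error estimate, adapting the one-step error analysis of \citet{zhang2023nonasymptotic} to the flatness Langevin SDE \ref{flatness_Langevin_SDE} and the continuous-time interpolation \ref{eq:fSGLD_cts_interpolation}. By Definition \ref{continuous_time_process_started_interpolation_definition}, both $\bar{\theta}_t^{\text{fSGLD}}$ and $\bar{\Phi}_t^{\lambda,k,\text{fSGLD}}$ agree at time $kT$ and are driven by the same rescaled Brownian motion $\widetilde{B}^{\lambda}$, so the diffusion parts cancel and the difference $e_t := \bar{\theta}_t^{\text{fSGLD}} - \bar{\Phi}_t^{\lambda,k,\text{fSGLD}}$, for $t \in (kT,(k+1)T]$, solves the random ODE
\begin{equation*}
\dot{e}_t = -\lambda\bigl( \nabla_{\theta} U(\bar{\theta}_{\lfloor t \rfloor}^{\text{fSGLD}} + \epsilon_{\lceil t \rceil}, X_{\lceil t \rceil}) - \nabla g_{\epsilon}(\bar{\Phi}_t^{\lambda,k,\text{fSGLD}}) \bigr), \qquad e_{kT} = 0.
\end{equation*}

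First I would split the drift difference into the Lipschitz part $\nabla g_{\epsilon}(\bar{\Phi}_t^{\lambda,k,\text{fSGLD}}) - \nabla g_{\epsilon}(\bar{\theta}_{\lfloor t \rfloor}^{\text{fSGLD}})$, controlled via \ref{linear_growth_gradient_g_epsilon} (so $\nabla g_{\epsilon}$ is $L_1\E[\varphi(X_0)]$-Lipschitz), plus the mean-zero term $\nabla g_{\epsilon}(\bar{\theta}_{\lfloor t \rfloor}^{\text{fSGLD}}) - \nabla_{\theta} U(\bar{\theta}_{\lfloor t \rfloor}^{\text{fSGLD}} + \epsilon_{\lceil t \rceil}, X_{\lceil t \rceil})$, whose conditional expectation given $\mathcal{X}_{\lfloor t \rfloor}$, the $\epsilon$- and $\xi$-histories and $\Sigma_{\theta_0}$ vanishes by Assumption \ref{independence_assumption} together with \ref{gradient_J_theta_stochastic_gradient}. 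Writing $\tfrac{d}{dt}|e_t|^2 = 2\langle e_t,\dot e_t\rangle$, taking expectations, applying Young's inequality to the Lipschitz term and the tower property to the martingale increment (so it enters only through its conditional second moment), and integrating over one block, Grönwall's lemma gives $\E[|e_t|^2] \le \lambda^2\, e^{4L_1^2\E[\varphi^2(X_0)]\lambda T}\,\times(\text{second-moment terms})$; since $0<\lambda T\le 1$ this is where the prefactor $4e^{4L_1^2\E[\varphi^2(X_0)]}$ in \ref{constant_first_inequality_split} comes from.

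The residual second-moment terms reduce to $\E[\widetilde{V}_2(\bar{\theta}_{kT}^{\text{fSGLD}})]$, a block-supremum of $\E[\widetilde{V}_2(\bar{\Phi}_t^{\lambda,k,\text{fSGLD}})]$, and the linear-growth constants of Remark \ref{linear_growth_remark}. These are bounded by the Lyapunov drift estimates of Lemma \ref{drift_condition_flatness_GLD_lemma}, applied both to fSGLD and to the flatness Langevin SDE: iterating the drift inequality over the $k$ preceding blocks of length $T\simeq 1/\lambda$ splits $\E[\widetilde{V}_2(\bar{\theta}_{kT}^{\text{fSGLD}})]$ into a geometrically decaying part $\propto e^{-ak/4}\E[\widetilde{V}_2(\theta_0)]$ (the rate being governed by the dissipativity constant $a$ of Remark \ref{dissipativity_fSGLD_remark}) and a part that is uniform in $\beta$ and $\lambda$. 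Collecting the decaying and stationary contributions into $\bar{\psi}_Y,\bar{\psi}_Z$ and $\widetilde{\psi}_Y,\widetilde{\psi}_Z$, factoring out the $\lambda$ in front of $|e_t|^2$, and taking square roots yields exactly $\sqrt{\lambda}\,(e^{-ak/4}\bar{D}_{2,1}\E[\widetilde{V}_2(\theta_0)] + \bar{D}_{2,2})^{1/2}$. Since each step mirrors \citet{zhang2023nonasymptotic} once its Assumptions 1--3 are replaced by Assumptions \ref{independence_assumption}--\ref{dissipativity_assumption_full_gradient} and $\nabla u$ by $\nabla g_{\epsilon}$, much of the argument can be quoted.

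The main obstacle is that $\nabla_{\theta} U(\cdot+\epsilon,X)$ is only \emph{locally} Lipschitz with an $X$-dependent weight $\varphi(X)$ and grows linearly rather than being globally Lipschitz, so the mean-zero term cannot be dominated by a deterministic constant; one must propagate the random weights through the estimates and bound moments such as $\E[|(1+|X_0|)\varphi(X_0)|^4]$ jointly with fourth moments of the iterates (hence Lemma \ref{drift_condition_flatness_GLD_lemma} is needed at $p=4$, not merely $p=2$), while keeping all constants uniform in $\beta$ so that the extra Gaussian perturbation $\epsilon$ — which enters only the dissipativity constant $b$ in \ref{definition_dissipativity_constants} through the harmless $\sigma^2 d$ terms with $\sigma\in(0,1)$ — does not degrade the bound. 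This uniform moment bookkeeping, rather than any single inequality, is the delicate part.
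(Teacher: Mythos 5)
Your proposal follows essentially the same route as the paper's proof: synchronous coupling so the noise cancels, splitting the drift error into a Lipschitz part (handled via the one-step error Lemma \ref{one_step_error_lemma} and Gr\"onwall) and a conditionally centered stochastic-gradient term whose cross terms vanish across unit sub-intervals so that only $O(T)$ variance accumulates, with the geometric factor $e^{-ak/4}$ coming from the dissipativity-driven moment bounds; your choice to center the mean-zero term at $\bar{\theta}_{\lfloor t\rfloor}^{\text{fSGLD}}$ rather than at $\bar{\Phi}_t^{\lambda,k,\text{fSGLD}}$ is an immaterial variant. The only slip is the intermediate claim $\E[|e_t|^2]\le\lambda^2(\cdots)$, which should read $\lambda(\cdots)$ (one power of $\lambda$ is absorbed by the block length $T\simeq 1/\lambda$), but your final conclusion correctly recovers the $\sqrt{\lambda}$ rate.
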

\begin{proof}
This follows by applying Lemma \ref{one_step_error_lemma} together with the argument used in the proof of Lemma 4.7 of \cite{zhang2023nonasymptotic}. We summarize the main steps in the following.
Using \eqref{eq:fSGLD_cts_interpolation},  the process \eqref{continuous_time_process_started_interpolation_definition}, Remark~\ref{linear_growth_remark}, and it follows that for any $t  \in (kT, (k+1)T]$,
\begin{equation} \label{proof_ErrorSplit_absolute_value}
\begin{split}
 \left| \bar{\Phi}_t^{\lambda,k , \text{fSGLD}} - \bar{\theta}_t^{\text{fSGLD}} \right|
&\leq \lambda \left| \int_{kT}^t \left[ \nabla_{\theta}  U(\bar{\theta}_{\lfloor s \rfloor }^{\mathrm{fSGLD}} + \epsilon_{\lceil s \rceil }, X_{\lceil s \rceil })  - \nabla_{\theta}  U(\bar{\Phi}_s^{\lambda,k , \text{fSGLD}} + \epsilon_{\lceil s \rceil } ,X_{\lceil s \rceil})\right] \rmd s \right| \\
&\quad+ \lambda \left| \int_{kT}^t \left[ \nabla g_{\epsilon}(\bar{\Phi}_s^{\lambda,k , \text{fSGLD}})  - \nabla_{\theta}  U(\bar{\Phi}_s^{\lambda,k , \text{fSGLD}} + \epsilon_{\lceil s \rceil } ,X_{\lceil s \rceil}) \right] \rmd s \right|\\
&\leq \lambda L_1 \int_{kT}^t \varphi(X_{ \lceil s \rceil }) \left| \bar{\theta}_{\lfloor s \rfloor }^{\mathrm{fSGLD}}  - \bar{\Phi}_s^{\lambda,k , \text{fSGLD}}  \right| \rmd s   \\ & \qquad + \lambda \left| \int_{kT}^t \left[ \nabla g_{\epsilon}(\bar{\Phi}_s^{\lambda,k , \text{fSGLD}})  - \nabla_{\theta}  U(\bar{\Phi}_s^{\lambda,k , \text{fSGLD}} + \epsilon_{\lceil s \rceil } ,X_{\lceil s \rceil}) \right] \rmd s \right|.
\end{split}
\end{equation}
Squaring both sides of \eqref{proof_ErrorSplit_absolute_value} and taking expectations, we obtain  using $\lambda T \leq 1$ and Lemma \ref{one_step_error_lemma}
\begin{equation}\label{proof:ErrorSplit}
\begin{split}
 \E\left[\left| \bar{\Phi}_t^{\lambda,k , \text{fSGLD}} - \bar{\theta}_t^{\text{fSGLD}} \right|^2\right]
 &\leq 4 \lambda L_1^2 \E\left[\varphi^2(X_0)\right] \int_{kT}^t \E\left[\left| \bar{\theta}_{\lfloor s \rfloor }^{\mathrm{fSGLD}}  - \bar{\theta}_{ s }^{\mathrm{fSGLD}}  \right|^2\right] \rmd s \\
&\quad + 4 \lambda  L_1^2 \E\left[\varphi^2(X_0)\right] \int_{kT}^t \E\left[\left| \bar{\theta}_{ s  }^{\mathrm{fSGLD}}  - \bar{\Phi}_s^{\lambda,k , \text{fSGLD}} \right|^2\right] \rmd s \\
&\quad+ 2 \lambda^2 \E\left[\left| \int_{kT}^t \left[ \nabla g_{\epsilon}(\bar{\Phi}_s^{\lambda,k , \text{fSGLD}})  - \nabla_{\theta}  U(\bar{\Phi}_s^{\lambda,k , \text{fSGLD}} + \epsilon_{\lceil s \rceil } ,X_{\lceil s \rceil}) \right] \rmd s \right|^2\right] \\
&\leq 4 \lambda L_1^2  \E\left[\varphi^2(X_0)\right]  (e^{ -\lambda a  k T  } \bar{\psi}_Y \E [\widetilde{V}_2(\theta_0)] + \widetilde{\psi}_Y  ) \\
&\quad + 4 \lambda L_1^2 \E\left[\varphi^2(X_0)\right] \int_{kT}^t \E\left[\left| \bar{\theta}_{ s  }^{\mathrm{fSGLD}}  - \bar{\Phi}_s^{\lambda,k , \text{fSGLD}} \right|^2\right] \rmd s  \\
&\quad+ 2 \lambda^2 \E\left[\left| \int_{kT}^t \left[ \nabla g_{\epsilon}(\bar{\Phi}_s^{\lambda,k , \text{fSGLD}})  - \nabla_{\theta}  U(\bar{\Phi}_s^{\lambda,k , \text{fSGLD}} + \epsilon_{\lceil s \rceil } ,X_{\lceil s \rceil}) \right] \rmd s \right|^2\right].
\end{split}
\end{equation}
We now bound the last term on the right-hand side of \eqref{proof:ErrorSplit} by splitting the final integral. Let $kT + N < t \leq kT + N + 1$ with $N + 1 \leq T, N \in \mathbb{N}$. It follows that
\begin{equation*}
\begin{split}
& 2 \lambda^2 \left| \int_{kT}^t \left[ \nabla g_{\epsilon}(\bar{\Phi}_s^{\lambda,k , \text{fSGLD}})  - \nabla_{\theta}  U(\bar{\Phi}_s^{\lambda,k , \text{fSGLD}} + \epsilon_{\lceil s \rceil } ,X_{\lceil s \rceil}) \right] \rmd s \right|^2
\\ & = 2 \lambda^2 \sum_{n=1}^{N}  \left|  \int_{kT + (n-1)}^{kT + n} [\nabla g_{\epsilon}(\bar{\Phi}_s^{\lambda,k , \text{fSGLD}})  - \nabla_{\theta}  U(\bar{\Phi}_s^{\lambda,k , \text{fSGLD}} + \epsilon_{kT +n} ,X_{kT +n}) ]  \rmd s \right|^2 
\\ & \quad +  4 \lambda^2 \sum_{n=2}^N \sum_{j = 1}^{n-1} \left\langle \int_{kT + (n-1)}^{kT + n} [\nabla g_{\epsilon}(\bar{\Phi}_s^{\lambda,k , \text{fSGLD}})  - \nabla_{\theta}  U(\bar{\Phi}_s^{\lambda,k , \text{fSGLD}} + \epsilon_{kT +n} ,X_{kT +n})] \rmd s, \right.\\
&\hspace{5em} \left.\int_{kT + (j-1)}^{kT + j} [\nabla g_{\epsilon}(\bar{\Phi}_s^{\lambda,k , \text{fSGLD}})  - \nabla_{\theta}  U(\bar{\Phi}_s^{\lambda,k , \text{fSGLD}} + \epsilon_{kT +j} ,X_{kT +j})] \rmd s \right\rangle
\\ & \quad + 4 \lambda^2 \sum_{n=1}^N \left\langle \int_{kT + (n-1)}^{kT + n} [\nabla g_{\epsilon}(\bar{\Phi}_s^{\lambda,k , \text{fSGLD}})  - \nabla_{\theta}  U(\bar{\Phi}_s^{\lambda,k , \text{fSGLD}} + \epsilon_{kT +n} ,X_{kT +n})] \rmd s, \right.\\
&\hspace{5em}  \left. \int_{kT+N}^{t} [\nabla g_{\epsilon}(\bar{\Phi}_s^{\lambda,k , \text{fSGLD}})  - \nabla_{\theta}  U(\bar{\Phi}_s^{\lambda,k , \text{fSGLD}} + \epsilon_{kT +N+1} ,X_{kT +N+1})] \rmd s  \right\rangle
\\ & \quad + 2 \lambda^2 \left| \int_{kT+N}^{t} [\nabla g_{\epsilon}(\bar{\Phi}_s^{\lambda,k , \text{fSGLD}})  - \nabla_{\theta}  U(\bar{\Phi}_s^{\lambda,k , \text{fSGLD}} + \epsilon_{kT +N+1} ,X_{kT +N+1})] \rmd s \right|^2,
\end{split}
\end{equation*}
Let the filtration $\mathcal{J}_t $ 
be as in the proof of Lemma \ref{additional_lemma_stochastic_full_gradient}.
Observe that for any $n =2, \dots, N$, $j = 1, \dots, n-1$,
\begin{equation*}
\begin{split}
&  4 \lambda^2 \E \Bigg[ \left\langle \int_{kT + (n-1)}^{kT + n} [\nabla g_{\epsilon}(\bar{\Phi}_s^{\lambda,k , \text{fSGLD}})  - \nabla_{\theta}  U(\bar{\Phi}_s^{\lambda,k , \text{fSGLD}} + \epsilon_{kT +n} ,X_{kT +n})] \rmd s, \right.\\
&\hspace{5em} \left.\int_{kT + (j-1)}^{kT + j} [\nabla g_{\epsilon}(\bar{\Phi}_s^{\lambda,k , \text{fSGLD}})  - \nabla_{\theta}  U(\bar{\Phi}_s^{\lambda,k , \text{fSGLD}} + \epsilon_{kT +j} ,X_{kT +j})] \rmd s \right\rangle \Bigg]
\\ &= 4 \lambda^2 \E\left[ \E \left[\left\langle \int_{kT + (n-1)}^{kT + n} [\nabla g_{\epsilon}(\bar{\Phi}_s^{\lambda,k , \text{fSGLD}})  - \nabla_{\theta}  U(\bar{\Phi}_s^{\lambda,k , \text{fSGLD}} + \epsilon_{kT +n} ,X_{kT +n})] \rmd s, \right.\right.\right.\\
&\hspace{5em} \left.\left.\left.\left.\int_{kT + (j-1)}^{kT + j} [\nabla g_{\epsilon}(\bar{\Phi}_s^{\lambda,k , \text{fSGLD}})  - \nabla_{\theta}  U(\bar{\Phi}_s^{\lambda,k , \text{fSGLD}} + \epsilon_{kT +j} ,X_{kT +j})] \rmd s \right\rangle \right|  \mathcal{J}_{kT+j} \right] \right] \\
& = 4 \lambda^2 \E\left[ \left\langle \int_{kT + (n-1)}^{kT + n} \E \left[\left. \nabla g_{\epsilon}(\bar{\Phi}_s^{\lambda,k , \text{fSGLD}})  - \nabla_{\theta}  U(\bar{\Phi}_s^{\lambda,k , \text{fSGLD}} + \epsilon_{kT +n} ,X_{kT +n}) \right|  \mathcal{J}_{kT+j} \right]\rmd s, \right.\right. \\
& \hspace{5em} \left.\left.  \int_{kT + (j-1)}^{kT + j} [\nabla g_{\epsilon}(\bar{\Phi}_s^{\lambda,k , \text{fSGLD}})  - \nabla_{\theta}  U(\bar{\Phi}_s^{\lambda,k , \text{fSGLD}} + \epsilon_{kT +j} ,X_{kT +j})] \rmd s \right\rangle  \right] = 0.
\end{split}
\end{equation*}
By the same reasoning, we obtain for all $1 \leq n \leq N$
\begin{equation*}
\begin{split}
 & 4 \lambda^2 \E \Bigg[ \left\langle \int_{kT + (n-1)}^{kT + n} [\nabla g_{\epsilon}(\bar{\Phi}_s^{\lambda,k , \text{fSGLD}})  - \nabla_{\theta}  U(\bar{\Phi}_s^{\lambda,k , \text{fSGLD}} + \epsilon_{kT +n} ,X_{kT +n})] \rmd s, \right.\\
&\hspace{5em} \left. \int_{kT+N}^{t} [\nabla g_{\epsilon}(\bar{\Phi}_s^{\lambda,k , \text{fSGLD}})  - \nabla_{\theta}  U(\bar{\Phi}_s^{\lambda,k , \text{fSGLD}} + \epsilon_{kT +N+1} ,X_{kT +N+1})] \rmd s  \right\rangle \Bigg]   = 0.
\end{split}
\end{equation*}
Combining these results, we can bound the last term on the right-hand side of \eqref{proof:ErrorSplit} using Lemma~\ref{additional_lemma_stochastic_full_gradient}
\begin{equation*}
\begin{split}
& 2 \lambda^2 \E\left[\left| \int_{kT}^t \left[ \nabla g_{\epsilon}(\bar{\Phi}_s^{\lambda,k , \text{fSGLD}})  - \nabla_{\theta}  U(\bar{\Phi}_s^{\lambda,k , \text{fSGLD}} + \epsilon_{\lceil s \rceil } ,X_{\lceil s \rceil}) \right] \rmd s \right|^2\right] 
\\ &= 2 \lambda^2 \sum_{n=1}^N \E\left[  \left|  \int_{kT + (n-1)}^{kT + n} [\nabla g_{\epsilon}(\bar{\Phi}_s^{\lambda,k , \text{fSGLD}})  - \nabla_{\theta}  U(\bar{\Phi}_s^{\lambda,k , \text{fSGLD}} + \epsilon_{kT +n} ,X_{kT +n}) ]  \rmd s \right|^2  \right] 
\\ & \quad + 2 \lambda^2 \E\left[ \left| \int_{kT+N}^{t} [\nabla g_{\epsilon}(\bar{\Phi}_s^{\lambda,k , \text{fSGLD}})  - \nabla_{\theta}  U(\bar{\Phi}_s^{\lambda,k , \text{fSGLD}} + \epsilon_{kT +N+1} ,X_{kT +N+1})] \rmd s \right|^2 \right] \\
&\leq 4 e^{- a \lambda kT /2 } \lambda (\bar{\psi}_Z \E [\widetilde{V}_2(\theta_0)] + \widetilde{\psi}_Z).
\end{split}
\end{equation*}
Consequently, \eqref{proof:ErrorSplit} is bounded as follows
\begin{equation*}
\begin{split}
\E\left[\left| \bar{\Phi}_t^{\lambda,k , \text{fSGLD}} - \bar{\theta}_t^{\text{fSGLD}} \right|^2\right]
&\leq   4 \lambda L_1^2 \E\left[\varphi^2(X_0)\right] \int_{kT}^t \E\left[\left| \bar{\theta}_{ s  }^{\mathrm{fSGLD}}  - \bar{\Phi}_s^{\lambda,k , \text{fSGLD}} \right|^2\right] \rmd s \\
&\quad +4e^{-a\lambda kT/2} \lambda (L_1^2 \E\left[\varphi^2(X_0)\right]  \bar{\psi}_Y + \bar{\psi}_Z )\mathbb{E}[\widetilde{V}_2(\theta_0)] \\
&\quad +4 \lambda ( L_1^2 \E\left[\varphi^2(X_0)\right]\widetilde{\psi}_Y+\widetilde{\psi}_Z).
\end{split}
\end{equation*}
Applying Gr\"onwall’s inequality yields
\begin{equation*}
\begin{split}
\E\left[\left| \bar{\Phi}_t^{\lambda,k , \text{fSGLD}} - \bar{\theta}_t^{\text{fSGLD}} \right|^2\right]
&\leq \lambda e^{4  L_1^2 \E\left[\varphi^2(X_0)\right]}\left[ 4 e^{-a\lambda kT/2} (L_1^2 \E\left[\varphi^2(X_0)\right] \bar{\psi}_Y +\bar{\psi}_Z )\mathbb{E}[\widetilde{V}_2(\theta_0)] \right.\\
&\hspace{8em} \left. +4   ( L_1^2 \E\left[\varphi^2(X_0)\right] \widetilde{\psi}_Y+\widetilde{\psi}_Z)\right].
\end{split}
\end{equation*}
Finally, we obtain using $\lambda T \geq 1/2$,
\begin{equation}\label{vibd1}
\begin{split}
W^2_2(\mathcal{L}(\bar{\theta}_t^{\text{fSGLD}}),\mathcal{L} (\bar{\Phi}_t^{\lambda,k , \text{fSGLD}})) & \leq \E\left| \bar{\Phi}_t^{\lambda,k , \text{fSGLD}} - \bar{\theta}_t^{\text{fSGLD}} \right|^2  \\ & \leq  \lambda (e^{-an/4}\bar{D}_{2,1}\mathbb{E}[\widetilde{V}_2(\theta_0)]  +\bar{D}_{2,2}) ,
\end{split}
\end{equation}
where
\begin{equation*}
\begin{split}
&\bar{D}_{2,1} := 4 e^{4  L_1^2 \E\left[\varphi^2(X_0)\right]}(L_1^2 \E\left[\varphi^2(X_0)\right] \bar{\psi}_Y +\bar{\psi}_Z ), \\
& \bar{D}_{2,2} := 4  e^{4  L_1^2 \E\left[\varphi^2(X_0)\right]} ( L_1^2 \E\left[\varphi^2(X_0)\right] \widetilde{\psi}_Y+\widetilde{\psi}_Z).
\end{split}
\end{equation*}
\end{proof}

\subsubsection{Bound for $W_p (\mathcal{L}(\bar{\Phi}_t^{\lambda,k , \text{fSGLD}} ), \mathcal{L}(Y_t^{\lambda, \text{fSGLD}  }))$}

Let $\mathcal{P}_{\widetilde{V}_p}$ denote 
the set of $\mu \in \mathcal{P}(\mathbb{R}^d)$ satisfying $\int_{\mathbb{R}^d} \widetilde{V}_p(\theta) \ \mu(\rmd 
\theta) < \infty$. For $\mu , \nu \in \mathcal{P}_{\widetilde{V}_2}$, let 
\begin{equation} \label{definition_functional_Wasserstein_Eberle}
    w_{1, 2}(\mu, \nu) := \inf_{\varrho \in \mathcal{A}(\mu, \nu)} \int_{\mathbb{R}^d} \int_{\mathbb{R}^d} [1 	\land | \theta - \theta'| ] (1 + \widetilde{V}_2(\theta) + \widetilde{V}_2(\theta')) \ \varrho (\rmd \theta ,  \rmd \theta'),
\end{equation}
where we recall that $\mathcal{A}(\mu, \nu)$ denotes the set of probability measures $\varrho$ on $\mathcal{B}(\mathbb{R}^{2d })$ such that its respective marginals are $\mu$ and $\nu$.

\begin{proposition}\label{contraction_property_functional_proposition}
    Let Assumptions \ref{independence_assumption}, \ref{Lipschitz_assumption_full_gradient}, and \ref{dissipativity_assumption_full_gradient} hold. Let $(\widetilde{Y}_t^{fSGLD})_{t \in \mathbb{R}_{+}}$ be the solution of \eqref{flatness_Langevin_SDE} with initial condition $\widetilde{Y}_0^{fSGLD} = \widetilde{\theta}_0$ which is independent of $\mathcal{F}_{\infty}$ and satisfies $ \E [|\widetilde{\theta}_0|^2 ] < \infty $. Then,
    \begin{equation*}
        w_{1,2}(\mathcal{L}(Y_t^{fSGLD}), \mathcal{L}(\widetilde{Y}_t^{fSGLD})) \le \hat{c} e^{- \dot{c} t } w_{1,2}(\mathcal{L}(\theta_0), \mathcal{L}(\widetilde{\theta}_0)),
    \end{equation*}
    where the constants $\dot{c}$ and $\hat{c} $ are given in Lemma \ref{contraction_constants_explicit_forms_lemma}.
\end{proposition}
\begin{proof}
From Assumptions \ref{independence_assumption} and \ref{Lipschitz_assumption_full_gradient}, one can deduce
\begin{equation} \label{linear_growth_gradient_g_epsilon}
\begin{split}
       |  \nabla g_{\epsilon}(\theta)  -  \nabla  g_{\epsilon}(\theta')| & \le L_1 \E [\varphi (X_0)] |\theta - \theta'|.
       \end{split}
\end{equation}
 The rest of the proof follows using Assumptions \ref{independence_assumption}, \ref{Lipschitz_assumption_full_gradient}, and \ref{dissipativity_assumption_full_gradient}, \eqref{linear_growth_gradient_g_epsilon}, and Lemma \ref{drift_condition_flatness_GLD_lemma} in the proof of Proposition 4.6 in \cite{zhang2023nonasymptotic}. 
\end{proof}

Next, we provide explicitly the constants $\dot{c}$ and $\hat{c} $ appearing in Proposition \ref{contraction_property_functional_proposition}.
\begin{lemma} \label{contraction_constants_explicit_forms_lemma}
    The contraction constant $\dot{c} >0$ in Proposition \ref{contraction_property_functional_proposition} is given by 
    \begin{equation} \label{explicit_expression_contraction_constant_c_dot_lemma}
      \dot{c} := \min \left \{ \bar{\phi}, a/2, 3 a^2 v_2(\bar{M}_2) \varepsilon  \right \} /2
    \end{equation}
    where $a$ is from Remark \ref{dissipativity_fSGLD_remark},  $\bar{M}_2$ is from Lemma \ref{drift_condition_flatness_GLD_lemma}, and 
    \begin{equation} \label{phi_bar_lemma}
    \begin{split}
        \bar{\phi} & := \left( 2 \sqrt{(3 v_2(\bar{M}_2)(1+ a/2) -1)(8 \pi/(\beta L_1 \E [\varphi (X_0)]))} \right)^{-1} 
        \\ & \quad  \times \left( \exp \left(\left( \sqrt{(3 v_2(\bar{M}_2)(1+ a/2) -1)(\beta L_1 \E [\varphi (X_0)]/8)} + \sqrt{8/(\beta L_1 \E [\varphi (X_0)])} \right)^2 \right)  \right)^{-1},
         \end{split}
    \end{equation}
    Furthermore, $\varepsilon>0$ in \eqref{explicit_expression_contraction_constant_c_dot_lemma} is chosen such that 
    \begin{equation} \label{inequality_epsilon_contraction_constants_lemma}
        \varepsilon \le 1 \wedge \left(  6a v_2(\bar{M}_2) \sqrt{\frac{2 \beta \pi}{ L_1 \E [\varphi (X_0)]}} \int_0^{2 \sqrt{6 v_2(\bar{M}_2) -1}} \exp \left(s  \sqrt{\frac{\beta L_1 \E [\varphi (X_0)]}{8}} + \sqrt{\frac{8}{\beta L_1 \E [\varphi (X_0)]}} \right)^2 \rmd s \right)^{-1}.
    \end{equation}
In addition, the constant $\hat{c}>0$ is given by 
\begin{equation}
\begin{split}
    \hat{c} & := 2\left(1+2 \sqrt{3 v_2(\bar{M}_2)(1+ a/2) -1}\right) \\ & \quad \times \exp \left(\beta L_1 \E [\varphi (X_0)] (3 v_2(\bar{M}_2)(1+ a/2) -1)/2 +4 \sqrt{3 v_2(\bar{M}_2)(1+ a/2) -1} \right)/\varepsilon.
    \end{split}
\end{equation}
In particular,
\begin{equation*}
    \begin{split}
        \dot{c} & = O \Bigg(\frac{32\sqrt{\pi}(1+a^2)(1+\beta)}{a^2\sqrt{\beta}}\left(1+\frac{1}{\sqrt{L_1\E[\varphi(X_0)]}}\right)
        \\ & \qquad \quad   \times e^{\left(8 (1+ 2/a)(1+a+ ((2 \zeta)^{-1} +  2  \zeta  L^2_1 \E [\varphi^2(X_0)]) \sigma^2 d + 4  \zeta  L^2_2 \E [\bar{\varphi}^2(X_0)]  )(1+\beta L_1\E[\varphi(X_0)])(1+d(\sigma^2 + \beta^{-1}))+\frac{16}{\beta L_1\E[\varphi(X_0)] }\right)}\Bigg)^{-1},
    \end{split}
\end{equation*}
    and
    \begin{equation*}
    \begin{split}
   \hat{c} & =    O\Bigg(\sqrt{\frac{\beta}{L_1\E[\varphi(X_0)]}}\left(1+d(\sigma^2 + \beta^{-1}) \right)^2 \\ & \qquad \quad   \times e^{\left(12 (1+ 2/a)(1+a+ ((2 \zeta)^{-1} +  2  \zeta  L^2_1 \E [\varphi^2(X_0)]) \sigma^2 d + 4  \zeta  L^2_2 \E [\bar{\varphi}^2(X_0)]  )(1+\beta L_1\E[\varphi(X_0)])(1+d(\sigma^2 + \beta^{-1}))+\frac{16}{\beta L_1\E[\varphi(X_0)] }\right)}\Bigg).
     \end{split}
    \end{equation*}
\end{lemma}
\begin{proof}
   This follows by adapting the arguments in the proof of Lemma 4.11 of \cite{zhang2023nonasymptotic} to the \textit{flatness} Langevin SDE \eqref{flatness_Langevin_SDE}, using \eqref{linear_growth_gradient_g_epsilon} together with Lemma \ref{drift_condition_flatness_GLD_lemma}.
\end{proof}
Next, we establish 
uniform bounds for $\widetilde{V}_4(\bar{\theta}^{fSGLD}_t)$, $ \widetilde{V}_2(\bar{\Phi}_t^{\lambda,k , \text{fSGLD}}) $, and $\widetilde{V}_4(\bar{\Phi}_t^{\lambda,k , \text{fSGLD}})$ which will be used to control  $W_p(\mathcal{L}(\bar{\Phi}_t^{\lambda,k , \text{fSGLD}} ), \mathcal{L}(Y_t^{\lambda, \text{fSGLD}  }))$.
\begin{corollary} \label{corollary_fourth_moment_Lyapunov}
Let Assumptions \ref{independence_assumption}, \ref{Lipschitz_assumption_full_gradient} and \ref{dissipativity_assumption_full_gradient} hold. For any $0 < \lambda < \lambda_{\text{max}}$, $k \in \mathbb{N}$, $ t \in (k, k+1]$,
\begin{equation*}
    \E [\widetilde{V}_4(\bar{\theta}^{fSGLD}_t)] \le    2  (1 -  a \lambda)^{\lfloor t \rfloor }  \E [\widetilde{V}_4(\bar{\theta}^{fSGLD}_{0})] + 2 \alpha_3 (\lambda_{\text{max}}+ a^{-1}) +2,
\end{equation*}
where $\alpha_3$ is given in Lemma \ref{moment_bound_cts_time_interpolation_fSGLD}.
\end{corollary}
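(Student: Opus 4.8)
The plan is to deduce the bound directly from the fourth-moment estimate in Lemma~\ref{moment_bound_cts_time_interpolation_fSGLD} via the elementary comparison $\widetilde{V}_4(\theta) = (1+|\theta|^2)^2 \le 2(1+|\theta|^4)$, which holds because $2(1+|\theta|^4) - (1+|\theta|^2)^2 = (1-|\theta|^2)^2 \ge 0$. Applying this pointwise to $\bar{\theta}_t^{\mathrm{fSGLD}}$ and taking expectations gives $\E[\widetilde{V}_4(\bar{\theta}_t^{\mathrm{fSGLD}})] \le 2 + 2\,\E[|\bar{\theta}_t^{\mathrm{fSGLD}}|^4]$.

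Next I would insert the second moment display of Lemma~\ref{moment_bound_cts_time_interpolation_fSGLD}, namely $\E[|\bar{\theta}_t^{\mathrm{fSGLD}}|^4] \le (1-a\lambda(t-k))(1-a\lambda)^k\,\E[|\bar{\theta}_0^{\mathrm{fSGLD}}|^4] + c_3(\lambda_{\max}+a^{-1})$, valid for $t\in(k,k+1]$ and $0<\lambda\le\lambda_{\max}$. Since $\bar{\theta}_0^{\mathrm{fSGLD}} = \theta_0$ and $|\theta_0|^4 \le (1+|\theta_0|^2)^2 = \widetilde{V}_4(\theta_0)$, the leading term is bounded by $(1-a\lambda(t-k))(1-a\lambda)^k\,\E[\widetilde{V}_4(\bar{\theta}_0^{\mathrm{fSGLD}})]$, while the constant term contributes $2c_3(\lambda_{\max}+a^{-1})$ after multiplication by the factor $2$ from the first step.

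The only bookkeeping step is to replace the decay prefactor $(1-a\lambda(t-k))(1-a\lambda)^k$ by $(1-a\lambda)^{\lfloor t\rfloor}$. From the definition of $\lambda_{\max}$ in \ref{expression_lambda_max} one has $a\lambda\le 1$, so both factors lie in $[0,1]$. For $t\in(k,k+1)$ one has $\lfloor t\rfloor=k$, and $1-a\lambda(t-k)\le 1$ gives $(1-a\lambda(t-k))(1-a\lambda)^k \le (1-a\lambda)^k=(1-a\lambda)^{\lfloor t\rfloor}$; for $t=k+1$ one has $\lfloor t\rfloor = k+1$ and $(1-a\lambda(t-k))(1-a\lambda)^k = (1-a\lambda)^{k+1} = (1-a\lambda)^{\lfloor t\rfloor}$. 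Combining the three displays yields $\E[\widetilde{V}_4(\bar{\theta}_t^{\mathrm{fSGLD}})] \le 2(1-a\lambda)^{\lfloor t\rfloor}\E[\widetilde{V}_4(\bar{\theta}_0^{\mathrm{fSGLD}})] + 2c_3(\lambda_{\max}+a^{-1}) + 2$, which is the claim.

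There is essentially no real obstacle here: this is a routine corollary of Lemma~\ref{moment_bound_cts_time_interpolation_fSGLD}. The only points requiring minor care are choosing the quadratic-in-$|\theta|^2$ inequality that reproduces exactly the constants $2$ and $2c_3$ in the statement, and the case split on $\lfloor t\rfloor$ at the right endpoint $t=k+1$; everything else is a direct substitution.
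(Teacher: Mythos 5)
Your argument is correct and is exactly the route the paper intends: the paper's proof is the one-line remark that the corollary ``follows from the definition of $\widetilde{V}_4$ together with Lemma~\ref{moment_bound_cts_time_interpolation_fSGLD},'' and your inequality $(1+|\theta|^2)^2\le 2(1+|\theta|^4)$ combined with $|\theta_0|^4\le\widetilde{V}_4(\theta_0)$ is precisely what produces the factors $2$ and $2c_3$ in the stated bound. The additional care you take with $a\lambda\le 1$ (from $\lambda_{\max}\le 1/a$) and the case split at $t=k+1$ for replacing $(1-a\lambda(t-k))(1-a\lambda)^k$ by $(1-a\lambda)^{\lfloor t\rfloor}$ is sound and fills in details the paper omits.
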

\begin{proof}
    This follows from the definition of the Lyapunov function $\widetilde{V}_4$ together with Lemma \ref{moment_bound_cts_time_interpolation_fSGLD}.
\end{proof}

\begin{lemma} \label{lemma_second_fourth_moment_Lyapunov_zeta_function}
   Let Assumptions \ref{independence_assumption}, \ref{Lipschitz_assumption_full_gradient} and \ref{dissipativity_assumption_full_gradient} hold. For any $0 < \lambda < \lambda_{\text{max}}$, $ t \ge k T$, with $ k \in \mathbb{N}$, the following inequality holds
   \begin{equation*}
       \E  [\widetilde{V}_2(\bar{\Phi}_t^{\lambda,k , \text{fSGLD}})] \le   e^{   - \lambda t a/2 }   \E [\widetilde{V}_2(\theta_0)] + \alpha_1 (\lambda_{\text{max}} + a^{-1})   + 3 \widetilde{v}_2(\bar{M}_2)  +1,  
   \end{equation*}
   where $\alpha_1$ is given in Lemma \ref{moment_bound_cts_time_interpolation_fSGLD}. In addition, the following inequality holds
    \begin{equation*}
       \E  [\widetilde{V}_4(\bar{\Phi}_t^{\lambda,k , \text{fSGLD}})] \le 2  e^{   - a \lambda t}   \E [\widetilde{V}_4(\bar{\theta}^{\text{fSGLD}}_{0})] +  3 \widetilde{v}_4(\bar{M}_4) + 2 \alpha_3 (\lambda_{\text{max}}+ a^{-1}) +2,  
   \end{equation*}
   where $\bar{M}_2$ and $\bar{M}_4$ are given in Lemma \ref{drift_condition_flatness_GLD_lemma}, and $\alpha_3 $ is given in Lemma \ref{moment_bound_cts_time_interpolation_fSGLD}.
\end{lemma}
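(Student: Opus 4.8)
The plan is to derive both moment bounds from the drift condition of Lemma \ref{drift_condition_flatness_GLD_lemma}, integrated along the flatness Langevin dynamics, and then to feed in the moment estimates at the grid point $kT$ already supplied by Lemma \ref{moment_bound_cts_time_interpolation_fSGLD} and Corollary \ref{corollary_fourth_moment_Lyapunov}. Fix $k\in\mathbb{N}$ and $p\in\{2,4\}$. By Definition \ref{continuous_time_process_started_interpolation_definition}, the process $(\bar{\Phi}_t^{\lambda,k , \text{fSGLD}})_{t\ge kT}$ solves $\rmd\bar{\Phi}_t^{\lambda,k , \text{fSGLD}} = -\lambda\nabla g_\epsilon(\bar{\Phi}_t^{\lambda,k , \text{fSGLD}})\,\rmd t + \sqrt{2\lambda\beta^{-1}}\,\rmd\widetilde{B}_t^\lambda$ with $\bar{\Phi}_{kT}^{\lambda,k , \text{fSGLD}} = \bar{\theta}_{kT}^{\text{fSGLD}}$, so its generator sends $\widetilde{V}_p$ to $\lambda\bigl(\beta^{-1}\Delta\widetilde{V}_p - \langle\nabla g_\epsilon,\nabla\widetilde{V}_p\rangle\bigr)$, which Lemma \ref{drift_condition_flatness_GLD_lemma} bounds by $\lambda(-\bar{c}(p)\widetilde{V}_p + \tilde{c}(p))$ with $\bar{c}(p)=ap/4$ and $\tilde{c}(p)=(3/4)ap\,\widetilde{v}_p(\bar{M}_p)$.

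First I would apply It\^o's formula to $\widetilde{V}_p(\bar{\Phi}_t^{\lambda,k , \text{fSGLD}})$ and localize with stopping times $\tau_n=\inf\{t\ge kT:\ |\bar{\Phi}_t^{\lambda,k , \text{fSGLD}}|\ge n\}$ to kill the martingale part; the integrability needed to pass to the limit $n\to\infty$ follows a posteriori from the growth control \ref{condition_on_Lyapunov_functions} on $\nabla\widetilde{V}_p$ together with the finiteness of the relevant moments. Taking expectations and using the drift bound gives $\tfrac{\rmd}{\rmd t}\E[\widetilde{V}_p(\bar{\Phi}_t^{\lambda,k , \text{fSGLD}})]\le -\lambda\bar{c}(p)\E[\widetilde{V}_p(\bar{\Phi}_t^{\lambda,k , \text{fSGLD}})]+\lambda\tilde{c}(p)$ for $t\ge kT$, and Gr\"onwall's inequality then yields
\[
\E[\widetilde{V}_p(\bar{\Phi}_t^{\lambda,k , \text{fSGLD}})]\ \le\ e^{-\lambda\bar{c}(p)(t-kT)}\,\E[\widetilde{V}_p(\bar{\theta}_{kT}^{\text{fSGLD}})]\ +\ \frac{\tilde{c}(p)}{\bar{c}(p)},\qquad t\ge kT,
\]
where $\tilde{c}(p)/\bar{c}(p)=3\widetilde{v}_p(\bar{M}_p)$ by the explicit constants in Lemma \ref{drift_condition_flatness_GLD_lemma}, so the additive term is exactly $3\widetilde{v}_2(\bar{M}_2)$ when $p=2$ and $3\widetilde{v}_4(\bar{M}_4)$ when $p=4$.

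It then remains to substitute the moment estimates at $t=kT$. Since $0<\lambda<\lambda_{\text{max}}\le 1$ forces $T=\lfloor 1/\lambda\rfloor\ge 1$, the point $kT$ is a nonnegative integer, so Lemma \ref{moment_bound_cts_time_interpolation_fSGLD} gives $\E[\widetilde{V}_2(\bar{\theta}_{kT}^{\text{fSGLD}})]=1+\E[|\bar{\theta}_{kT}^{\text{fSGLD}}|^2]\le 1+(1-a\lambda)^{kT}\E[|\theta_0|^2]+c_1(\lambda_{\text{max}}+a^{-1})$, while Corollary \ref{corollary_fourth_moment_Lyapunov} gives $\E[\widetilde{V}_4(\bar{\theta}_{kT}^{\text{fSGLD}})]\le 2(1-a\lambda)^{kT}\E[\widetilde{V}_4(\theta_0)]+2c_3(\lambda_{\text{max}}+a^{-1})+2$. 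Plugging these into the displayed inequality, using $1-a\lambda\le e^{-a\lambda}$, and bounding the leading prefactor $e^{-\lambda\bar{c}(p)(t-kT)}\le 1$ on the constant terms, the two claimed bounds fall out with $\bar{c}(2)=a/2$ and $\bar{c}(4)=a$. The only step requiring a little care — and the main (minor) obstacle — is the exponent bookkeeping needed to fold $e^{-\lambda\bar{c}(p)(t-kT)}(1-a\lambda)^{kT}$ into the clean form $e^{-\lambda a t/2}$ (for $p=2$), resp.\ $e^{-a\lambda t}$ (for $p=4$): one writes $e^{-a\lambda kT}=e^{-\lambda\bar{c}(p)kT}\,e^{-(a-\bar{c}(p))\lambda kT}$, discards the second factor since $a\ge\bar{c}(p)$, multiplies by $e^{-\lambda\bar{c}(p)(t-kT)}$ to obtain $e^{-\lambda\bar{c}(p)t}$, and finally matches the remaining additive constants term by term against the statement.
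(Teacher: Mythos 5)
Your proposal is correct and follows essentially the same route as the paper, whose proof simply invokes the drift condition of Lemma \ref{drift_condition_flatness_GLD_lemma} together with the grid-point moment bounds of Lemma \ref{moment_bound_cts_time_interpolation_fSGLD} and Corollary \ref{corollary_fourth_moment_Lyapunov} inside the argument of \citet[Proof of Lemma 4.5]{zhang2023nonasymptotic}; your It\^o--Gr\"onwall derivation with $\tilde c(p)/\bar c(p)=3\widetilde v_p(\bar M_p)$ and the exponent-folding via $(1-a\lambda)^{kT}\le e^{-a\lambda kT}$ is exactly that argument spelled out. The constants and exponents all check out for both $p=2$ and $p=4$.
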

\begin{proof}
This follows by applying Lemma \ref{moment_bound_cts_time_interpolation_fSGLD}, Corollary \ref{corollary_fourth_moment_Lyapunov}, and Lemma \ref{drift_condition_flatness_GLD_lemma} in the proof of Lemma 4.5 of \cite{zhang2023nonasymptotic}.
\end{proof}
The bound for the second term on the right-hand side of \eqref{final_upper_bound_w_1} is established in the following lemma and corollary.
\begin{lemma} \label{second_upper_bound_lemma}
        Let Assumptions \ref{independence_assumption}, \ref{Lipschitz_assumption_full_gradient}, and \ref{dissipativity_assumption_full_gradient} hold, and let $\sigma = \beta^{-\frac{1+\eta}{4}}$  for $\eta \in (0,1)$. For any $0<\lambda < \lambda_{\text{max}}$ given in \eqref{expression_lambda_max}, $t \in (kT, (k+1)T]$,
        \begin{equation*}
            \begin{split}
            W_1(\mathcal{L}(\bar{\Phi}_t^{\lambda,k , \text{fSGLD}} ), \mathcal{L}(Y_t^{\lambda, \text{fSGLD}  }))
  & \le  \sqrt{ \lambda} (e^{-  \dot{c} k/2} \bar{D}_{2,3} \E[\widetilde{V}_4(\theta_{0})]   +  \bar{D}_{2,4} ),
            \end{split}
        \end{equation*}
        where  \begin{equation} \label{expressions_constants_C_bar_2_3_4_second_split_bound}
        \begin{split}
            \bar{D}_{2,3} & = \hat{c} \left( 1+ \frac{2}{\dot{c}} \right) (e^{a/2} \bar{D}_{2,1} +12)
            \\ &  =  O \left( e^{D_{\star}(\beta +d + d \beta^{(1-\eta)/2} + d^2 \beta^{-\eta} +1 )} \left(1 + \frac{1}{\dot{c}}\right)  \right), 
            \\ \bar{D}_{2,4} & = \frac{\hat{c}}{1- \exp \left( - \dot{c} \right) }    (\bar{D}_{2,2}   + 12 \alpha_3 (\lambda_{\text{max}}+ a^{-1})   +  9 \widetilde{v}_4(\bar{M}_4)  + 15 )
            \\ &  =  O \left( e^{D_{\star}(\beta +d + d \beta^{(1-\eta)/2} + d^2 \beta^{-\eta} +1 )}  (1-e^{- \dot{c}})^{-1} \right),  
        \end{split}
    \end{equation}
    with $\bar{D}_{2,1}$, $\bar{D}_{2,2}$ given in \eqref{constant_first_inequality_split}, $\hat{c}$, $\dot{c}$ given in Lemma \ref{contraction_constants_explicit_forms_lemma}, $\alpha_3$ given in \eqref{constants_fourth_moment_iterate_lemma}, and $\bar{M}_4$ given in Lemma \ref{drift_condition_flatness_GLD_lemma}.
\end{lemma}
\begin{proof}
This follows by applying Proposition \ref{contraction_property_functional_proposition}, Lemma \ref{first_upper_bound_lemma}, Corollary \ref{corollary_fourth_moment_Lyapunov}, and Lemma \ref{lemma_second_fourth_moment_Lyapunov_zeta_function} together with the arguments in the proof of Lemma 4.8 of \cite{zhang2023nonasymptotic}.
\end{proof}
\begin{corollary} \label{second_upper_bound_lemma_Wasserstein_distance_order_two}
            Let Assumptions \ref{independence_assumption}, \ref{Lipschitz_assumption_full_gradient}, and \ref{dissipativity_assumption_full_gradient} hold, and let $\sigma = \beta^{-\frac{1+\eta}{4}}$  for $\eta \in (0,1)$. For any $0<\lambda < \lambda_{\text{max}}$ given in \eqref{expression_lambda_max}, $t \in (kT, (k+1)T]$,
             \begin{equation*}
            \begin{split}
          &  W_2(\mathcal{L}(\bar{\Phi}_t^{\lambda,k , \text{fSGLD}} ), \mathcal{L}(Y_t^{\lambda, \text{fSGLD}  })) \le  \lambda^{1/4} ( e^{-  \dot{c}k/4} \bar{D}^{\star}_{2,3} (\E [\widetilde{V}_4(\theta_0)])^{1/2}  + \bar{D}^{\star}_{2,4}),  
 \end{split}
 \end{equation*}
 where  \begin{equation} \label{expressions_constants_C_bar_2_3_4_second_split_bound_Wasserstein_two}
        \begin{split}
    \bar{D}^{\star}_{2,3} & := \sqrt{2 \hat{c}} (1+4/\dot{c}) (e^{a/8} \bar{D}^{1/2}_{2,1} + 2 \sqrt{2})
    \\ & = O \left( e^{D_{\star}(\beta +d + d \beta^{(1-\eta)/2} + d^2 \beta^{-\eta} +1 )} \left(1 + 1/\dot{c} \right) \right), 
       \\ \bar{D}^{\star}_{2,4} & :=  \frac{\sqrt{2 \hat{c}}}{1 - \exp \left( - \dot{c}/2 \right)} (\bar{D}^{1/2}_{2,2}   + 2 \sqrt{2 \alpha_3 }(\lambda_{\text{max}}+ a^{-1})^{1/2}  +  \sqrt{3} \widetilde{v}^{1/2}_4(\bar{M}_4)  +  \sqrt{15}   )
       \\ &  = O\left(e^{D_{\star}(\beta +d + d \beta^{(1-\eta)/2} + d^2 \beta^{-\eta} +1 )}  (1-e^{- \dot{c}/2})^{-1} \right),
   \end{split}
   \end{equation}
   with  $\bar{D}_{2,1}$, $\bar{D}_{2,2}$ given in \eqref{constant_first_inequality_split}, $\hat{c}$, $\dot{c}$ given in Lemma \ref{contraction_constants_explicit_forms_lemma}, $\alpha_3$ given in \eqref{constants_fourth_moment_iterate_lemma}, and $\bar{M}_4$ given in Lemma \ref{drift_condition_flatness_GLD_lemma}. 
\end{corollary}
\begin{proof}
This follows using  Proposition \ref{contraction_property_functional_proposition}, Lemma \ref{first_upper_bound_lemma}, Corollary \ref{corollary_fourth_moment_Lyapunov}, and Lemma \ref{lemma_second_fourth_moment_Lyapunov_zeta_function} in the proof of Corollary 4.9 of  \cite{zhang2023nonasymptotic}.
\end{proof}

\subsubsection{Bound for $W_p( \mathcal{L}(Y_t^{\lambda , \text{fSGLD}}), \pi_{\beta}^{\text{fSGLD}})$}
In this section, we derive a non-asymptotic bound for the third term on the right-hand side of \eqref{final_upper_bound_w_1}.
\begin{theorem} \label{third_term_main_result_Wasserstein_distance_order_one}
    Let Assumptions \ref{independence_assumption}, \ref{Lipschitz_assumption_full_gradient}, and \ref{dissipativity_assumption_full_gradient} hold. Then, there exist constants $\dot{c}, \hat{c} >0$ such that, for every $\beta >0$,  for  $0<\lambda < \lambda_{\text{max}}$,  any $t \in (kT, (k+1)T]$, and $ k \in \mathbb{N}$,
        \begin{equation*}
\begin{split}
      W_1( \mathcal{L}(Y_t^{\lambda , \text{fSGLD}}), \pi_{\beta}^{\text{fSGLD}}) 
       \le \hat{c} e^{- \dot{c} \lambda t } \left( 1 + \E [\widetilde{V}_2(\theta_0)]+  \int_{\mathbb{R}^d} \widetilde{V}_2(\theta)  \pi_{\beta}^{\text{fSGLD}}(\rmd \theta) \right),
 \end{split}
 \end{equation*}
with $\hat{c}$, $\dot{c}$ given in Lemma \ref{contraction_constants_explicit_forms_lemma}. 
\end{theorem}
\begin{proof}
Using $W_1(\mu, \nu) \le w_{1,2}(\mu , \nu)$, where the latter is defined in \eqref{definition_functional_Wasserstein_Eberle}, and Proposition \ref{contraction_property_functional_proposition}, we have for $t \in (kT, (k+1)T]$
\begin{equation*}
    \begin{split}
         W_1( \mathcal{L}(Y_t^{\lambda , \text{fSGLD}}), \pi_{\beta}^{\text{fSGLD}}) & \le w_{1,2}( \mathcal{L}(Y_t^{\lambda , \text{fSGLD}}), \pi_{\beta}^{\text{fSGLD}})
         \\ & \le \hat{c} e^{- \dot{c} \lambda t }  w_{1,2}(\mathcal{L}(\theta_0) , \pi_{\beta}^{\text{fSGLD}})
         \\ & \le  \hat{c} e^{- \dot{c} \lambda t } \left( 1 + \E [\widetilde{V}_2(\theta_0)]+  \int_{\mathbb{R}^d} \widetilde{V}_2(\theta)  \pi_{\beta}^{\text{fSGLD}}(\rmd \theta) \right).
    \end{split}
\end{equation*}

\end{proof}
\begin{corollary} \label{third_term_main_result_Wasserstein_distance_order_two_corollary}
Let Assumptions \ref{independence_assumption}, \ref{Lipschitz_assumption_full_gradient}, and \ref{dissipativity_assumption_full_gradient} hold. Then, there exist constants $\dot{c}, \hat{c} >0$ such that, for every $\beta >0$,  for  $0<\lambda < \lambda_{\text{max}}$,  any $t \in (kT, (k+1)T]$, and $ k \in \mathbb{N}$,
        \begin{equation*}
\begin{split}
      W_2( \mathcal{L}(Y_t^{\lambda , \text{fSGLD}}), \pi_{\beta}^{\text{fSGLD}}) 
       \le \sqrt{2}\hat{c}^{1/2} e^{- \dot{c} \lambda t/2 } \left( 1 + \E [\widetilde{V}_2(\theta_0)]+  \int_{\mathbb{R}^d} \widetilde{V}_2(\theta)  \pi_{\beta}^{\text{fSGLD}}(\rmd \theta) \right)^{1/2},
 \end{split}
 \end{equation*}
 with $\hat{c}$, $\dot{c}$ given in Lemma \ref{contraction_constants_explicit_forms_lemma}.
\end{corollary}
\begin{proof}
Using $W_2(\mu, \nu) \le \sqrt{2 w_{1,2}(\mu , \nu)}$, and Proposition \ref{contraction_property_functional_proposition}, we have for $t \in (kT, (k+1)T]$
\begin{equation*}
    \begin{split}
         W_2( \mathcal{L}(Y_t^{\lambda , \text{fSGLD}}), \pi_{\beta}^{\text{fSGLD}}) & \le \sqrt{2 w_{1,2}(\mathcal{L}(Y_t^{\lambda , \text{fSGLD}}), \pi_{\beta}^{\text{fSGLD}})}
         \\ & \le \sqrt{2} \hat{c}^{1/2} e^{- \dot{c} \lambda t/2 }  \sqrt{w_{1,2}(\mathcal{L}(\theta_0) , \pi_{\beta}^{\text{fSGLD}})}
         \\ & \le  \sqrt{2} \hat{c}^{1/2} e^{- \dot{c} \lambda t/2 } \left( 1 + \E [\widetilde{V}_2(\theta_0)]+  \int_{\mathbb{R}^d} \widetilde{V}_2(\theta)  \pi_{\beta}^{\text{fSGLD}}(\rmd \theta) \right)^{1/2}.
    \end{split}
\end{equation*}
\end{proof}

\subsubsection{Final bound for $ W_p(\mathcal{L}(\theta_k^{\text{fSGLD}}), \pi_{\beta,\sigma}^{\star})$}

\begin{proof}[Proof of Theorem \ref{main_result_Wasserstein_distance_order_one_target_measure}]
 Using Lemma \ref{first_upper_bound_lemma}, Lemma \ref{second_upper_bound_lemma}, Theorem \ref{third_term_main_result_Wasserstein_distance_order_one}, and Proposition \ref{Wasserstein_KL_convergence_two_measures_theorem} in \eqref{final_upper_bound_w_1}, we obtain for $t \in (kT, (k+1)T]$,
  \begin{equation} \label{first_upper_bound_lemma_merging_together_inequality_statement}
              \begin{split}
    W_1(\mathcal{L}(\theta_k^{\text{fSGLD}}), \pi_{\beta,\sigma}^{\star})   &   \le        W_1 (\mathcal{L}(\bar{\theta}_t^{\text{fSGLD}}), \mathcal{L}(\bar{\Phi}_t^{\lambda,k , \text{fSGLD}} )) + W_1 (\mathcal{L}(\bar{\Phi}_t^{\lambda,k , \text{fSGLD}} ), \mathcal{L}(Y_t^{\lambda, \text{fSGLD}  })) 
           \\ & \quad  + W_1( \mathcal{L}(Y_t^{\lambda , \text{fSGLD}}), \pi_{\beta}^{\text{fSGLD}})  +  W_1( \pi_{\beta}^{\text{fSGLD}}, \pi_{\beta,\sigma}^{\star})
      \\   & \le (\bar{D}^{1/2}_{2,1} +   \bar{D}^{1/2}_{2,2} + \bar{D}_{2,3} + \bar{D}_{2,4}) \sqrt{ \lambda} [(e^{-  \dot{c} k/2} \E[\widetilde{V}_4(\theta_{0})] +1) ] 
      \\ & \quad + \hat{c} e^{- \dot{c} \lambda t } \left( 1 + \E [\widetilde{V}_2(\theta_0)]+  \int_{\mathbb{R}^d} \widetilde{V}_2(\theta)  \pi_{\beta}^{\text{fSGLD}}(\rmd \theta) \right) +   W_2( \pi_{\beta}^{\text{fSGLD}}, \pi_{\beta,\sigma}^{\star})
      \\ & \le 2 e^{- \dot{c} k/2 }  (\lambda^{1/2}_{\text{max}}(\bar{D}^{1/2}_{2,1} +   \bar{D}^{1/2}_{2,2} + \bar{D}_{2,3} + \bar{D}_{2,4})+ \hat{c} ) (1+ \E[|\theta_0|^4]) 
      \\ & \quad + \hat{c} e^{- \dot{c} k/2 } \left( 1 +   \int_{\mathbb{R}^d} \widetilde{V}_2(\theta)  \pi_{\beta}^{\text{fSGLD}}(\rmd \theta) \right) (1+ \E[|\theta_0|^4])
     \\ & \quad +  \sqrt{ \lambda} (\bar{D}^{1/2}_{2,1} +   \bar{D}^{1/2}_{2,2} + \bar{D}_{2,3} + \bar{D}_{2,4}) + \underline{D} , 
              \end{split}
          \end{equation}
          where $\bar{D}_{2,1}$, $\bar{D}_{2,2}$ are given in \eqref{constant_first_inequality_split} (Lemma \ref{first_upper_bound_lemma}), $\bar{D}_{2,3} $, $\bar{D}_{2,4} $ are given in \eqref{expressions_constants_C_bar_2_3_4_second_split_bound} (Lemma \ref{second_upper_bound_lemma}), and $\underline{D}$ is given in \eqref{bound_square_Wasserstein_order_two_proof_proposition}.
The above result implies, for any $k \in \mathbb{N}$,
\begin{equation} \label{estimate_K_1_Wasserstein_final_proof}
    \begin{split}
         W_1(\mathcal{L}(\theta_{(k+1)T}^{\text{fSGLD}}), \pi_{\beta,\sigma}^{\star})  & \le D_1 e^{- \dot{c} (k+1)/2 } (1+ \E[|\theta_{0}|^4]) + (D_2 + D_3) \sqrt{\lambda} + \underline{D},
    \end{split}
\end{equation}
where, for $\sigma = \beta^{-\frac{1+\eta}{4}}$, the constants are 
 \begin{equation} \label{constants_final_upper_bound_Wasserstein_1}
    \begin{split}
D_1 & := 2 e^{ \dot{c}/2 }  \left[ (\lambda^{1/2}_{\text{max}} (\bar{D}^{1/2}_{2,1} +  \bar{D}^{1/2}_{2,2} + \bar{D}_{2,3} + \bar{D}_{2,4}) + \hat{c})  + \hat{c} \left( 1 +  \int_{\mathbb{R}^d} \widetilde{V}_2(\theta)  \pi_{\beta}^{\text{fSGLD}}(\rmd \theta) \right)  \right]   
\\ & =  O \left( e^{D_{\star}(\beta +d + d \beta^{(1-\eta)/2} + d^2 \beta^{-\eta} +1 )} \left(1 + \frac{1}{1-e^{- \dot{c}}}\right)  \right),
\\  D_2 & := \bar{D}^{1/2}_{2,1} +  \bar{D}^{1/2}_{2,2} = O \left( 1 + \sqrt{\frac{d}{\beta^{(1+\eta)/2}}} \right),
\\  D_3 & := \bar{D}_{2,3} +  \bar{D}_{2,4} =  O \left( e^{D_{\star}(\beta +d + d \beta^{(1-\eta)/2} + d^2 \beta^{-\eta} +1 )} \left(1 + \frac{1}{1-e^{- \dot{c}}}\right)  \right),
    \end{split}
\end{equation}
with $\hat{c}$, $\dot{c}$ given in Lemma \ref{contraction_constants_explicit_forms_lemma} and  $D_{\star}>0$ is independent of $d$, $\beta$, $k$. To obtain a non-asymptotic error bound for $(\bar{\theta}_{(k+1)}^{\text{fSGLD}})_{k \in \mathbb{N}}$, we set $(k+1)T$ to $k+1$ on the left-hand side of \eqref{estimate_K_1_Wasserstein_final_proof}, and set $k+1$ to $(k+1)/T$ on the right-hand side of \eqref{estimate_K_1_Wasserstein_final_proof}. By using $\lambda (k+1) \le (k+1)/T$, it follows that, for any $k \in \mathbb{N}$, 
\begin{equation} \label{triangle_inequality_Theorem_4_3}
    \begin{split}
      W_1(\mathcal{L}(\theta_{k+1}^{\text{fSGLD}}), \pi_{\beta,\sigma}^{\star})     \le D_1 e^{- \dot{c} \lambda (k+1)/2 } (1+ \E[|\theta_{0}|^4]) + (D_2 + D_3) \sqrt{\lambda} + \underline{D}.
    \end{split}
\end{equation}
  In addition, for any $\bar{\delta}>0$, if we choose $\lambda$, $k$ and $\beta$ in \eqref{equation_main_result_Wasserstein_distance_order_one_target_measure} such that $\lambda \le \lambda_{\text{max}}$, and  
\begin{equation*}
\begin{split}
     D_1 e^{- \dot{c} \lambda k/2 } (1+ \E[|\theta_{0}|^4]) \le \frac{\bar{\delta}}{3}, \qquad (D_2 + D_3) \sqrt{\lambda} \le \frac{\bar{\delta}}{3}, \qquad \underline{D} \le \frac{\bar{\delta}}{3},
     \end{split}
\end{equation*}
then $ W_{1}(\mathcal{L}(\theta_k^{\text{fSGLD}}), \pi_{\beta,\sigma}^{\star}) \le \bar{\delta}$. This yields 
\begin{equation} \label{definition_beta_bar_W_1}
    \beta \ge   \beta_{\bar{\delta}}:= \underline{D}^0 \max \left(  \left( \frac{9 \sqrt{d}}{\bar{\delta}}\right)^{\frac{4}{\eta}},  \left(\frac{9 d }{\bar{\delta}}\right)^{\frac{2}{\eta}},  \left(\frac{9 d^2}{\bar{\delta}} \right)^{\frac{2}{1+\eta}} \right), 
\end{equation}
 where $\underline{D}^{0}>0$ is a constant collecting all the terms in \eqref{bound_square_Wasserstein_order_two_proof_proposition} with no dependence on $d$ or $\beta$, and
 \begin{equation} \label{definition_lambda_bar_W_1}
     \lambda \le \lambda_{\bar{\delta}}:=\frac{\bar{\delta}^2}{9(D_2+D_3)^2} \wedge \lambda_{\text{max}},
 \end{equation}
  and $\lambda k \ge \frac{2}{\dot{c}} \ln \left( \frac{3 D_1 (1+ \E[|\theta_{0}|^4])}{\bar{\delta}} \right) $. From \eqref{constants_final_upper_bound_Wasserstein_1}, it follows that
\begin{equation} \label{definition_k_bar_W_1}
\begin{split}
    k  \ge k_{\bar{\delta}} & := \frac{D_{\star}e^{D_{\star}(\beta +d + d \beta^{(1-\eta)/2} + d^2 \beta^{-\eta} +1 )}}{\bar{\delta}^2\dot{c}}\left(1+\frac{1}{(1-e^{-\dot{c}})^2}\right)
    \\ & \quad \times \ln\left( \frac{D_{\star}e^{D_{\star}(\beta +d + d \beta^{(1-\eta)/2} + d^2 \beta^{-\eta} +1 )}}{\bar{\delta}}\left(1+\frac{1}{1-e^{-\dot{c}}}\right)\right).
    \end{split}
\end{equation}
\end{proof}

\begin{corollary} \label{main_result_Wasserstein_distance_order_two_target_measure}
    Let Assumption \ref{independence_assumption}, \ref{Lipschitz_assumption_full_gradient} and \ref{dissipativity_assumption_full_gradient} hold, and let $\sigma = \beta^{-\frac{1+\eta}{4}}$  for $\eta \in (0,1)$. Then, there exists constants $\dot{c}, D_4,D_5,D_6, \underline{D}>0 $ such that, for every $\beta>0$, $0 < \lambda \le \lambda_{\text{max}}$ with $\lambda_{\text{max}}$ given in \eqref{expression_lambda_max}, and $k \in \mathbb{N}$,
    \begin{equation} \label{equation_main_result_Wasserstein_distance_order_two_target_measure}
    \begin{split}
           W_2(\mathcal{L}(\theta_k^{\text{fSGLD}}), \pi_{\beta,\sigma}^{\star}) 
 \le    D_4 e^{-  \dot{c} \lambda k/4} (\E [|\theta_0|^4] +1) + (D_5 +D_6) \lambda^{1/4} + \underline{D},
  \end{split}
    \end{equation} 
    where $\dot{c}$ is given in Lemma \ref{contraction_constants_explicit_forms_lemma}, $\underline{D}$ is the same as in Theorem \ref{main_result_Wasserstein_distance_order_one_target_measure} and is given explicitly in \eqref{bound_square_Wasserstein_order_two_proof_proposition} and,
    \begin{equation*}
        \begin{split}
           D_4 & =    O \left( e^{D_{\star}(\beta +d + d \beta^{(1-\eta)/2} + d^2 \beta^{-\eta} +1 )} \left(1 + \frac{1}{1-e^{- \dot{c}/2}}\right) \right),
           \\ D_5 & = O \left( 1 + \sqrt{\frac{d}{\beta^{(1+\eta)/2}}} \right), 
           \\ D_6 & =   O \left( e^{D_{\star}(\beta +d + d \beta^{(1-\eta)/2} + d^2 \beta^{-\eta} +1 )} \left(1 + \frac{1}{1-e^{- \dot{c}/2}}\right) \right), 
        \end{split}
    \end{equation*}
      with $D_{\star}>0$ independent of $d$, $\beta$, $k$. 
    The explicit expressions of $D_4$, $D_5$, $D_6$ are given in \eqref{constants_final_upper_bound_Wasserstein_2}. In addition, let $\beta_{\widetilde{\delta}} $, $\lambda_{\widetilde{\delta}}$, $k_{\widetilde{\delta}}$ be as in \eqref{definition_beta_bar_W_2}, \eqref{definition_lambda_bar_W_2}, and \eqref{definition_k_bar_W_2} respectively.  For any $\widetilde{\delta}>0$,  if we choose $\beta \ge \beta_{\widetilde{\delta}} $,  $\lambda \le \lambda_{\widetilde{\delta}}$, and $k \ge k_{\widetilde{\delta}}$, then
   \begin{equation*}
\begin{split}
   W_2(\mathcal{L}(\theta_k^{\text{fSGLD}}), \pi_{\beta,\sigma}^{\star})  \le \widetilde{\delta}.
 \end{split}
 \end{equation*}
    \end{corollary}
     \begin{remark}
 The convergence rate in Corollary \ref{main_result_Wasserstein_distance_order_two_target_measure} can be improved to $O(\lambda^{\frac{1}{2}})$ under substantially stronger assumptions than Assumption \ref{independence_assumption}, \ref{Lipschitz_assumption_full_gradient} and \ref{dissipativity_assumption_full_gradient}. For example, one may assume that the $\pi^{\text{fSGLD}}_{\beta}$  satisfies a log-Sobolev inequality, as in \cite{huang2025nonasymptotic}.  However, such assumptions go beyond the scope of our work. 
 \end{remark}
\begin{proof}[Proof of Corollary \ref{main_result_Wasserstein_distance_order_two_target_measure}]
Using Lemma \ref{first_upper_bound_lemma}, Corollary \ref{second_upper_bound_lemma_Wasserstein_distance_order_two}, Corollary \ref{third_term_main_result_Wasserstein_distance_order_two_corollary},  and Proposition \ref{Wasserstein_KL_convergence_two_measures_theorem} in \eqref{final_upper_bound_w_1}, and $\lambda T >1/2$, we obtain for $t \in (kT, (k+1)T]$,
    \begin{equation} \label{triangle_inequality_Corollary_4_4}
        \begin{split}
     W_2(\mathcal{L}(\theta_k^{\text{fSGLD}}), \pi_{\beta,\sigma}^{\star})  & \le 
  W_2 (\mathcal{L}(\bar{\theta}_t^{\text{fSGLD}}), \mathcal{L}(\bar{\Phi}_t^{\lambda,k , \text{fSGLD}} )) + W_2 (\mathcal{L}(\bar{\Phi}_t^{\lambda,k , \text{fSGLD}} ), \mathcal{L}(Y_t^{\lambda, \text{fSGLD}  }))  \\ & \quad + W_2( \mathcal{L}(Y_t^{\lambda , \text{fSGLD}}), \pi_{\beta}^{\text{fSGLD}})
  +
   W_2(\pi_{\beta}^{\text{fSGLD}}, \pi_{\beta,\sigma}^{\star} ) 
   \\ & \le  \sqrt{\lambda} \left(  e^{-  a   k /4  } \bar{D}_{2,1}  \E [\widetilde{V}_2(\theta_0)] + \bar{D}_{2,2}   \right)^{1/2} + \lambda^{1/4} ( e^{-  \dot{c}/4} \bar{D}^{\star}_{2,3} (\E [\widetilde{V}_4(\theta_0)])^{1/2}  + \bar{D}^{\star}_{2,4})
   \\ & \quad + \sqrt{2}\hat{c}^{1/2} e^{- \dot{c} \lambda t/2 } \left( 1 + \E [\widetilde{V}_2(\theta_0)]+  \int_{\mathbb{R}^d} \widetilde{V}_2(\theta)  \pi_{\beta}^{\text{fSGLD}}(\rmd \theta) \right)^{1/2} +
   \underline{D}
   \\ & \le 2 e^{- \dot{c}  k/4 } ( \lambda^{1/2}_{\text{max}} (\bar{D}^{1/2}_{2,1} + \bar{D}^{1/2}_{2,2}) + \lambda^{1/4}_{\text{max}}  (\bar{D}^{\star}_{2,3} + \bar{D}^{\star}_{2,4}) + \sqrt{2}  \hat{c}^{1/2} )(1+ \E [|\theta_0|^4] )
   \\ & \quad + \sqrt{2} \hat{c}^{1/2} e^{-  \dot{c} k/4}  \left( 1 +  \int_{\mathbb{R}^d} \widetilde{V}_2(\theta)  \pi_{\beta}^{\text{fSGLD}}(\rmd \theta) \right)
   \\ & \le   D_4 e^{-  \dot{c} \lambda k/4} (\E [|\theta_0|^4] +1) + (D_5 +D_6) \lambda^{1/4}  +
   \underline{D}, 
  \end{split}
  \end{equation}
    where 
    \begin{equation} \label{constants_final_upper_bound_Wasserstein_2}
    \begin{split}
    D_4 & := 2 (\lambda_{\text{max}}^{1/2} (\bar{D}^{1/2}_{2,1} + \bar{D}^{1/2}_{2,2} ) + \lambda_{\text{max}}^{1/4} (\bar{D}^{\star}_{2,3} + \bar{D}^{\star}_{2,4}) + \sqrt{2} \hat{c}^{1/2} ) 
        \\ & \quad + \sqrt{2} \hat{c}^{1/2} \left(1+  \int_{\mathbb{R}^d} \widetilde{V}_2(\theta) \pi_{\beta}^{\text{fSGLD}}(\rmd \theta)   \right)
        \\ &  =  O \left( e^{D_{\star}(\beta +d + d \beta^{(1-\eta)/2} + d^2 \beta^{-\eta} +1 )} \left(1 + \frac{1}{1-e^{- \dot{c}/2}}\right) \right)
        \\ D_5  & := \lambda_{\text{max}}^{1/4} \bar{D}^{1/2}_{2,1} + \lambda_{\text{max}}^{1/4} \bar{D}^{1/2}_{2,2}   =  O \left( 1 + \sqrt{\frac{d}{\beta^{(1+\eta)/2}}} \right)
         \\ D_6   & := \bar{D}^{\star}_{2,3} + \bar{D}^{\star}_{2,4} =  O \left( e^{D_{\star}(\beta +d + d \beta^{(1-\eta)/2} + d^2 \beta^{-\eta} +1 )} \left(1 + \frac{1}{1-e^{- \dot{c}/2}}\right) \right),
    \end{split}
\end{equation}
where $\hat{c}$, $\dot{c}$ given in Lemma \ref{contraction_constants_explicit_forms_lemma}, and $D_{\star}>0$ is independent of $d$, $\beta$, $k$. 
 In addition, for any $\widetilde{\delta}>0$, $\lambda$, $k$ and $\beta$ such that $\lambda \le \lambda_{\text{max}}$, and  
\begin{equation*}
    D_4 e^{-  \dot{c} \lambda k/4} (\E [|\theta_0|^4] +1) \le \frac{\widetilde{\delta}}{3}, \qquad (D_5 +D_6) \lambda^{1/4} \le \frac{\widetilde{\delta}}{3}, \qquad  \underline{D} \leq \frac{\widetilde{\delta}}{3},
\end{equation*}
then $ W_2(\mathcal{L}(\theta_k^{\text{fSGLD}}), \pi_{\beta,\sigma}^{\star}) \leq \widetilde{\delta}$. This yields 
\begin{equation} \label{definition_beta_bar_W_2}
    \beta \ge  \beta_{\widetilde{\delta}}:=  \underline{D}^0 \max \left(  \left( \frac{9 \sqrt{d}}{\widetilde{\delta}}\right)^{\frac{4}{\eta}},  \left(\frac{9 d }{\widetilde{\delta}}\right)^{\frac{2}{\eta}},  \left(\frac{9 d^2}{\widetilde{\delta}} \right)^{\frac{2}{1+\eta}} \right),
\end{equation}
 where $\underline{D}^{0}>0$ is the same as in the proof of Theorem \ref{main_result_Wasserstein_distance_order_one_target_measure}, 
 \begin{equation}  \label{definition_lambda_bar_W_2}
     \lambda \leq \lambda_{\widetilde{\delta}}:= \frac{\widetilde{\delta}^4}{81(D_5+D_6)^4} \wedge \lambda_{\max},
 \end{equation}
 and $\lambda k \ge \frac{4}{\dot{c}} \ln \left( \frac{3 D_4 (1+ \E[|\theta_{0}|^4])}{\widetilde{\delta}} \right) $. From \eqref{constants_final_upper_bound_Wasserstein_2}, it follows that
\begin{equation} \label{definition_k_bar_W_2}
\begin{split}
k \geq k_{\widetilde{\delta}} & :=  \frac{D_{\star}e^{D_{\star}(\beta +d + d \beta^{(1-\eta)/2} + d^2 \beta^{-\eta} +1)}}{\widetilde{\delta}^4\dot{c}}\left(1+\frac{1}{(1-e^{-\dot{c}/2})^4}\right)
\\ & \times \ln\left( \frac{D_{\star}e^{D_{\star}(\beta +d + d \beta^{(1-\eta)/2} + d^2 \beta^{-\eta} +1)}}{\widetilde{\delta}}\left(1+\frac{1}{1-e^{-\dot{c}/2}}\right)\right).
\end{split}
\end{equation}
\end{proof}

\subsubsection{Non-asymptotic bound for the expected excess risk} \label{Appendix_proof_nonasymptotic_error_bound_expected_excess_risk}

\begin{proof}[Proof of Theorem \ref{excess_risk_corollary}]
We begin by decomposing the expected excess risk using the random variable $Y^{\text{fSGLD}}_{\infty}$ , for which
 $ \mathcal{L}(Y^{\text{fSGLD}}_{\infty})=\pi_{\beta}^{\text{fSGLD}}$, and obtain
 \begin{equation} \label{splitting_expected_excess_risk}
\begin{split}
 &   \E  [g_{\epsilon}(\theta_k^{\text{fSGLD}})] - \inf_{\theta \in \mathbb{R}^d} g_{\epsilon}(\theta)
    \\ &  = (\E  [g_{\epsilon}(\theta_k^{\text{fSGLD}})] - \E  [g_{\epsilon}(Y_{\infty}^{\text{fSGLD}})]) + (\E  [g_{\epsilon}(Y_{\infty}^{\text{fSGLD}})] - \inf_{\theta \in \mathbb{R}^d} g_{\epsilon}(\theta)).
    \end{split}
\end{equation}
We proceed by controlling the two terms on the right-hand side of \eqref{splitting_expected_excess_risk} separately.
 By using Lemma 6 of \cite{pmlr-v65-raginsky17a}, Remark \ref{linear_growth_remark} with $\sigma^2 = \beta^{-\frac{1+\eta}{2}}$ for $\eta \in (0,1)$, Lemma \ref{moment_bound_cts_time_interpolation_fSGLD}, Lemma \ref{first_upper_bound_lemma}, Corollary \ref{second_upper_bound_lemma_Wasserstein_distance_order_two}, and Corollary \ref{third_term_main_result_Wasserstein_distance_order_two_corollary}, the first term on the RHS of  \eqref{splitting_expected_excess_risk} can be bounded by
\begin{equation} \label{first_term_right_hand_side_excess_risk}
    \begin{split}
     &   \E  [g_{\epsilon}(\theta_k^{\text{fSGLD}})] - \E  [g_{\epsilon}(Y_{\infty}^{\text{fSGLD}})]
        \\  & \le \left(L_1 \E[\varphi(X_0)] (\E [|\theta_0|^2] + \alpha_1 (\lambda_{\text{max}} + a^{-1}))  + L_1 \E [ \varphi(X_0) ]  \sigma \sqrt{d}  + L_2   \E [ \bar{\varphi}(X_0)]+ | \nabla U(0, 0) | \right)
         \\ & \quad \times W_2(\mathcal{L}(\theta_k^{\text{fSGLD}}) , \pi_{\beta}^{\text{fSGLD}})
     \\ & \le  D_1^{\Diamond } e^{-  \dot{c} \lambda k/4} +  D_2^{\Diamond } \lambda^{1/4},
    \end{split}
\end{equation}
where 
\begin{equation} \label{expressions_constants_C_hash_1_2_first_bound_expected_excess_risk}
    \begin{split}
        D_1^{\Diamond } & := D_4 \left(L_1 \E[\varphi(X_0)] (\E [|\theta_0|^2] + \alpha_1 (\lambda_{\text{max}} + a^{-1}))  + L_1 \E [ \varphi(X_0) ] \sigma \sqrt{d}  + L_2   \E [ \bar{\varphi}(X_0)]+ | \nabla U(0, 0) | \right) 
        \\ & \qquad \quad \times (\E [| \theta_0 |^4]+1) ,
        \\ D_2^{\Diamond } & := (D_5 +D_6) \\ & \qquad \times \left(L_1 \E[\varphi(X_0)] (\E [|\theta_0|^2] + \alpha_1 (\lambda_{\text{max}} + a^{-1}))  + L_1 \E [ \varphi(X_0) ] \sigma \sqrt{d}  + L_2   \E [ \bar{\varphi}(X_0)]+ | \nabla U(0, 0) | \right) ,
    \end{split}
\end{equation}
with $\dot{c}$ given in \eqref{explicit_expression_contraction_constant_c_dot_lemma}, $D_4$, $D_5$, $D_6$ given in \eqref{constants_final_upper_bound_Wasserstein_2}, and $\alpha_1$ given in \eqref{first_second_constants_fourth_moment_iterate_lemma}. The second term on the RHS of \eqref{splitting_expected_excess_risk} can be controlled using Proposition 11 of \cite{pmlr-v65-raginsky17a} together with \eqref{linear_growth_gradient_g_epsilon}, which leads to
\begin{equation} \label{second_term_right_hand_side_excess_risk}
    \E  [g_{\epsilon}(Y_{\infty}^{\text{fSGLD}})] - \inf_{\theta \in \mathbb{R}^d} g_{\epsilon}(\theta) \le  D^{\Diamond }_{\#},
\end{equation}
where
\begin{equation} \label{expressions_constants_C_hash_3_first_bound_expected_excess_risk}
    D^{\Diamond }_{\#} := \frac{d}{2 \beta} \log \left(\frac{e L_1 \E [\varphi(X_0)]}{a} \left( \frac{b \beta}{d} +1 \right) \right),
\end{equation}
with $b$ defined in \eqref{definition_dissipativity_constants}.
Using the estimates from \eqref{first_term_right_hand_side_excess_risk} and \eqref{second_term_right_hand_side_excess_risk} in \eqref{splitting_expected_excess_risk}, we obtain
\begin{equation} \label{expected_excess_risk_g_epsilon_proof}
   \E [g_{\epsilon}(\theta_k^{\text{fSGLD}})] - \inf_{\theta \in \mathbb{R}^d} g_{\epsilon}(\theta) \le D_1^{\Diamond} e^{-  \dot{c} \lambda k/4} +  D_2^{\Diamond} \lambda^{1/4} +  D^{\Diamond }_{\#}.
\end{equation}
Applying \eqref{approximation_surrogate_objective} on the LHS of \eqref{expected_excess_risk_g_epsilon_proof}, along with \eqref{remainder_highest_order_term}, and choosing $\sigma^4 = \beta^{-(1+\eta)}$, it follows that
\begin{equation} \label{expected_excess_risk_final_proof}
\begin{split}
   \E [v(\theta_k^{\text{fSGLD}})] - \inf_{\theta \in \mathbb{R}^d} v(\theta)  \le D_1^{\#} e^{-  \dot{c} \lambda k/4} +  D_2^{\#} \lambda^{1/4} +  D_3^{\#}   ,
\end{split}
\end{equation}
where
\begin{equation} \label{big_O_constants_excess_risk_proof}
\begin{split}
 D_1^{\Diamond} & =  O \left( e^{D_{\star}(\beta +d + d \beta^{(1-\eta)/2} + d^2 \beta^{-\eta} +1 )} \left(1 + \frac{1}{1-e^{- \dot{c}/2}}\right) \right),
\\ D_2^{\Diamond}  & =   O \left( e^{D_{\star}(\beta +d + d \beta^{(1-\eta)/2} + d^2 \beta^{-\eta} +1 )} \left(1 + \frac{1}{1-e^{- \dot{c}/2}}\right) \right), 
 \\  D_3^{\Diamond} & :=  D^{\Diamond}_{\#} +  2 \beta^{-(1+ \eta)} d^2 ( C_{\mathcal{K}} + C_{\vartriangle})
  \\ & = O \left(\frac{d}{\beta} \log\left( D_{\star}\left( \frac{\beta}{d}  +   \beta^{(1-\eta)/2} + 1 \right)\right) + \beta^{-(1+ \eta)} d^2  \right),
  \end{split}
\end{equation}
with $D_{\star}>0$ a constant independent of $d$, $\beta$, $k$. 
In addition, for $\underline{\delta}>0$, if we choose $\beta$ such that $D^{\Diamond}_3 \leq \underline{\delta}/3$, then choose $\lambda$ such that $\lambda\leq \lambda_{\max}$  and $D^{\Diamond}_2\lambda^{1/4}\leq \underline{\delta}/3$, and choose $k$ such that $D^{\Diamond}_1 e^{- \dot{c}\lambda k/4} \leq \underline{\delta}/3$, we obtain
\begin{equation*}
    \mathbb{E}[g_{\epsilon}(\theta_k^{\text{fSGLD}})] - \inf_{\theta \in \mathbb{R}^d} v(\theta) \leq \underline{\delta}.
\end{equation*}
 This yields 
 \begin{equation} \label{definition_beta_underline_W_2}
 \begin{split}
     \beta & \geq  \beta_{\underline{\delta}}:=  \beta_{\Diamond} \vee  \frac{6d}{ \underline{\delta}}\log\left(\frac{e L_1\mathbb{E}[\varphi(X_0)]}{ad}\left(d+1\right)\right)    \vee \left[ \frac{12 d^2 ( C_{\mathcal{K}} + C_{\vartriangle} )}{ \underline{\delta}}    \right]^{\frac{1}{1+ \eta}}, 
     \end{split}
 \end{equation}
where $\beta_{\Diamond}$ is the root of the function $f^{\Diamond}(\beta) = \frac{\log\left( \beta+1\right)}{ \beta} + \log\left(\frac{e L_1\mathbb{E}[\varphi(X_0)]}{ad}\left(1+ \frac{d}{\beta^{(1+\eta)/2}} \right)\right) -\frac{ \underline{\delta}}{3d}$. Since 
\begin{equation*}
\begin{split}
D^{\Diamond}_3 & \leq \frac{d}{2\beta}\log\left(\frac{e L_1\mathbb{E}[\varphi(X_0)]}{ad}\left(b+1\right)\left(d+1\right)\left(\beta+1\right)\right)   +  \beta^{-(1+ \eta)} d^2 ( C_{\mathcal{K}} + C_{\vartriangle}),
\end{split}
\end{equation*}
we can ensure $D^{\Diamond}_3 \leq \underline{\delta}/3$ by imposing
\begin{equation*}
    \begin{split}
       & \frac{d}{2\beta}\log\left(\frac{e L_1\mathbb{E}[\varphi(X_0)]}{ad} \left(b+1\right)\right) \leq \frac{\underline{\delta}}{12}, 
        \qquad \frac{d}{2\beta}\log\left(\frac{e L_1\mathbb{E}[\varphi(X_0)]}{ad}\left(d+1\right)\right) \leq \frac{\underline{\delta}}{12}, \\ & \frac{d}{2\beta}\log\left( \beta+1\right)  \leq \frac{\underline{\delta}}{12},
        \qquad  \beta^{-(1+ \eta)} d^2 ( C_{\mathcal{K}} + C_{\vartriangle})   \leq \frac{\underline{\delta}}{12}.
    \end{split}
\end{equation*}
 Moreover, one can verify that 
\begin{equation} \label{definition_lambda_underline_W_2}
    \lambda \leq \lambda_{\underline{\delta}}:= \frac{\underline{\delta}^4}{81(D^{\Diamond}_2)^4} \wedge \lambda_{\max},
\end{equation}
 and $ \lambda k \geq \frac{4}{\dot{c}}\ln \frac{3D^{\Diamond}_1}{\underline{\delta}}$ which leads to 
\begin{equation} \label{definition_k_underline_W_2}
\begin{split}
k  & \geq k_{\underline{\delta}}:=\frac{D_{\star} e^{D_{\star}(\beta +d + d \beta^{(1-\eta)/2} + d^2 \beta^{-\eta} +1 )} }{\underline{\delta}^4\dot{c}}\left(1+\frac{1}{(1-e^{-\dot{c}/2})^4}\right)  \\ &  \qquad \quad  \times \ln\left( \frac{D_{\star} e^{D_{\star}(\beta +d + d \beta^{(1-\eta)/2} + d^2 \beta^{-\eta} +1 )} }{\underline{\delta}}\left(1+\frac{1}{1-e^{-\dot{c}/2}}\right)  \right).
\end{split}
\end{equation} 
\end{proof}

\newpage
\subsubsection{Table of Constants}


Table~\ref{tab:summary_constants_main_theorem} and Table~\ref{tab:summary_constants_excess_risk} summarize the constants appearing in Theorem \ref{main_result_Wasserstein_distance_order_one_target_measure} and Theorem \ref{excess_risk_corollary}, together with their roles in the bounds and references to their explicit definitions in the main text.

\begin{table}[h]
\centering
\caption{Summary of the constants appearing in Theorem~\ref{main_result_Wasserstein_distance_order_one_target_measure}. The table lists the role of each constant together with the equation or lemma in the main text where its explicit definition is provided.}
\label{tab:summary_constants_main_theorem}
\small
\begin{tabular}{l p{8.0cm} l}
\toprule
\textbf{Constants} & \textbf{Role} & \textbf{Reference in the main text} \\
\midrule

$\dot{c}$ &
Contraction constant 
of the overdamped Langevin diffusion \eqref{flatness_Langevin_SDE} &
Lemma~\ref{contraction_constants_explicit_forms_lemma} \\[2mm]

$D_1$ & Constant in the bound for the exponential mixing of the overdamped Langevin diffusion \eqref{flatness_Langevin_SDE} &
Eq.~\eqref{constants_final_upper_bound_Wasserstein_1} \\[4mm]

$D_2$ & Upper bound constant for
$W_2 (\mathcal{L}(\bar{\theta}_t^{\text{fSGLD}}), \mathcal{L}(\bar{\Phi}_t^{\lambda,k , \text{fSGLD}} ))$  &
Eq.~\eqref{constants_final_upper_bound_Wasserstein_1} \\[3mm]

$D_3$ &  Upper bound constant for $ W_1(\mathcal{L}(\bar{\Phi}_t^{\lambda,k , \text{fSGLD}} ), \mathcal{L}(Y_t^{\lambda, \text{fSGLD}  }))$ &
Eq.~\eqref{constants_final_upper_bound_Wasserstein_1} \\[4mm]

$\underline{D}$ &
Upper bound constant for
$ W_2( \pi_{\beta}^{\text{fSGLD}}, \pi_{\beta,\sigma}^{\star})$ &
Eq.~\eqref{bound_square_Wasserstein_order_two_proof_proposition} \\[3mm]

$\beta_{\bar{\delta}}$ &
Inverse temperature threshold ensuring
$\underline{D}\le \bar{\delta}/3$ &
Eq.~\eqref{definition_beta_bar_W_1} \\[3mm]

$\lambda_{\bar{\delta}}$ &
Stepsize threshold ensuring
$(D_2+D_3)\sqrt{\lambda}\le \bar{\delta}/3$ &
Eq.~\eqref{definition_lambda_bar_W_1} \\[3mm]

$k_{\bar{\delta}}$ &
Iteration threshold ensuring
$D_1 e^{-\dot{c}\lambda k/2}(1+\mathbb{E}[|\theta_0|^4])
\le \bar{\delta}/3$ &
Eq.~\eqref{definition_k_bar_W_1} \\

\bottomrule
\end{tabular}
\end{table}

\begin{table}[h]
\centering
\caption{Summary of the constants appearing in Theorem~\ref{excess_risk_corollary}. The table lists the role of each constant together with the equation or lemma in the main text where its explicit definition is provided.}
\label{tab:summary_constants_excess_risk}
\small
\begin{tabular}{l p{8.0cm} l}
\toprule
\textbf{Constants} & \textbf{Role} & \textbf{Reference in the main text} \\
\midrule

$\dot{c}$ &
Contraction constant 
of the overdamped Langevin diffusion \eqref{flatness_Langevin_SDE} &
Lemma~\ref{contraction_constants_explicit_forms_lemma} \\[2mm]

$D_1^{\Diamond}$ &
Constant in the bound for the exponential mixing of the overdamped Langevin diffusion \eqref{flatness_Langevin_SDE} &
Eq.~\eqref{expressions_constants_C_hash_1_2_first_bound_expected_excess_risk} \\[4mm]

$D_2^{\Diamond}$ & Upper bound constant for the discretization error contained in $\E  [g_{\epsilon}(\theta_k^{\text{fSGLD}})] - \E  [g_{\epsilon}(Y_{\infty}^{\text{fSGLD}})]$ &
Eq.~\eqref{expressions_constants_C_hash_1_2_first_bound_expected_excess_risk} \\[4mm]

$D_3^{\Diamond}$ &
Upper bound constant for $\E  [g_{\epsilon}(Y_{\infty}^{\text{fSGLD}})] - \inf_{\theta \in \mathbb{R}^d} v(\theta)$  &
Eq.~\eqref{big_O_constants_excess_risk_proof} \\[4mm]

$\beta_{\underline{\delta}}$ &
Inverse temperature threshold ensuring
$D_3^{\Diamond}\le \underline{\delta}/3$ &
Eq.~\eqref{definition_beta_underline_W_2} \\[3mm]

$\lambda_{\underline{\delta}}$ &
Stepsize threshold ensuring
$D_2^{\Diamond}\lambda^{1/4}\le \underline{\delta}/3$ &
Eq.~\eqref{definition_lambda_underline_W_2} \\[3mm]

$k_{\underline{\delta}}$ &
Iteration threshold ensuring
$D_1^{\Diamond}e^{-\dot{c}\lambda k/4}
\le \underline{\delta}/3$ &
Eq.~\eqref{definition_k_underline_W_2} \\

\bottomrule
\end{tabular}
\end{table}

\section{Experimental details} \label{app:main_section}

\subsection{Software and hardware environments}

We conduct all experiments with \textsc{Python} 3.10.9 and \textsc{Pytorch} 1.13.1, CUDA 11.6.2, NVIDIA Driver 510.10 on Ubuntu 22.04.1 LTS server which equipped with AMD Ryzen Threadripper PRO 5975WX, NVIDIA A100 GPUs.

\subsection{Details for Section~\ref{subsec:emcmc} and ~\ref{subsec:uq}}\label{app:main}

We follow the experimental protocol of Entropy-MCMC~\cite{li2024entropymcmc} for both image classification and out-of-distribution (OOD) detection. Here, Entropy-SGLD refers to the learning algorithm introduced in \cite{dziugaite2018entropy}.

\textbf{Datasets.}
For in-distribution classification, we use CIFAR-10 and CIFAR-100.
For OOD evaluation, we adopt SVHN as the out-of-distribution dataset.
Following the evaluation setup of \cite{li2024entropymcmc}, the OOD test set is formed by concatenating the CIFAR test split with the SVHN test split.

\textbf{Models.}
All experiments are conducted using a ResNet-18 backbone, aligned with the Entropy-MCMC configuration.
We keep the backbone architecture, data preprocessing, and augmentation strategies identical across all compared methods to ensure controlled comparisons.

\textbf{Implementation details.}
Consistent with the Bayesian marginalization procedure in \cite{li2024entropymcmc}, we construct the predictive distribution by averaging network outputs over multiple parameter snapshots.
For a given input $x$, the predictive distribution is defined as
\begin{equation*}
p(y \mid x) = \frac{1}{S} \sum_{s=1}^{S} p(y \mid x, \theta^{(s)}),
\end{equation*}
where $(\theta^{(s)})_{s=1}^{S}$ denotes the set of parameter snapshots, and we use $S = 16$ in all experiments. For fSGLD, we adopt the same learning rate schedule and temperature settings as used in the Entropy-MCMC implementation to maintain strict comparability. The noise scale is fixed to $\sigma = 10^{-3}$.



\subsection{Details for Section~\ref{subsec:main_results}}\label{app:main}



\textbf{Datasets.} We evaluate our method on three noisy label datasets including CIFAR-10N and CIFAR-100N~\cite{wei2022learning}, and WebVision~\cite{li2017webvision}. CIFAR-10N and CIFAR-100N include real-world annotation errors introduced by human annotators, offering realistic yet standardized benchmarks for noisy label learning. For CIFAR-10N, we use the aggregate noise setting. WebVision is a large-scale, in-the-wild benchmark, consisting of more than 2.4 million images with labels automatically collected from Google and Flickr based on the 1,000 ImageNet ILSVRC2012 categories. Following standard protocol~\cite{li2020dividemix, ortego2021multi, Li2022selective}, we use the first 50 classes from its Google image subset and report Top-1 (WV-1) and Top-5 (WV-5) accuracy on the official validation set. 

We follow standard data preprocessing and augmentation strategies as adopted in prior work \cite{li2017webvision, wei2022learning} on noisy-label benchmarks. For CIFAR-10N and CIFAR-100N, we apply random cropping with padding, random horizontal flipping, and normalization using dataset-specific statistics. For WebVision, we follow the preprocessing protocol of \cite{Kodge_MiniWebVision_2024}.  We note that Noisy-label benchmarks and ViT fine-tuning are widely used in evaluating the optimizer's generalization ability \cite{luo2024explicit, baek2024sam, tan2025stabilizing}.

\textbf{Models.} We use ResNet-34 and ResNet-50 for training from scratch. For fine-tuning experiments, we use the pre-trained ViT-B/16 \cite{dosovitskiy2021an} architecture, which has been trained on the ImageNet-1K \cite{deng2009imagenet} dataset as the backbone on CIFAR-10N and CIFAR-100N.

Regarding model architectures, we employ the CIFAR-specific variants of ResNet-34 and ResNet-50 when training on CIFAR-10N and CIFAR-100N, where the first convolution layer is replaced by a $3\times 3$ kernel with stride $1$ (instead of the $7\times 7$ stride-$2$ convolution and max pooling used in ImageNet models) to accommodate the smaller $32\times 32$ resolution. For WebVision, we adopt the standard ResNet implementations as provided for ImageNet-scale data.

\textbf{Baselines and Implementation Details.} We compare fSGLD against four baselines: SGD with momentum, AdamW \cite{loshchilov2018decoupled}, SAM \cite{foret2021sharpnessaware}, and ASAM \cite{kwon2021asam}. To ensure a fair comparison, all optimizer hyperparameters are tuned using Optuna~\cite{akiba2019optuna} with 20 trials of Bayesian optimization. For each optimizer, the search spaces were carefully chosen to include previously reported optimal hyperparameters from the literature, ensuring that all baselines are strongly tuned. For experiments with training from scratch, all experiments are trained for 150 epochs with a batch size of 128. The learning rate decays by a factor of 0.1 in the 50th and 100th epochs. For fine tuning, models are trained for 75 epochs with a batch size of 128, decaying the rate by a factor of 0.1 at the 50th epoch. The detailed hyperparameter search spaces for each optimizer and experimental settings are provided in Table~\ref{tab:hyper}.

For both training-from-scratch and fine-tuning experiments, we use the same hyperparameter search spaces. Table~\ref{tab:hyper} summarizes the ranges considered for each optimizer. We do not employ any early stopping or pruning strategy during the Optuna-based hyperparameter tuning, ensuring that each trial is fully evaluated to its final epoch. We performed the same number of hyper-parameter trials for all methods so that the search-space exploration budget (number of trials) was identical.
Because each SAM  and ASAM update requires two gradient evaluations, this design implies that, for the same number of trials and training epochs, SAM and ASAM consumed roughly twice the wall-clock compute time of the other baselines. Thus our tuning protocol is at least as favorable to SAM and ASAM as to the proposed fSGLD, ensuring that our reported improvements are not due to weaker tuning of SAM and ASAM.

For fSGLD, we search over inverse temperature, ranging from $\beta=10^{7}$ to $\beta=10^{9}$. We only search for the optimal inverse temperature $\beta$ and then deterministically set perturbation scale $\sigma$ via our theoretically-derived relationship, $\sigma = \beta^{-\frac{1+\eta}{4}}$ with $\eta=0.1$.  This is a practical choice, as a larger $\eta$ would cause $\sigma$ to become too small, causing the perturbation to lose its intended significance. A small $\eta$ thus ensures stable optimization. This principled approach significantly simplifies the search space. 

\begin{table}[h]
\centering
\footnotesize
\caption{Hyperparameter search spaces for different optimizers.}
\label{tab:hyper}
\setlength{\tabcolsep}{3pt} 
\begin{tabular}{lcccc}
\toprule
\textbf{Optimizer} & \textbf{Learning rate} & \textbf{Momentum} & \textbf{Weight decay}  & \textbf{Other hyperparameters} \\
\midrule
SGD   & $10^{[-2,0]}$ & $\{0.1,0.9\}$ & $5\times10^{-4}$  & -- \\
AdamW & $ 10^{[-4,-2]}$ & -- & $10^{-2}$ & $[\beta_1,\beta_2] \in \{[0.8,0.95],[0.99,0.999]\}$ \\
SAM   & $ 10^{[-2,0]}$ & $\{0.1,0.9\}$ & $5\times10^{-4}$ &  $\rho \in 10^{[-3,-1]}$ \\
ASAM  & $ 10^{[-2,0]}$ & $\{0.1,0.9\}$ & $5\times10^{-4}$ &  $\rho \in 10^{[-3,-1]}$ \\
fSGLD  & $10^{[-2,0]}$ & -- & $5\times10^{-4}$ & $\beta \in 10^{[7,9]}, \;\sigma = \beta^{-\frac{1+0.1}{4}}$ \\
\bottomrule
\end{tabular}
\end{table}

\subsection{Details for Section~\ref{subsec:beta-sigma}}

Figure~\ref{fig:eta_coupling} presents a sensitivity analysis of $\eta$ for a ResNet-34 trained on CIFAR-10N, with the initial learning rate fixed to $0.5$. In Figure~\ref{fig:eta_beta_fixed} and Figure~\ref{fig:eta_sigma_fixed}, we fix $\beta = 10^{8}$ and $\sigma = 10^{-3}$, respectively. Each point corresponds to the mean classification accuracy over three random seeds, and error bars represent the associated standard deviations.

\subsection{Details for Section~\ref{subsec:hessian}}\label{app:hessian}

For the Hessian spectrum analysis, we use the best-performing ResNet-34 model trained on CIFAR-10N under each optimizer setting. 
Given a trained network $f_\theta$ and loss function $L$, we compute Hessian-vector products (HVPs) by applying automatic differentiation to the scalar product $\nabla_\theta L^\top \bar{v}$ for a random vector $\bar{v}$. 
For eigenvalue computation, we adopt the Lanczos algorithm \cite{lin2016Lanzcos2} as implemented in \texttt{scipy.sparse.linalg.eigsh}, which allows us to approximate the top-$k$ eigenvalues without explicitly forming the Hessian. 
In all reported results, we compute up to the top 50 eigenvalues. 
As a complementary measure of curvature, we estimate the trace of the Hessian using Hutchinson’s stochastic estimator \cite{avron2011Hutchinson2} with Rademacher random vectors:
\[
\mathrm{tr}(H(\theta)) \approx \frac{1}{m} \sum_{i=1}^m z_i^\top H(\theta) z_i, 
\quad z_i \sim \mathrm{Unif}\{\pm 1\}^d,
\]
where $m=100$ in our experiments and $d$ denotes the number of model parameters. Eigenvalue computations are performed with a tolerance of $10^{-4}$ and a maximum of 500 iterations for the Lanczos solver.

\end{document}